\theoremstyle{plain}
\newtheorem{theorem}{Theorem}[section]
\newtheorem{lemma}[theorem]{Lemma}
\theoremstyle{definition}
\newtheorem{assumption}[theorem]{Assumption}
\theoremstyle{remark}
\newtheorem{remark}[theorem]{Remark}
\begin{document}

\newcommand{\cov}[2]{\text{Cov}\left[#1, #2\right]}
\newcommand*\diff{\mathop{}\!\mathrm{d}}
\newcommand{\dw}{\nabla_{\w}}
\newcommand{\err}{\text{err}}
\newcommand{\tf}{\tilde{f}}
\newcommand{\sign}{\text{sign}}
\newcommand{\sqnorm}[1]{\left\|#1\right\|^2} 
\newcommand{\tr}{\text{Tr}}
\newcommand{\var}{\text{var}}
\newcommand{\vectornorm}[1]{\left\|#1\right\|}
\newcommand{\norm}[1]{{}\left\| #1 \right\|}
\newcommand{\bigO}{\mathcal{O}}

\newcommand{\e}{{\bf e}}
\newcommand{\g}{{\bf g}}
\newcommand{\h}{{\bf h}}
\newcommand{\p}{{\bf p}}
\newcommand{\s}{{\bf s}}
\renewcommand{\u}{{\bf u}}
\newcommand{\vb}{{\bf v}}
\newcommand{\w}{{\bf w}}
\newcommand{\x}{{\bf x}}
\newcommand{\y}{{\bf y}}
\newcommand{\z}{{\bf z}}

\def\Am{{\bf A}}
\def\Bm{{\bf B}}
\def\Cm{{\bf C}}
\def\Em{{\bf E}}
\def\Gm{{\bf G}}
\def\Hm{{\bf H}}
\def\Im{{\bf I}}
\def\Pm{{\bf P}}
\def\Qm{{\bf Q}}
\def\Rm{{\bf R}}
\def\Sm{{\bf S}}
\def\Tm{{\bf T}}
\def\Um{{\bf U}}
\def\Zm{{\bf Z}}
\def\Ym{{\bf Y}}
\def\Xm{{\bf X}}

\newcommand{\A}{{\mathcal A}}
\newcommand{\B}{{\mathcal B}}
\newcommand{\C}{{\mathcal C}}
\newcommand{\D}{{\mathcal D}}
\newcommand{\E}{{\mathbb E}}
\newcommand{\F}{{\mathcal F}}
\newcommand{\K}{{\mathcal K}}
\newcommand{\M}{{\mathcal M}}
\newcommand{\N}{{\mathbf N}}
\renewcommand{\P}{{\mathbb P}}
\newcommand{\R}{{\mathbb{R}}}
\renewcommand{\S}{{\mathcal S}}
\newcommand{\T}{{\mathbb T}}
\newcommand{\U}{{\mathcal U}}
\newcommand{\X}{{\mathcal X}}
\newcommand{\Y}{{\mathcal Y}}
\newcommand{\Z}{{\mathbf Z}}

\newcommand{\cD}{{\mathcal D}}
\newcommand{\cN}{{\mathcal N}}
\newcommand{\cS}{{\mathcal S}}

\newtheorem{condition}{Condition}

\DeclarePairedDelimiter\ceil{\lceil}{\rceil}
\DeclarePairedDelimiter\floor{\lfloor}{\rfloor}
\newcommand{\abs}[1]{\left\vert {#1} \right\vert}
\newcommand*{\nfrac}[2]{#1\left/\vphantom{#1}#2\right.}
\newcommand{\argmin}{\text{argmin}}

\definecolor{mydarkgreen}{RGB}{39,130,67}
\definecolor{mydarkred}{RGB}{192,47,25}
\newcommand{\cmark}{\textcolor{mydarkgreen}{\ding{51}}}
\newcommand{\xmark}{\textcolor{mydarkred}{\ding{55}}}

\newtcolorbox{mycoloredbox}[2]{
  breakable,
  colback=gray!20,   %
  colframe=white,    %
  boxrule=0mm,       %
  left=-3pt,         %
  right=4pt,         %
  top=3pt,           %
  bottom=3pt,        %
  width=0.48\textwidth, %
  #1,               %
  title={#2}        %
}

\newtcolorbox{mybox}[2][]
{
  colframe = #2!25,
  colback  = #2!15!white!15,
  #1,
  breakable
}

\newenvironment{theorembox}
   {\begin{mybox}{gray}\begin{theorem}}
   {\end{theorem}\end{mybox}}

\newenvironment{lemmabox}
   {\begin{mybox}{gray}\begin{lemma}}
   {\end{lemma}\end{mybox}}

\newenvironment{conditionbox}
   {\begin{mybox}{gray}\begin{condition}}
   {\end{condition}\end{mybox}}

\newcommand{\beq}{\begin{equation}}
\newcommand{\eeq}{\end{equation}}
\newcommand{\ba}{\begin{array}}
\newcommand{\ea}{\end{array}}
\newcommand{\la}{{\langle}}
\newcommand{\ra}{{\rangle}}
\newcommand{\mat}[1]{\bm{#1}}

\newcommand{\nikita}[1]{{
\definecolor{ggreen}{HTML}{008800}
\color{ggreen}{\footnotesize Nikita: #1}}
}

\def\SM{{S}}

\twocolumn[

\aistatstitle{Cubic regularized subspace Newton for non-convex optimization}

\aistatsauthor{ Jim Zhao\footnotemark[1] \And Aurelien Lucchi \And  Nikita Doikov }

\aistatsaddress{ University of Basel \\
  \texttt{jim.zhao@unibas.ch} \And  University of Basel\\
  \texttt{aurelien.lucchi@unibas.ch} \And EPFL\\
  \texttt{nikita.doikov@epfl.ch}  } ]

\begin{abstract}
This paper addresses the optimization problem of minimizing non-convex continuous functions,
a problem highly relevant in high-dimensional machine learning scenarios, particularly those involving over-parameterization. We analyze a randomized coordinate second-order method named SSCN, which can be interpreted as applying the cubic regularization of Newton's method in random subspaces. This approach effectively reduces the computational complexity associated with utilizing second-order information, making it applicable in higher-dimensional scenarios.
Theoretically, we establish strong global convergence guarantees for non-convex functions to a stationary point, with interpolating rates for arbitrary subspace sizes and
allowing inexact curvature estimation, starting from an arbitrary initialization.
When increasing the subspace size, our complexity matches the $\bigO(\epsilon^{-3/2})$ 
rate of the full Newton's method with cubic regularization.
Additionally, we propose an adaptive sampling scheme ensuring the exact convergence rate of $\bigO(\epsilon^{-3/2}, \epsilon^{-3})$ to a second-order stationary point, without requiring to sample all coordinates.
Experimental results demonstrate substantial speed-ups achieved by SSCN 
compared to conventional first-order methods and other second-order subspace methods.
\end{abstract}

\footnotetext[1]{Reverse alphabetical ordering, all authors contributed equally.}

\section{INTRODUCTION}

In this paper, we address the problem of minimizing an objective function of the form
$\min_{\x \in \R^n} f(\x)$,
where $f: \R^n \to \R$ is a two-times differentiable, possibly non-convex function, having Lipschitz continuous gradient and Hessian.
Our focus lies on scenarios where the dimension $n$ has the potential to be significantly large, a context that holds relevance for numerous machine learning applications. This pertains particularly to situations in which models tend to exhibit a large number of parameters, commonly referred to as over-parametrization. One method of choice to optimize the objective functions associated with such models 
is to use (randomized) coordinate descent (CD) methods~\citep{nesterov2012efficiency}
or more generally subspace descent methods~\citep{kozak2019stochastic}. The latter class of methods relies on iterative updates of the form
\begin{equation}
\ba{rcl}
\x_{k+1} & = & \x_k - \eta_k \Sm_k^{\top}\Sm_k \nabla f(\x_k),
\ea
\label{eq:subspace_method}
\end{equation}
where $\eta_k > 0$ is a step-size and $\Sm_k \in \R^{\tau \times n}$ is a thin matrix where $\tau \ll n$
is a subspace size. This update corresponds to moving in the (negative) direction of the gradient in the subspace spanned by the columns of $\Sm_k$. 
While various rules exist to choose the matrix $\Sm_k$~\citep{hanzely2020stochastic},
we will here focus on the case where $\Sm_k$ is randomly sampled according to an arbitrary but fixed distribution of coordinate subsets $\Sm_k$ in $\R^n$. We choose to focus on coordinate subsets because, in this scenario, there is no additional cost for applying a projection onto the subspace.

While first-order methods such as Eq.~\eqref{eq:subspace_method} are simple and relatively well-studied, their convergence is notably slow. In contrast, second-order optimization methods excel in terms of convergence speed as they possess the capability to capture more information about the optimization landscape. 
By incorporating such information about the curvature of the objective function, second-order methods such as Newton's method with cubic regularization~\citep{nesterov2006cubic} can navigate complex landscapes with greater efficiency. This often results in faster convergence rates (in terms of iterations) and enhanced accuracy in finding optimal solutions. While first-order methods such as gradient descent are computationally cheaper per iteration, second-order methods can offer significant advantages in scenarios where function landscapes are intricate or when the convergence speed is crucial. However, their per-iteration cost makes them expensive in high-dimensional spaces. 

Based on this observation, we study efficient subspace second-order methods, of the following form
\beq \label{eq:subspace_2nd_order}
\ba{rcl}
\x_{k + 1} & = & \x_k - \Sm_k^{\top}\bigl[ 
\Hm_k + \alpha_k \Im \bigr]^{-1}\Sm_k \nabla f(\x_k),
\ea
\eeq
where $\Hm_k \in \R^{\tau \times \tau}$
is a fixed curvature matrix,
$\Im \in \R^{\tau \times \tau}$ is the identity matrix,
and $\alpha_k > 0$ is a regularization constant. 
Note that in~\eqref{eq:subspace_2nd_order} we need to invert
only the matrix of size $\tau \times \tau$,
which is computationally cheap for small $\tau \ll n$.
Clearly, substituting $\Hm_k := \mat{0}$
into~\eqref{eq:subspace_2nd_order}, we recover 
the pure CD method~\eqref{eq:subspace_method}.
However, to capture the second-order information
about the objective function, we make the following natural choice
\beq \label{ChoiceHk}
\ba{rcl}
\Hm_k & = & \Sm_k \nabla^2 f(\x_k) \Sm_k^{\top},
\ea
\eeq
where $\nabla^2 f(\x_k) \in \R^{n \times n}$
is the Hessian matrix.
Therefore, for a specific coordinate subset,
we need to use the second-order information only along the chosen
coordinates, making our method scalable.
It remains only to choose a parameter $\alpha_k > 0$
in~\eqref{eq:subspace_2nd_order},
which we select by using the cubic regularization technique~\cite{nesterov2006cubic}.
We use the name 
\textit{stochastic subspace cubic Newton} (SSCN) for this algorithm,
as it was introduced recently by~\cite{hanzely2020stochastic}.

\begin{figure}[tbh]
    \centering
    \begin{tabular}{cc}
        \includegraphics[width=0.23\textwidth]{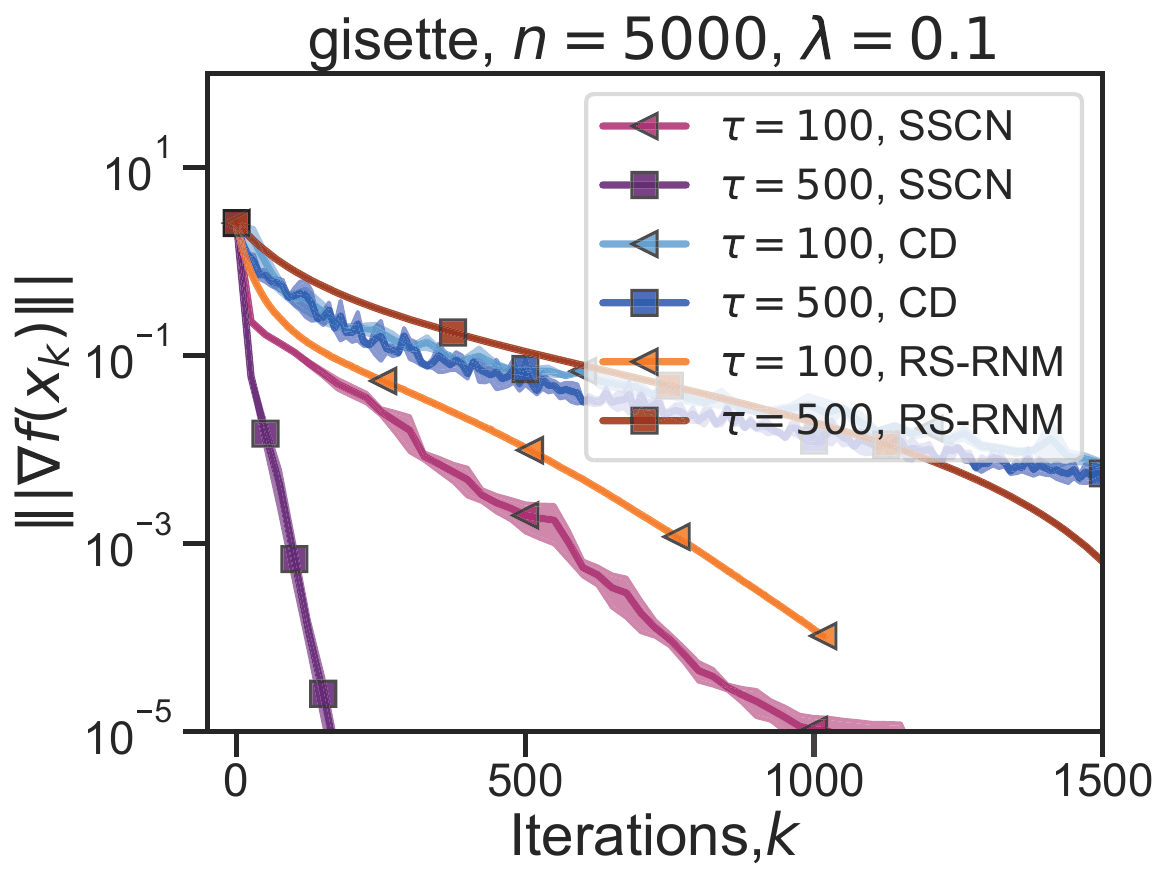}
        \includegraphics[width=0.23\textwidth]{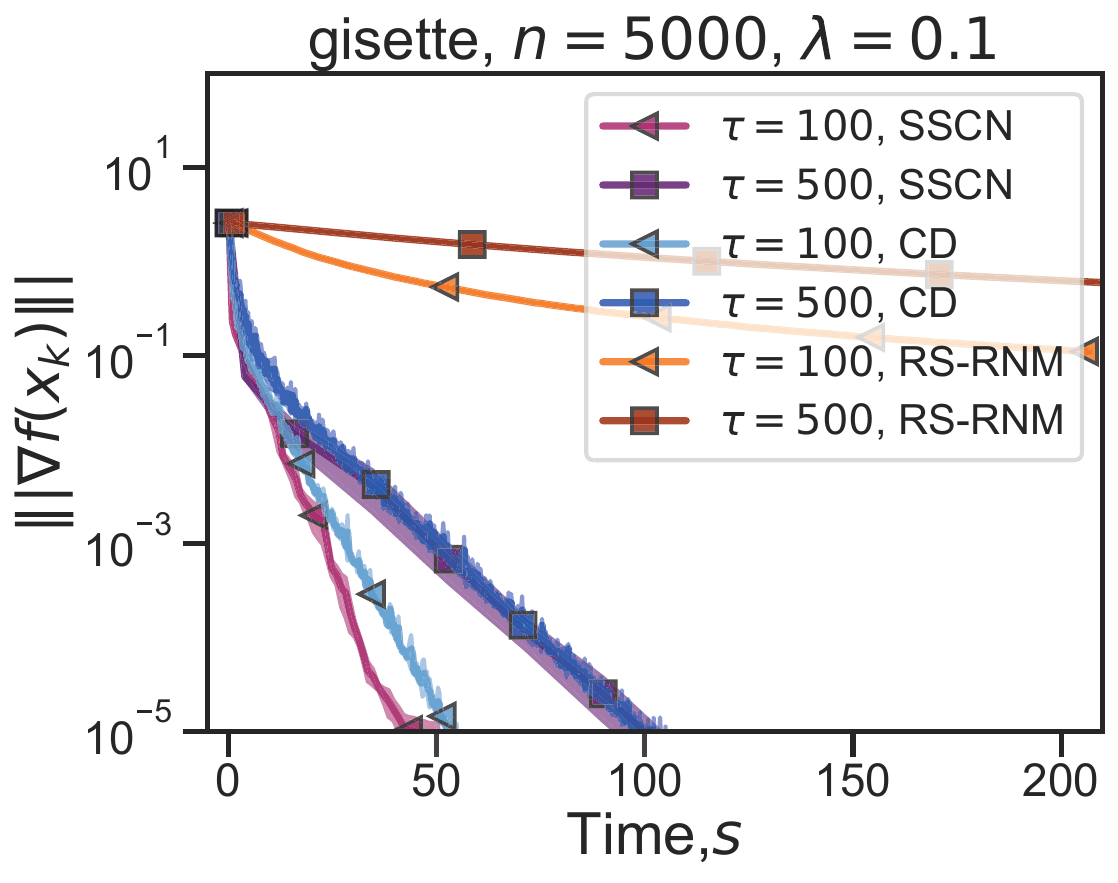} \\
        \includegraphics[width=0.23\textwidth]{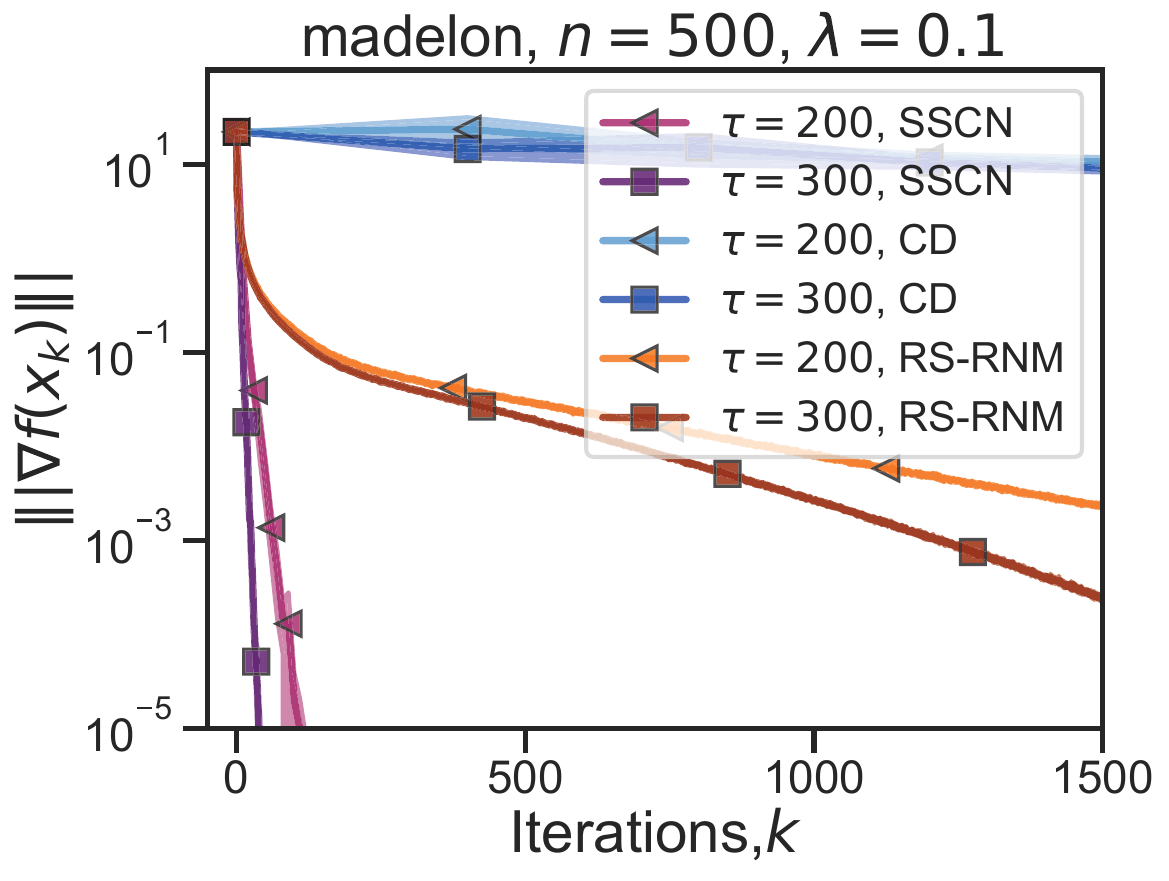}
        \includegraphics[width=0.23\textwidth]{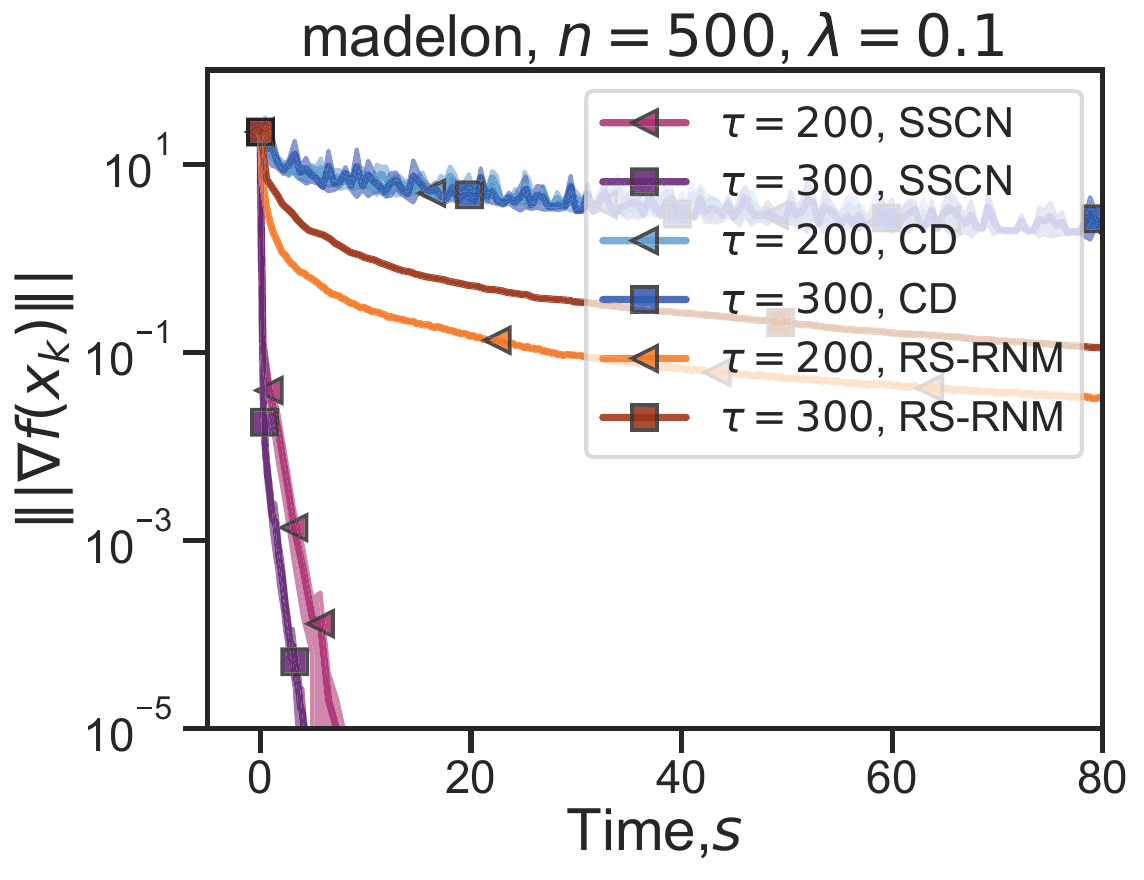}
    \end{tabular}
    \caption{
    Comparison of CD, SSCN and RS-RNM \citep{fuji2022randomized} for different constant coordinate schedules, where $\tau$ denotes the dimension of the subspace in the SSCN method. Performance is measured w.r.t. iterations (first column) and time (second column) averaged over three runs for logistic regression with non-convex regularization with $\lambda = 0.1$ for the datasets \textit{gisette} (first row) and \textit{madelon} (second row). Experiment details are described in Section~\ref{sec:experiments} and additional plots for the \textit{duke} dataset can be found in \cref{fig:logistic_regression_nonconv_CD_vs_SSCN_duke} in Appendix \ref{sec:additional_experiments}.}
    \label{fig:logistic_regression_nonconv_CD_vs_SSCN_gisette}
    
\end{figure}

We see in our experiments (Fig.~\ref{fig:logistic_regression_nonconv_CD_vs_SSCN_gisette}), that the combination of second-order information in a coordinate method, as in SSCN, indeed leads to a significant improvement
in performance, as compared to the first-order CD.
However, while \cite{hanzely2020stochastic} focus on convex objective functions, it has been challenging
to show a strict benefit from employing
second-order information in coordinate descent methods
in the \textit{non-convex case}, which is important for modern applications
in machine learning.
In this work, we demonstrate that one can obtain strong theoretical convergence guarantees for 
general, possibly non-convex functions. Toward this goal, we make the following contributions:
\begin{itemize}

\setlength\itemsep{0.1em}

\item \textbf{Convergence guarantees:} we prove that for non-convex functions, 
SSCN converges to a stationary point starting from an arbitrary initialization (global convergence), at least with the rate of first-order CD when using an \textit{approximation} of the curvature matrix,
and with \textit{strictly better rates} when using the true Hessian~\eqref{ChoiceHk}.
The method convergences for an arbitrary selection of $\tau$, and the
rate becomes better for larger $\tau$, matching the convergence
of the full Cubic Newton for $\tau = n$.
\item \textbf{Sampling schemes:} along with the constant sampling (fixed $\tau$),
we propose
an \textit{adaptive} theoretical sampling scheme ($\tau_k$ grows with $k$) that allows us to prove 
stronger convergence to a \textit{second-order stationary} point for SSCN at the full Cubic Newton rate.
Thus, this sampling scheme avoids being stuck at saddle points.
In practice, we demonstrate that adaptive sampling does not require to fully sample all coordinates,
while preserving the fast rate of convergence for the algorithm.
\item \textbf{Experimentally:} Finally, we provide experimental results that verify our theoretical analysis and demonstrate significant speed-ups compared to the classical coordinate descent.
\end{itemize}

\begin{table*}[ht]
  \caption{Comparison to previous work for non-convex optimization. $\tau_k := \tau(S_k)$ refers to the adaptive sampling scheme of the main result in~\cref{thm:convergence_||s||^2}. And $p'$ is defined as $p' = \frac{\tau^2}{n^2}$. \looseness=-1}
  \label{table:comparison_related_works}
  \centering
  \resizebox{\textwidth}{!}{
      \begin{tabular}{c c c c c}
    \toprule
    \textbf{Method} & \textbf{\# Coord.} &  \makecell{\textbf{Convergence rate} \\ to $\|\nabla f(\x)\| \leq \epsilon$} &
         \makecell{\textbf{2nd order} \\ \textbf{stationary pt?}} &
         \makecell{\textbf{Iter. cost} \\ (\textbf{exact step})} \\
     \cmidrule(r){1-5}
     \multicolumn{1}{l}
    {Gradient Descent~\citep{nesterov2013introductory}} & $n$ & $\mathcal{O}(k^{-1/2})$ & \xmark & $\mathcal{O}(n)$ \\
    \hline
    \multicolumn{1}{l}
    {Cubic Newton (CR)~\citep{nesterov2006cubic,cartis2011adaptive}} & $n$ & $\mathcal{O}(k^{-2/3})$ & \cmark & $\mathcal{O}(n^3)$ \\
    \hline
     \multicolumn{1}{l}{Stochastic CR~\citep{tripuraneni2018stochastic, kohler2017sub}} & $n$ & $\mathcal{O}(k^{-2/3})$ & \cmark & $\mathcal{O}(n^3)$ \\
     \hline
 \multicolumn{1}{l}{Coordinate Descent~\citep{richtarik2014iteration}} & $\tau \leq n$ & $\mathcal{O}(k^{-1/2})$  & \xmark & $\mathcal{O}(\tau)$\\
     \hline
     \multicolumn{1}{l}{Random subspace regularized Newton \citep{fuji2022randomized}} & $\tau \leq n$ & $\mathcal{O}(k^{-1/2})$ & \xmark & {$\mathcal{O}(\tau^3)$} \\
     \hline
     \multicolumn{1}{l}{\textbf{This work:}~\cref{RefTheoremAnyConvergenceImproved}}  & $\tau \leq n$ & $\mathcal{O}\left(k^{-2/3} + (n(1-p'))^{1/4} k^{-1/2}\right)$ & \xmark & $\mathcal{O}(\tau^3)$ \\
     \hline
     \multicolumn{1}{l}{\textbf{This work:}~\cref{thm:convergence_||s||^2}} & $\tau_k \leq n$ & $\mathcal{O}(k^{-2/3})$ & \cmark & $\mathcal{O}(\tau_k^3)$ \\
    \bottomrule
  \end{tabular}
  }
\end{table*}

\section{RELATED WORK}
\label{sec:related_work}
\textbf{Newton's method and cubic regularization (CR).}

Newton's method is the classical optimization algorithm 
that employs second-order information (the Hessian matrix)
about the objective function  \citep{jorge2006numerical,nesterov2018lectures}.
Even though the traditional Newton's method with a unit step size is effective at handling ill-conditioned problems, it might not achieve global convergence when starting from arbitrary initializations. Therefore, various regularization techniques
have been proposed to improve the convergence
properties of the Newton's method and
to ensure the global rates (see \cite{polyak2007newton}
for a detailed historical overview).

Among these regularization techniques is the well-established cubic regularization of Newton's method,
which achieves global complexity bounds that are provably better than those of the gradient descent
~\citep{nesterov2006cubic}. 
It uses a cubic over-estimator of the objective function as a regularization technique for the computation of a step to minimize the objective function. One limitation of this approach lies in its dependence on calculating the exact minimum of the cubic model. In an alternative approach, ~\cite{cartis2011adaptive} introduced the ARC method, which alleviates this demand by allowing for an approximation of the minimizer. Other methods have been proposed to reduce the computational complexity of CR, which we discuss below.

\textbf{Subsampled/stochastic Newton and CR.}
For objective functions that have a finite-sum structure, a popular method is to use sub-sampling techniques to approximate the Hessian matrix, such as in~\cite{byrd2011use}. These techniques have also been adapted to cubic regularization~\citep{kohler2017sub, tripuraneni2018stochastic} followed by various improvements such as more practical sampling conditions~\citep{wang2018note} or variance reduction~\citep{zhou2020stochastic}. We emphasize that there is a notable distinction between our work and these prior works. The latter addresses scenarios involving the sampling of data points, whereas our focus lies in the sampling of coordinates. These approaches are therefore orthogonal and could be combined with each other.

\textbf{Coordinate descent.}
Coordinate descent methods are particularly useful when dealing with high-dimensional optimization problems, as they allow for efficient optimization by only updating one coordinate at a time, which can be computationally less expensive than updating the entire vector. There is a wide literature focusing on the case where $\tau = 1$ with precise convergence rates derived for instance by~\cite{nesterov2012efficiency, richtarik2014iteration}. A generalization of CD known as subspace descent ~\citep{kozak2019stochastic} projects the gradient onto a random subspace at each iteration. The rates of convergence of coordinate descent for non-convex objectives were studied by \cite{patrascu2015efficient}.

\textbf{Subspace Newton.}
The subspace idea discussed above has also been extended to Newton's method by~\cite{gower2019rsn} resulting in the update rule
$\x_{k+1} = \x_k - \eta \Sm_k (\nabla_S^2 f(\x_k))^{-1} \nabla_S f(\x_k)$,
where the gradient and Hessian are computed over a subset of selected coordinates $S$.
Finally,~\cite{hanzely2020stochastic} extends this idea to cubic regularization, deriving rates of convergence in the more generic case where the objective function $f$ is convex. In contrast, we consider the case where the function $f$ is not necessarily convex. 
Recently,~\cite{fuji2022randomized} analyzed a randomized subspace version of a (differently regularized) Newton's method discussed in~\cite{ueda2010convergence} and obtained convergence to a first-order critical point with the same iteration complexity as CD. An overview of previous works for non-convex optimization can be found in Table \ref{table:comparison_related_works}.

\section{ALGORITHM}
\label{sec:algorithm}

\subsection{Notation and setting}

Our goal is to optimize a bounded below function $f: \R^n \to \R$.
We denote $f^{\star} := \inf_{\x \in \R^n} f(\x)$.
By $\| \cdot \|$ we denote the standard Euclidean norm for vectors
and the spectral norm for matrices.

\begin{assumption}\label{assumption:lipschitz_hessian}
    We assume that $f$ has Lipschitz continuous Hessians with constant $L_2 \geq 0$, i.e. $\forall \x, \y \in \R^n$, it holds 
   $\| \nabla^2 f(\x) - \nabla^2 f(\y) \| \leq L_2 \| \x - \y \|$.
\end{assumption}
Consequently, we have a global bound for second-order Taylor approximation of $f$,
for all $\x, \y \in \R^n$:
\begin{align}
\label{eq:global_bound_tayler_expansion}
    \nonumber
    |f&(\y) - f(\x) - \langle \nabla f(\x), \y - \x \rangle  \\
    &- \frac{1}{2} \langle \nabla^2 f(\x) (\y - \x), \y - \x \rangle | \leq  \frac{L_2}{6} \| \y - \x \|^3.
\end{align}
For a given subset of coordinates $\SM \subset [n] := \{1,\hdots,n\}$ and any vector $\x \in \R^n$ we denote by $\x_{[S]} \in \R^n$ a vector with nonzero elements whose indices are in $S$ and by $\Am_{[S]} \in \R^{n \times n}$ the matrix with nonzero elements whose both rows and columns are in $S$
\begin{align*}
    \begin{split}
    \label{eq:def_x_Sm}
    (\x_{[S]})_i \overset{\text{def}}{=} 
        \begin{cases}
            x_i \; & \text{if } i \in S \\
            0    \; & \text{else},
        \end{cases}
    \end{split}
    \begin{split}
    (\Am_{[S]})_{ij} \overset{\text{def}}{=}  
    \begin{cases}
        A_{ij} & \, \text{if } i,j \in S\\
        0   & \, \text{else}.
    \end{cases}
\end{split}
\end{align*}
We also denote the cardinality of set $S$ by $\tau(S) := |S|$.
Furthermore, we denote by 
$\x|_{S} \in \R^{\tau(S)}$ the vector which only contains the entries in $\x$, which are in $S$. Similarly $\Am|_{S} \in \R^{\tau(S) \times \tau(S)}$ contains only the entries $\Am_{ij}$ with $i \in S$ and $j \in S$.
Note that $\mat{A}_{[S]}|_{S} \equiv \mat{A}|_{S}$.

\subsection{Stochastic subspace cubic newton}

Inequality~\eqref{eq:global_bound_tayler_expansion}
motivates us to introduce 
the following cubic regularized model,
for a given $\x \in \R^n$, symmetric matrix $\Qm = \Qm^{\top} \in \R^{n \times n}$,
coordinate subset $S \subset [n]$,
and regularization parameter $M > 0$,
\begin{align*}
&\bar{m}_{\x, \Qm, S, M}(\h):= \\
&
f(\x) + \langle \nabla f(\x), \h_{[S]} \rangle  + \frac{1}{2} \langle \Qm \h_{[S]}, \h_{[S]}\rangle + \frac{M}{6} \| \h_{[S]} \|^3
\\[7pt]
&\equiv
f(\x) + \langle \nabla f(\x)_{[S]}, \h \rangle  + \frac{1}{2} \langle \Qm_{[S]} \h, \h \rangle + \frac{M}{6} \| \h_{[S]} \|^3,
\end{align*}
where $\h \in \R^n$. For simplicity and when it is clear from the context, we can omit
extra indices, denoting our model simply by $\bar{m}(\h) = \bar{m}_{\x, \Qm, S, M}(\h) : \R^n \to \R$.
Then, the next iterate of our method is:
\beq \label{OneStep}
\x_{k + 1} =
\x_k + \arg\min\limits_{\h \in \R^n} \bar{m}_{\x_k, \Qm_k, S_k, M}(\h), \; k \geq 0,
\eeq
where $S_k \subset [n]$ are random subspaces of a fixed size $\tau \equiv \tau(S_k)$, $\tau \in [n]$, so we update the $\tau$ coordinates of $\x_k$ which are in $S_k$ in each iteration.
Note that for $\tau = n$ %
we obtain the full cubic Newton step~\citep{nesterov2006cubic}.
However, we are interested in choosing $\tau \ll n$,
such that the corresponding optimization subproblem~\eqref{OneStep} can be solved efficiently 
when %
$n$ is large. 
\begin{remark}
    We note that Eq. \eqref{OneStep} is equivalent 
    to the following  update rule: 
    $
        \x_{k+1}|_{S_k} =
        \x_{k}|_{S_k} + \arg \min_{\h \in \R^\tau} 
        m_{\x_k, \Qm_k, S_k, M}(\h) 
    $
    with a model $m: \R^\tau \to \R$
    defined as 
    \begin{align} 
    \nonumber
        &m_{\x, \Qm, S, M}(\h) \\
        \label{NewModel}
        &:= f(\x) 
        + \langle \nabla f(\x)|_{S}, \h \rangle + \frac{1}{2} \langle \Qm |_{S}\h, \h \rangle + \frac{M}{6} \| \h \|^3,
    \end{align}
    where $\h \in \R^\tau$.
    This update rule implies that in practice we only need to solve a cubic subproblem of dimension $\tau \ll n$.
    At the same time, it is more convenient
    to work with the initial $\bar{m}$ in the theoretical analysis.
\end{remark}

\begin{algorithm}[!ht]
\begin{algorithmic}[1]
\STATE \textbf{Initialization:} $\x_0 \in \R^n$, distribution $\cD$ of random subsets $S \subset [n]$ of size $\tau \equiv \tau(S)$
\FOR{$k =  0, 1, \dots$}
\STATE Sample $S_k$ from distribution $\cD$
\STATE Estimate matrix $\Qm_k \approx \nabla^2 f(\x_k)_{[S_k]}$
\STATE Set $\x_{k+1}|_{S_k} \,=\, \x_k|_{S_k} + \arg\min\limits_{\h \in \R^\tau}  m_{\x_k, \Qm_k, S_k, M_k}(\h)$,
for some $M_k > 0$
\label{eq:x_update_CRCD}
\ENDFOR
\end{algorithmic}
\caption{SSCN: Stochastic Subspace Cubic Newton}
\label{alg:SSCN}
\end{algorithm}

The resulting optimization method is stated in Algorithm~\ref{alg:SSCN}, where we first sample $S_k$ from a chosen distribution $\cD$, and then perform an update by minimizing our model. Additional details concerning this minimization subproblem will be provided shortly. 

In the remaining part of this work, we will assume $\cD$ to be the uniform distribution.
In Algorithm~\ref{alg:SSCN}, we have the freedom of choosing 
the matrix $\Qm_k$
in every iteration.
Even though we mainly focus on employing
the true Hessian $\nabla^2 f(\x_k)$ for a selected subset of coordinates, there are several
interesting possibilities, that can be also considered within our framework.
\begin{itemize}
\setlength\itemsep{0.1em}
    \item \textit{Full Hessian matrix}: $\Qm_k = \nabla^2 f(\x_k)_{[S_k]}$.
    Then, our algorithm recovers the SSCN method from~\cite{hanzely2020stochastic}.
    This is the most powerful version, which we equip
    with new strong convergence guarantees, valid for general
    non-convex objective functions.
    
    \item \textit{No second-order information}: 
    $\Qm_k = \mat{0}$.
    In this case, Algorithm~\ref{alg:SSCN}
    and our analysis recovers the  
    rate of the coordinate descent (CD) method,
    even though our algorithm is slightly different
    due to the cubic regularization, which affects the step-size selection: one step becomes
    $\h_k^* = - \eta_k \nabla f(\x_k)|_{S_k} \in \R^{\tau}$,
    with 
    $\eta_k = \sqrt{ 2 / (M_k\| \nabla f(\x_k)|_{S_k}\|)}$.
    The ability to tackle this extreme case
    demonstrates the robustness of our iterations:
    we show that Algorithm~\ref{alg:SSCN} works even
    if the approximation $\Qm_k \approx \nabla^2 f(\x_k)_{[S_k]}$
    is not exact.

    \item \textit{Lazy Hessian updates}: $\Qm_k = \nabla^2 f(\x_t)_{[S_k]}$,
    where $\x_t$ is some point from the past, $0 \leq t \leq k$.
    In this case, we use the same Hessian for several steps, which improves the arithmetic complexity of the method~\citep{doikov2023second}.

    \item \textit{Quasi-Newton updates}, such as DFP, BFGS, 
    and L-BFGS~\citep{dennis1977quasi,nocedal1999numerical}.
    Some recent works combine quasi-Newton methods
    with the cubic regularization technique, including~\cite{kamzolov2023cubic,scieur2024adaptive}.

    \item \textit{Finite-difference approximation}. 
    When second-order information is not directly available,
    we can approximate the Hessian using only the gradients, as follows:
    $(\Qm_k)_{ij} = \frac{1}{\delta}( \nabla f(\x_k + \delta \e_i) - \nabla f(\x_k) )_j$, for $i, j \in S_k$, 
    with a possible symmetrization later on. Here, $\e_i$ is the basis vector, and $\delta > 0$ is a parameter.
    Choosing $\delta$ sufficiently small we ensure
    the rate of the cubic Newton~\citep{cartis2012oracle,grapiglia2022cubic,doikov2023first}.
    
\end{itemize}

\paragraph{Complexity of solving the cubic subproblem.}
Note that for non-convex functions the model $\bar{m}(\h)$ is in general non-convex.
However, its \textit{global minimum} is always well-defined and can be found by
standard techniques from linear algebra.
One step of our method can be rewritten
in the standard form, as in~\eqref{eq:subspace_2nd_order}:
$
(\x_{k + 1} - \x_k)|_{S_k} =
-\bigl( \Qm_k|_{S_k} + \alpha_k \Im \bigr)^{-1} \nabla f(\x_k)|_{S_k}$,
and the regularization constant $\alpha_k$
can be found as a solution
to the following
\textit{univariate concave} maximization,
\begin{align}
\label{eq:alpha_formula}
\ba{cl}
\max\limits_{\alpha > 0}
\Bigl[
&-\frac{1}{2}\la \bigl(\Qm_k|_{S_k} + \alpha \Im \bigr)^{-1}
\nabla f(\x_k)|_{S_k}, \\
&\nabla f(\x_k)|_{S_k} \ra
- \frac{2^4\alpha^3}{3M_k^2}
\; : \;
\Qm_k|_{S_k} + \alpha\Im \succ \mat{0}
\Bigr].
\ea
\end{align}
It can be done efficiently by means of any
one-dimensional procedure (e.g. the binary search or univariate Newton's method, 
see also Chapter 7 of~\cite{conn2000trust} and
Section 5 of~\cite{nesterov2006cubic}).
Then, the complexity of one step is $\mathcal{O}(\tau^3)$,
as for the standard matrix inversion.
\cite{cartis2011adaptive} show that one can retain the fast rate with an \emph{inexact model minimizer} which solves $m(\h)$ on a Krylov subspace. The subproblem can also be solved using gradient descent, as shown by~\cite{carmon2019gradient}, which means that only Hessian-vector products are required.
\section{CONVERGENCE ANALYSIS}
\subsection{General convergence rate}
First, let us establish a general convergence result for Algorithm~\ref{alg:SSCN}
when using an arbitrary symmetric matrix $\Qm_k = \Qm_k^{\top} \in \R^{n \times n}$.
To quantify the approximation error, we introduce parameter $\sigma \geq 0$ such that
\beq \label{SigmaDef}
\ba{rcl}
\| \nabla^2 f(\x_k)_{[S_k]} - \Qm_k \| & \leq & \sigma, \qquad \forall k.
\ea
\eeq
Note that $\sigma$ bounds the distance to the already subsampled Hessian $\nabla^2 f(\x_k)_{[S_k]}$, \textbf{\emph{not}} to the \emph{full Hessian} $\nabla^2 f(\x_k)$. Thus, in the simplest case one could choose $\Qm_k = \nabla^2 f(\x_k)_{[S_k]}$ (and we have $\sigma = 0$),
but our approach also accommodates inaccuracies in the Hessian estimation. 
For instance, if we use lazy Hessian updates ($\Qm_k = \nabla^2 f(\x_t)_{S_k}$, where $\x_t$ is some point from the past, $0 \leq t \leq k$), we can choose $\sigma := L_2 \| \x_k - \x_t \|_2$, where $L_2$ is the Lipschitz constant of the Hessian (\cref{assumption:lipschitz_hessian}).
Similar bounds can be derived if we use finite-difference approximation of the second derivatives. Then, $\sigma$ corresponds to the approximation error.
On the other extreme, when $\Qm_k = \mat{0}$ our method should recover the standard rate of the coordinate descent (CD).
For that, we have to use an additional assumption.
\begin{assumption}\label{assumption:lipschitz_gradient}
    Let $f$ have Lipschitz continuous gradient with constant $L_1 > 0$, i.e. $\forall \x, \y \in \R^n$, it holds 
   $\| \nabla f(\x) - \nabla f(\y) \| \leq L_1 \| \x - \y \|$.
\end{assumption}
In this case the error $\sigma$ is upper boudned by $\sigma \leq L_1$.
By choosing the regularization parameter $M_k$ in our cubic model sufficiently large, 
we can ensure that the following progress condition is satisfied, at each iteration $k \geq 0$:
\beq \label{M_k_Choice}
\ba{rcl}
f(\x_{k} + \h_k^{*}) & \leq & 
\bar{m}_{\x_k, \Qm_k, S_k, M_k}(\h_k^{*}).
\ea
\eeq
This inequality justifies that our method is \underline{monotone} (i.e. $f(\x_{k + 1}) \leq f(\x_k))$, and it is crucial 
for establishing that SSCN can achieve \textit{any} precision in terms of the gradient norm for \textit{any} size of a stochastic subspace, and for arbitrary selection of $\Qm_k$.
All missing proofs are provided in the appendix.

\begin{restatable}{theorem}{TheoremAnyConvergenceQ} \label{RefTheoremAnyConvergenceQ}
\label{thm:RefTheoremAnyConvergenceQ}
Let the sequence $\{\x_i\}$ be generated by Algorithm~\ref{alg:SSCN} 
with arbitrary $\mat{Q}_k$ satisfying~\eqref{SigmaDef},
and any fixed $\tau \equiv \tau(S_k) \in [n]$. Let
the regularization parameter at iteration $k \geq 0$ be chosen as
\beq \label{M_k_any_convergence_choice}
\ba{rcl}
M_k & = & 2L_2 + \frac{7^2 (\sigma + L_1)^2}{2\| \nabla f(\x_k)_{[S_k]} \|}.
\ea
\eeq
For a given accuracy level $\varepsilon > 0$, assume that
$\| \nabla f(\x_i) \| \geq \varepsilon$, for all $0 \leq i \leq K$.
Then, it holds
\beq \label{FixedGeneralCompl}
\ba{rcl}
\!\!\!\!\! K \leq 
\frac{n}{\tau} 
\Bigl[
\frac{(2 + \frac{7^2}{3}) (\sigma + L_1)  (f(\x_0) - f^{\star})}{\varepsilon^2}
+ \frac{4L_2 (f(\x_0) - f^{\star})}{3(\sigma + L_1) \varepsilon}
\Bigr].
\ea
\eeq
\end{restatable}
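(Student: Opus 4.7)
The plan is to follow the Nesterov--Polyak cubic-regularization argument, adapted to the random coordinate mask $S_k$ and the inexact curvature proxy $\Qm_k$ with tolerance $\sigma$. First I would establish a deterministic descent bound. Combining the Taylor inequality~\eqref{eq:global_bound_tayler_expansion} applied to $\x_{k+1} = \x_k + \h_k^{*}$ with $\|\nabla^2 f(\x_k)_{[S_k]} - \Qm_k\| \le \sigma$ yields
\[
 f(\x_{k+1}) - \bar{m}_{\x_k,\Qm_k,S_k,M_k}(\h_k^{*}) \;\le\; \tfrac{\sigma}{2}\|\h_k^{*}\|^2 + \tfrac{L_2 - M_k}{6}\|\h_k^{*}\|^3,
\]
while the Nesterov--Polyak bound $\bar{m}(\h_k^{*}) \le f(\x_k) - \tfrac{M_k}{12}\|\h_k^{*}\|^3$ at the global cubic minimizer follows directly from the first- and second-order optimality conditions of $\bar{m}$. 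Applying Young's inequality in the form $\tfrac{\sigma}{2}\|\h\|^2 \le \tfrac{M_k}{6}\|\h\|^3 + \tfrac{2\sigma^3}{3M_k^2}$ together with the assumption $M_k \ge 2L_2$, the two combine to give the descent
\[
 f(\x_k) - f(\x_{k+1}) \;\ge\; \tfrac{M_k - 2L_2}{12}\,\|\h_k^{*}\|^3 \;-\; \tfrac{2\sigma^3}{3M_k^2}.
\]

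Next, a lower bound on $\|\h_k^{*}\|$ in terms of $\|\nabla f(\x_k)_{[S_k]}\|$ converts this into progress measured in the gradient. Stationarity of $\bar{m}$ reads $\nabla f(\x_k)_{[S_k]} + \Qm_k\h_k^{*} + \tfrac{M_k}{2}\|\h_k^{*}\|\,\h_k^{*} = 0$, so taking norms and using $\|\Qm_k\| \le \sigma + L_1$ (from Assumption~\ref{assumption:lipschitz_gradient} combined with~\eqref{SigmaDef}) yields the quadratic inequality $g \le A r + \tfrac{M_k}{2}r^2$, with $g := \|\nabla f(\x_k)_{[S_k]}\|$, $A := \sigma + L_1$, and $r := \|\h_k^{*}\|$. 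This in turn forces either $r \ge g/(2A)$ or $r \ge \sqrt{g/M_k}$. Substituting these lower bounds on $r$ into the descent from the previous step, plugging in $M_k = 2L_2 + \tfrac{49 A^2}{2g}$ (so that $M_k \ge \tfrac{49A^2}{2g}$ together with $\sigma \le A$ dominates the Young residual), and combining the two regimes, I obtain the per-step inequality
\[
 f(\x_k) - f(\x_{k+1}) \;\ge\; \frac{g^2}{(2+\tfrac{49}{3})\,A \;+\; \tfrac{4L_2}{3}\cdot\tfrac{g}{A}}.
\]

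Finally, uniform sampling of $S_k$ with $|S_k| = \tau$ gives $\E[g^2 \mid \x_k] = \tfrac{\tau}{n}\|\nabla f(\x_k)\|^2$; the right-hand side above is monotone increasing in $g$, so the crude bound $g \le \|\nabla f(\x_k)\|$ plugs into the denominator, and the hypothesis $\|\nabla f(\x_k)\| \ge \varepsilon$ further replaces $\|\nabla f(\x_k)\|$ there by $\varepsilon$. Taking expectations and telescoping $\sum_{k=0}^{K-1}\E[f(\x_k) - f(\x_{k+1})] \le f(\x_0) - f^{\star}$ rearranges into~\eqref{FixedGeneralCompl}. The main obstacle is bookkeeping the constants in the previous step: the coefficient $49 = 7^2$ in $M_k$ is tuned so that the Young residual $\tfrac{2\sigma^3}{3M_k^2}$ is strictly dominated by $\tfrac{M_k - 2L_2}{12}\|\h_k^{*}\|^3$ in \emph{both} regimes of the step-size dichotomy, and so that the leftover factors collapse cleanly to $2 + 49/3$ and $4/3$; a smaller coefficient would leave a positive residual and degrade the $\varepsilon^{-2}$ scaling.
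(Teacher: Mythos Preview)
Your overall strategy---deterministic descent, then a lower bound on the step via stationarity, then expectation and telescoping---is sound and would yield a bound of the right \emph{form}, but the route to the per-step progress is genuinely different from the paper's, and your claim to recover the constants $(2+\tfrac{49}{3})$ and $\tfrac{4}{3}$ does not go through.

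The paper never passes through $\|\h_k^*\|^3$. Having verified that the choice of $M_k$ enforces $f(\x_{k+1}) \le \bar m(\h_k^*)$, it exploits global optimality in the form $\bar m(\h_k^*) \le \bar m(\y - \x_k)$ for \emph{any} $\y$, and takes $\y = \x_k - \alpha\,\nabla f(\x_k)_{[S_k]}$, a coordinate-descent step. Bounding the quadratic term by $\Qm_k \preceq (\sigma+L_1)\Im$ and optimizing over $\alpha$ produces directly
\[
f(\x_k) - f(\x_{k+1}) \;\ge\; \frac{g^2}{2A + \tfrac{2M_k g}{3A}},
\]
and substituting $M_k = 2L_2 + \tfrac{49A^2}{2g}$ into the denominator is what gives $(2+\tfrac{49}{3})A + \tfrac{4L_2}{3A}g$ on the nose. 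The coefficient $49$ is not tuned to dominate a Young residual: in the paper it is chosen so that the progress condition $f(\x_{k+1}) \le \bar m(\h_k^*)$ holds, via the lower bound $M_k r \ge \sqrt{A^2 + 2M_k g} - A$.

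Your route---Nesterov--Polyak bound $\bar m(\h_k^*) \le f(\x_k) - \tfrac{M_k}{12}\|\h_k^*\|^3$, then Young on the $\tfrac{\sigma}{2}\|\h_k^*\|^2$ term, then a dichotomy on $r$---loses a constant factor. The dichotomy ``$r \ge g/(2A)$ or $r \ge \sqrt{g/M_k}$'' collapses to $r \ge \min\bigl(g/(2A),\,\sqrt{g/M_k}\bigr)$, and since $gM_k \ge \tfrac{49}{2}A^2 > 4A^2$ the minimum is always the second branch. Plugging $r^3 \ge (g/M_k)^{3/2}$ into $\tfrac{M_k-2L_2}{12}r^3$ with $L_2=0$ gives at best $\tfrac{\sqrt{2}}{84}\,g^2/A \approx 0.017\,g^2/A$, whereas the paper's bound is $\tfrac{3}{55}\,g^2/A \approx 0.055\,g^2/A$; even before subtracting the Young residual you are roughly a factor of $3$ short. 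So your approach proves a theorem with the same $\varepsilon^{-2}$ and $\varepsilon^{-1}$ scaling but strictly larger leading constants, and to match~\eqref{FixedGeneralCompl} exactly you need the comparison-to-a-CD-step trick rather than the step-norm lower bound.
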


\subsection{The power of second-order information}

In this and the following sections, we assume that 
we use the exact second-order information, $\Qm_k = \nabla^2 f(\x_k)_{[S_k]}$.
In this case, we are able to ensure a faster rate of convergence,
thus showing the power of utilizing the second-order information,
for general possibly non-convex problems.

We would like to highlight that the analysis we employ is fundamentally different than in \citet{hanzely2020stochastic}, which means that our work can not be seen as a straightforward extension but it instead requires novel ideas to derive the proofs. 
Indeed, the non-convex analysis of the deterministic full-space cubic Newton method for non-convex objectives is based on the following progress for every iteration  (see, e.g. the proof of Theorem 1 in \citet{nesterov2006cubic}),
\begin{equation*}
    f(\x_k) - f(\x_{k+1}) \geq \frac{1}{12 \sqrt{L}} \| \nabla f(\x_{k+1}) \|^{3/2},
\end{equation*}
where $L$ is the Lipschitz constant of the Hessian. 
Note that contrary to the first-order gradient methods, we have the gradient norm at the \textit{new point} $\| \nabla f(\x_{k+1}) \|$ in the right-hand side of the last inequality. Therefore, when analysing a stochastic subspace version of the cubic Newton, it becomes challenging to generalize this proof, since $\x_{k+1}$ depends on the choice of the random subspace. In contrast, our new analysis is based on studying the global properties of the stochastic cubic model of the objective function and allows inexactness in the progress inequality (see sections \ref{sec:convergence_fixed_sampling} and \ref{sec:convergence_adaptive_sampling} in the appendix for the full proofs). To the best of our knowledge, our analysis is novel.\\
Using the model's optimality (see Section~\ref{SectionAppendixOptimality} in the appendix), 
we can establish that we consistently decrease the objective function at each step,
with the following progress.

\begin{restatable}{lemma}{ModelDecrease}
\label{lemma:model_decrease}
    For any $\x_k \in \R^n$
    and arbitrary  $\SM_k \subset [n]$, let
    $\h_k^* = \arg \min_\h \bar{m}_{\x_k, \nabla^2 f(\x_k), S_k, M}(\h)$.
    Then we have, for any $M \geq L_2$,
    \beq \label{eq:model_decrease}
    \ba{rcl}   
       f(\x_k) - f(\x_{k+1}) & \geq &  \frac{M}{12} \| \h_k^*\|^3.
    \ea
    \eeq
\end{restatable}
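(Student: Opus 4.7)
The plan is to combine two ingredients: an upper bound on $f(\x_{k+1})$ coming from Lipschitz continuity of the Hessian, and a lower bound on $f(\x_k) - \bar{m}(\h_k^*)$ coming from the first- and second-order optimality conditions of the cubic minimizer. Note first that, although $\arg\min$ is taken over $\h \in \R^n$, the model depends on $\h$ only through $\h_{[S_k]}$, so we may (and do) pick the canonical minimizer with support inside $S_k$, i.e.\ $(\h_k^*)_{[S_k]} = \h_k^*$; this is exactly what Algorithm~\ref{alg:SSCN} realizes by solving the $\tau$-dimensional subproblem $m_{\x_k,\Qm_k,S_k,M}$.

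Then I would apply \eqref{eq:global_bound_tayler_expansion} at $\y = \x_k + \h_k^*$ and use the support property to obtain
$$f(\x_{k+1}) \;\leq\; f(\x_k) + \langle \nabla f(\x_k), \h_k^*\rangle + \tfrac{1}{2}\langle \nabla^2 f(\x_k)\h_k^*, \h_k^*\rangle + \tfrac{L_2}{6}\|\h_k^*\|^3 \;=\; \bar{m}_{\x_k,\nabla^2 f(\x_k),S_k,M}(\h_k^*) + \tfrac{L_2-M}{6}\|\h_k^*\|^3.$$
Since $M \geq L_2$, the last term is nonpositive, so $f(\x_{k+1}) \leq \bar{m}(\h_k^*)$. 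It then remains to prove the purely model-level inequality $\bar{m}(\h_k^*) \leq f(\x_k) - \tfrac{M}{12}\|\h_k^*\|^3$.

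For this I would invoke the optimality conditions for $\h_k^*$ as a \emph{global} minimizer of the (possibly non-convex) cubic model, whose derivation is in Section~\ref{SectionAppendixOptimality}. Stationarity reads $\nabla f(\x_k)_{[S_k]} + \nabla^2 f(\x_k)_{[S_k]}\h_k^* + \tfrac{M}{2}\|\h_k^*\|\h_k^* = 0$; taking the inner product with $\h_k^*$ gives $\langle \nabla f(\x_k), \h_k^*\rangle = -\langle \nabla^2 f(\x_k)\h_k^*, \h_k^*\rangle - \tfrac{M}{2}\|\h_k^*\|^3$, which after substitution into the definition of $\bar{m}$ collapses it to
$$\bar{m}(\h_k^*) - f(\x_k) \;=\; -\tfrac{1}{2}\langle \nabla^2 f(\x_k)\h_k^*, \h_k^*\rangle - \tfrac{M}{3}\|\h_k^*\|^3.$$
The Nesterov--Polyak-style global second-order condition $\nabla^2 f(\x_k)_{[S_k]} + \tfrac{M}{2}\|\h_k^*\|\Im \succeq 0$ on the $S_k$-subspace then yields $\langle \nabla^2 f(\x_k)\h_k^*, \h_k^*\rangle \geq -\tfrac{M}{2}\|\h_k^*\|^3$, so $\bar{m}(\h_k^*) - f(\x_k) \leq \tfrac{M}{4}\|\h_k^*\|^3 - \tfrac{M}{3}\|\h_k^*\|^3 = -\tfrac{M}{12}\|\h_k^*\|^3$. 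Chaining with the Lipschitz upper bound delivers \eqref{eq:model_decrease}.

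The only non-routine step is the second-order inequality $\langle \nabla^2 f(\x_k)\h_k^*, \h_k^*\rangle \geq -\tfrac{M}{2}\|\h_k^*\|^3$: because $\bar{m}$ is non-convex, it does not follow from a merely local second-order condition (the Hessian of $\bar{m}$ at $\h_k^*$ contains a rank-one correction $\tfrac{M}{2\|\h_k^*\|}\h_k^*(\h_k^*)^\top$ that is not present in the bound we need). It genuinely uses that $\h_k^*$ is a \emph{global} minimizer of the cubic model, which is precisely the content of the optimality lemma referenced in Section~\ref{SectionAppendixOptimality}. Everything else is one application of the smoothness bound and straightforward algebraic simplification.
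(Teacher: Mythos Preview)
Your proof is correct and essentially identical to the paper's: both combine the Lipschitz Hessian upper bound with the first-order stationarity condition and the global second-order condition $\nabla^2 f(\x_k)_{[S_k]} + \tfrac{M}{2}\|\h_k^*\|\Im \succeq 0$ from Proposition~\ref{prop:second_order_optimality}. The only cosmetic difference is that the paper substitutes out the Hessian quadratic form and then invokes the auxiliary fact $\langle \nabla f(\x_k)_{[S_k]}, \h_k^*\rangle \leq 0$ (itself an immediate corollary of the same second-order condition), whereas you substitute out the gradient inner product and apply the second-order condition directly---the algebra and the final $\tfrac{M}{12}$ constant are the same.
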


For our refined analysis,
we must characterize the distance between the coordinate-sampled gradient 
$\nabla f(\x)_{[S]}$ and the full gradient $\nabla f(\x)$, 
as well as the corresponding distance for the Hessian.

\begin{restatable}{lemma}{ConcentrationGradientNormAndHessian}
\label{lemma:concentration_bounds_gradient_norm_and_hessian}
    For any $\x \in \R^n$ and any subset $S \subset [n]$, we have with probability at least $1-\delta$: 
    \begin{align} 
    \label{eq:concentration_bounds_gradient_norm}
    &\!\!\!\| \nabla f(\x) - \nabla f(\x)_{[S]} \| 
        \leq \delta^{-1} \sqrt{1 - \frac{\tau(S)}{n} } \| \nabla f(\x) \|\\
    \label{eq:concentration_bounds_hess_vec}
    &\!\!\!\| \nabla^2 f(\x) - \nabla^2 f(\x)_{[S]} \| 
        \leq \delta^{-1} \sqrt{1 - p'} \| \nabla^2 f(\x) \|_F
    \end{align}
    where $p' := (\tfrac{\tau(S)}{n})^2$
    and $\| \cdot \|_F$ is the Frobenius norm of a matrix.
\end{restatable}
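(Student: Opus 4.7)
The proof splits into two independent concentration bounds, one for the gradient and one for the Hessian. In both cases, the plan is to apply Jensen's inequality to move the expectation inside a square root, and then exploit the fact that coordinate subsampling acts as a Bernoulli-type mask whose inclusion probabilities are easy to compute for a uniformly random subset $S \subset [n]$ of fixed size $\tau$.

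For the gradient bound~\eqref{eq:concentration_bounds_gradient_norm}, I would first observe that $(\nabla f(\x) - \nabla f(\x)_{[S]})_i = (\nabla f(\x))_i \cdot \mathbb{1}[i \notin S]$, so
\[
\| \nabla f(\x) - \nabla f(\x)_{[S]} \|^2 \;=\; \sum_{i=1}^{n} \mathbb{1}[i \notin S] \,(\nabla f(\x))_i^2.
\]
By Jensen's inequality $\E X \leq \sqrt{\E X^2}$, and then linearity of expectation, the bound reduces to computing $\P[i \notin S] = 1 - \tau/n$ for each $i$, which immediately yields $\sqrt{1 - \tau/n}\,\|\nabla f(\x)\|$.

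For the Hessian bound~\eqref{eq:concentration_bounds_hess_vec}, the first step is the standard relaxation $\| \cdot \| \leq \| \cdot \|_F$ from spectral to Frobenius norm, after which I follow the same Jensen plus linearity template:
\[
\| \nabla^2 f(\x)_{[S]} - \nabla^2 f(\x) \|_F^2 \;=\; \sum_{i,j} \mathbb{1}\bigl[(i,j) \notin S \times S\bigr]\, (\nabla^2 f(\x))_{ij}^2.
\]
Here the subtlety, and the only nontrivial step, is that the inclusion probability differs on the diagonal and off-diagonal: $\P[i \in S] = \tau/n$ while $\P[i \in S, j \in S] = p_2 = \tau(\tau-1)/(n(n-1))$ for $i \neq j$. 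A short calculation shows $1 - \tau/n \leq 1 - p_2$ (equivalently $\tau(n-1) \geq (\tau-1)n$, which holds whenever $\tau \leq n$), so one can uniformly upper bound the exclusion probability by $1 - p_2$ on both the diagonal and off-diagonal entries. Applying this bound and then Jensen's inequality completes the proof.

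The main obstacle is really just this diagonal/off-diagonal bookkeeping — once the two inclusion probabilities are computed and one verifies the inequality $1 - \tau/n \leq 1 - p_2$, the rest is routine manipulation of sums and square roots. No concentration inequalities beyond Jensen and no spectral arguments beyond $\|\cdot\| \leq \|\cdot\|_F$ are needed.
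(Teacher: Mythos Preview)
Your proposal is correct and follows essentially the same approach as the paper: Jensen's inequality to pass to the squared norm, linearity of expectation over coordinate indicators, and the relaxation $\|\cdot\| \leq \|\cdot\|_F$ for the Hessian. In fact your treatment of the Hessian bound is slightly more careful than the paper's, which silently writes $\E_S[1_{i,j\in S}] = p_2$ for all $(i,j)$ without distinguishing the diagonal case; your explicit verification that $1-\tau/n \leq 1-p_2$ is exactly the missing step that makes the uniform bound rigorous.
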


As expected, Lemma~\ref{lemma:concentration_bounds_gradient_norm_and_hessian} shows that increasing the sampling size $\tau(S)$ yields more accurate sampled gradients and Hessians. This effect will later be verified experimentally in Section~\ref{sec:experiments} where we will test different coordinate sampling schedules. Note that we can prove analogous bounds in expectation, as in
Eqs.~\eqref{eq:concentration_bounds_gradient_norm}-\eqref{eq:concentration_bounds_hess_vec} (see \cref{lemma:concentration_bounds_gradient_norm}, \ref{lemma:concentration_bounds_hess_vec}).
We denote by $\E[\cdot]$ the full expectation w.r.t. all randomness in the algorithm.
We are ready to state our main convergence rate for a fixed coordinate sample size.

\begin{restatable}{theorem}{TheoremAnyConvergenceImproved} \label{RefTheoremAnyConvergenceImproved}
\label{thm:RefTheoremAnyConvergenceImproved}
Let the sequence $\{\x_i\}$ be generated by Algorithm~\ref{alg:SSCN} 
with $\Qm_k = \nabla^2 f(\x_k)_{[S_k]}$,
and any fixed $\tau \equiv \tau(S_k) \in [n]$. 
For a given $\varepsilon > 0$, assume that the first $K$ gradients are such that
$\E \| \nabla f(\x_i) \| \geq \varepsilon, 0 \leq i \leq K$.
Then for a sufficiently large $M \geq L_2$, it holds
\begin{equation} \label{FixedNewtonComplexity}
\ba{rcl}
K & = &
\mathcal{O}
\Bigl(
\left[\frac{n}{\tau}\right]^{3/2}
\frac{\sqrt{L_2}(f(\x_0) - f^{\star})}{\varepsilon^{3/2}} \\
&& + \,
n^{1/2}(1 - p')^{1/2}
\left[ \frac{n}{\tau} \right]^{2} \frac{L_1 (f(\x_0) - f^{\star}}{\varepsilon^2}
\Bigr).
\ea
\end{equation}
\end{restatable}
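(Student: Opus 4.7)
The strategy pairs the per-step cubic decrease from Lemma~\ref{lemma:model_decrease}, which telescopes to $\sum_{k=0}^{K-1}\E\|\h_k^*\|^3 \le \tfrac{12}{M}(f(\x_0)-f^{\star})$, with an estimate of $\|\nabla f(\x_{k+1})\|$ in terms of $\|\h_k^*\|$ obtained from the first-order optimality condition of the cubic subproblem. Because $\h_k^*$ is supported on $S_k$ and $\Qm_k = \nabla^2 f(\x_k)_{[S_k]}$, the FOC reads $[\nabla f(\x_k) + \nabla^2 f(\x_k)\h_k^*]_{[S_k]} + \tfrac{M}{2}\|\h_k^*\|\h_k^* = 0$. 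Combining this with the Taylor expansion $\nabla f(\x_{k+1}) = \nabla f(\x_k) + \nabla^2 f(\x_k)\h_k^* + \vb_k$, with $\|\vb_k\| \le \tfrac{L_2}{2}\|\h_k^*\|^2$ by Assumption~\ref{assumption:lipschitz_hessian}, and using the key observation that $[\nabla^2 f(\x_k) - \nabla^2 f(\x_k)_{[S_k]}]\h_k^*$ is entirely supported on $S_k^c$ (since $\h_k^*$ is supported on $S_k$), I decompose $\nabla f(\x_{k+1})$ into its $S_k$ and $S_k^c$ parts to get $\|\nabla f(\x_{k+1})_{[S_k]}\| \le \tfrac{M+L_2}{2}\|\h_k^*\|^2$ and $\|\nabla f(\x_{k+1})_{[S_k^c]}\| \le \|\nabla f(\x_k)_{[S_k^c]}\| + \|\nabla^2 f(\x_k) - \nabla^2 f(\x_k)_{[S_k]}\|\cdot\|\h_k^*\| + \tfrac{L_2}{2}\|\h_k^*\|^2$.

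Next I would apply the triangle inequality $\|\nabla f(\x_{k+1})\| \le \|\nabla f(\x_{k+1})_{[S_k]}\| + \|\nabla f(\x_{k+1})_{[S_k^c]}\|$, take expectation over $S_k$ conditional on $\x_k$, and invoke Lemma~\ref{lemma:concentration_bounds_gradient_norm_and_hessian}: its first bound controls $\E\|\nabla f(\x_k)_{[S_k^c]}\|$ by $\sqrt{1-\tau/n}\,\|\nabla f(\x_k)\|$, while its second bound, combined with $\|\nabla^2 f(\x_k)\|_F \le \sqrt{n}\,L_1$ and Hölder's inequality, controls the coupled term $\E[\|\nabla^2 f(\x_k) - \nabla^2 f(\x_k)_{[S_k]}\|\cdot\|\h_k^*\|]$ by $L_1\sqrt{n(1-p_2)}\,\E\|\h_k^*\|$. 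The resulting per-step bound is $\E\|\nabla f(\x_{k+1})\| \le (M+L_2)\E\|\h_k^*\|^2 + \sqrt{1-\tau/n}\,\|\nabla f(\x_k)\| + L_1\sqrt{n(1-p_2)}\,\E\|\h_k^*\|$.

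Finally, I use the hypothesis $\|\nabla f(\x_i)\| \ge \varepsilon$ together with $1-\sqrt{1-\tau/n} \ge \tau/(2n)$ and sum across $k = 0,\dots,K-1$ to cancel the $\|\nabla f(\x_k)\|$ residual. Applying Hölder's and Jensen's inequalities ($\E\|\h_k^*\|^p \le (\E\|\h_k^*\|^3)^{p/3}$ for $p\in\{1,2\}$, and $\sum_k a_k^{p/3} \le K^{1-p/3}(\sum_k a_k)^{p/3}$) together with the telescoped budget $\sum_k \E\|\h_k^*\|^3 \le 12\Delta/M$ where $\Delta := f(\x_0) - f^{\star}$, this yields the master inequality $\tfrac{\tau\varepsilon}{2n}K \lesssim M^{1/3}\Delta^{2/3}K^{1/3} + L_1\sqrt{n(1-p_2)}\,M^{-1/3}\Delta^{1/3}K^{2/3}$. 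Choosing $M = \Theta(L_2)$ and solving in the regime where the first term on the right dominates gives $K \lesssim (n/\tau)^{3/2}\sqrt{L_2}\,\Delta/\varepsilon^{3/2}$; balancing $M$ so that the second term dominates (or equivalently analyzing that regime separately) gives $K \lesssim (n/\tau)^2 L_1\sqrt{n(1-p_2)}\,\Delta/\varepsilon^{2}$. Summing the two contributions reproduces~\eqref{FixedNewtonComplexity}.

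The main obstacle is the coupled cross-term $\E[\|\nabla^2 f(\x_k) - \nabla^2 f(\x_k)_{[S_k]}\|\cdot\|\h_k^*\|]$: both factors depend on the same random $S_k$, so one has to combine Lemma~\ref{lemma:concentration_bounds_gradient_norm_and_hessian} with Hölder's inequality carefully to preserve the $\sqrt{n(1-p_2)}$ factor that ultimately yields the second term in~\eqref{FixedNewtonComplexity}. Equally delicate is extracting the contraction factor $\tau/(2n)$ by telescoping the inequality across iterations — rather than treating the residual $\sqrt{1-\tau/n}\,\|\nabla f(\x_k)\|$ pointwise, which would only give loose $n/\tau$-dependence — since this is what forces the exact $(n/\tau)^{3/2}$ and $(n/\tau)^2$ powers in the two terms of the final bound.
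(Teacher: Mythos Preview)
Your approach is essentially correct and arrives at the claimed bound, but it differs from the paper's argument in a couple of interesting ways, and one step needs tightening.

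\textbf{Comparison with the paper.} The paper does \emph{not} telescope a linear recursion in $\E\|\nabla f(\x_k)\|$ and then invoke H\"older on $\sum_k\|\h_k^*\|^p$ as you do. Instead, it raises the per-step inequality to the $3/2$ power immediately, using a convexity splitting with a free parameter $\lambda_*\in(0,1)$:
\[
(\E\|\nabla f(\x_{k+1})\|)^{3/2}\;\le\;\tfrac{1}{\lambda_*^{1/2}}\bigl[1-\tfrac{\tau}{n}\bigr]^{3/4}(\E\|\nabla f(\x_k)\|)^{3/2}+\text{terms in }\|\h_k^*\|^3,
\]
and then combines this directly with Lemma~\ref{lemma:model_decrease}. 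The paper also applies Young's inequality up front to decouple the Hessian cross-term into a constant $\tfrac{(1-p_2)\|\nabla^2 f(\x_k)\|_F^2}{2M}$ plus $\tfrac{M}{2}\E\|\h_k^*\|^2$, rather than carrying an $\|\h_k^*\|$ term to the end. Both routes yield the same complexity; yours is somewhat more elementary (no $\lambda_*$ bookkeeping), while the paper's makes the per-step progress inequality self-contained. Your coordinate-block decomposition into $S_k$ and $S_k^c$, exploiting that $[\nabla^2 f(\x_k)-\nabla^2 f(\x_k)_{[S_k]}]\h_k^*$ lives on $S_k^c$, is correct and a nice sharpening of the plain triangle inequality the paper uses.

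\textbf{Two points to fix.} First, Cauchy--Schwarz on the coupled term gives $L_1\sqrt{n(1-p_2)}\,(\E\|\h_k^*\|^2)^{1/2}$, not $L_1\sqrt{n(1-p_2)}\,\E\|\h_k^*\|$; this is harmless since both are bounded by $(\E\|\h_k^*\|^3)^{1/3}$ via Jensen, but as written the inequality is in the wrong direction. Second, and more importantly, your final step is inconsistent: you cannot choose $M=\Theta(L_2)$ for the first regime and then ``balance $M$'' separately for the second---a single $M$ must work for both. If $M=\Theta(L_2)$, the second regime gives $K\lesssim(n/\tau)^3 L_1^3 n^{3/2}(1-p_2)^{3/2}\Delta/(L_2\varepsilon^3)$, strictly worse than the claim. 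The fix (which is exactly the paper's choice, up to constants) is the single $\varepsilon$-dependent value
\[
M \;\asymp\; L_2 \;+\; \frac{n}{\tau}\cdot\frac{L_1^2\, n(1-p_2)}{\varepsilon},
\]
after which $M^{1/2}\le L_2^{1/2}+\sqrt{(n/\tau)L_1^2 n(1-p_2)/\varepsilon}$ and $M^{-1}\le\bigl[(n/\tau)L_1^2 n(1-p_2)/\varepsilon\bigr]^{-1}$ split the two contributions exactly into the two terms of~\eqref{FixedNewtonComplexity}. With that correction your argument goes through.
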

We see that according to this result, SSCN achieves any desirable accuracy for the gradient norm after a finite number of iterations,
starting from an arbitrary initialization $\x_0$. The size of the stochastic subspaces $\tau = \tau(S_k)$ can be \textit{arbitrary}. Note that $p' = (\tfrac{\tau}{n})^2$ increases with increasing $\tau$ and we recover the rate of the full Cubic Newton for $\tau = n$. 
If the cubic subproblem is solved exactly in each iteration (with the cost of $\mathcal{O}(\tau^3))$,
then the total computational complexity is
 $\mathcal{O}
\Bigl(
(\tau n)^{3/2}
\frac{\sqrt{L_2}(f(\x_0) - f^{\star})}{\varepsilon^{3/2}}
+ 
(1 - p')^{1/2}
 \tau n^{5/2} \frac{L_1 (f(\x_0) - f^{\star}}{\varepsilon^2}
\Bigr)$. Note that the global computational complexity for CR is $\mathcal{O}
\Bigl(
n^3
\frac{\sqrt{L_2}(f(\x_0) - f^{\star})}{\varepsilon^{3/2}} \Bigr)$. 
Therefore, we see that SSCN is already \textit{strictly better} than CR for
$\tau (1 - p')^{1/2} < (\varepsilon n L_2)^{1/2} / L_1$.
Note that in practice we can already observe speed-ups of SSCN for smaller problem dimensions.
We also see that the second term in~\eqref{FixedNewtonComplexity}
matches the complexity of CD, up to the factor $n^{3/2}(1 - p')^{1/2} / \tau$, which tends to $0$ when $\tau \to n$.
Lastly, we highlight that the final complexity of SSCN
with constant sampling can be taken as the minimum of
both~\eqref{FixedGeneralCompl} and \eqref{FixedNewtonComplexity},
thus
achieving the best of these two bounds.

\subsection{Adaptive sampling scheme}
\label{sec:adaptive_sampling}
In the following, we present a scheme to sample the number of coordinates at each iteration that yields 
even faster convergence to a \textit{second-order stationary point} up to an arbitrary precision. This scheme is \emph{adaptive} in the sense that it depends on the gradient and Hessian measured at each iteration. We measure optimality using the standard first and second-order criticality measures $\| \nabla f(\x) \|$ and $\lambda_{\min}(\nabla^2 f(\x))$ (the minimum eigenvalue of the Hessian matrix). To do so, we introduce the following quantity~\citep{nesterov2006cubic}:
\begin{equation*}
\ba{rcl}
\mu(\x) & = & \max \left\{ \| \nabla f(\x) \|^{3/2}, \, [-\lambda_{\min}(\nabla^2 f(\x))]^3 \right\}.
\ea
\end{equation*}
Given $\epsilon_1 > 0$ and $\epsilon_2 > 0$, we would like to have with probability at least $1-\delta$
\begin{align} 
 \label{GradHessTight}
\ba{rcl}
\| \nabla f(\x_k) - \nabla f(\x_k)_{[S_k]} \| &\!\!\! \leq  \delta^{-1}\epsilon_1, \\
\| \nabla^2 f(\x_k) - \nabla^2 f(\x_k)_{[S_k]} \|  &\leq  \delta^{-1}\sqrt{\epsilon_2}.
\ea
\end{align}
\begin{figure}[tbh]
    \centering
        \begin{tabular}{cc}
     \includegraphics[width=0.23\textwidth]{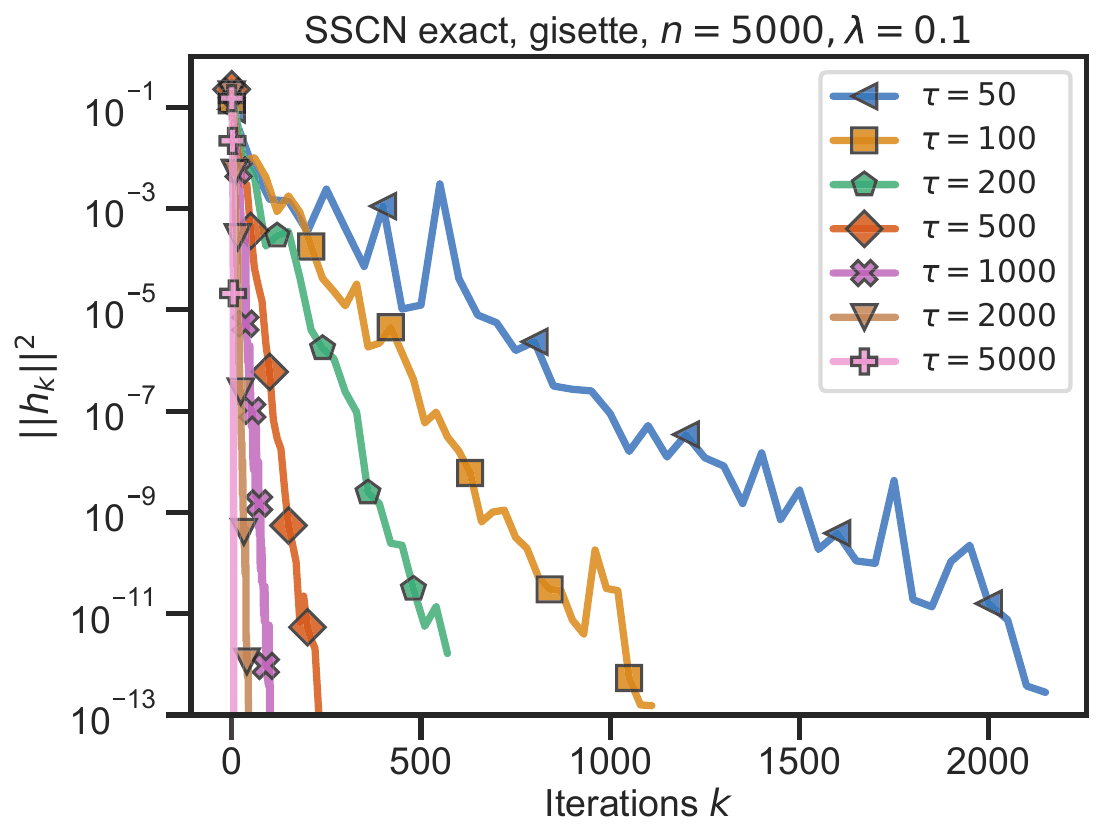}
    \includegraphics[width=0.23\textwidth]{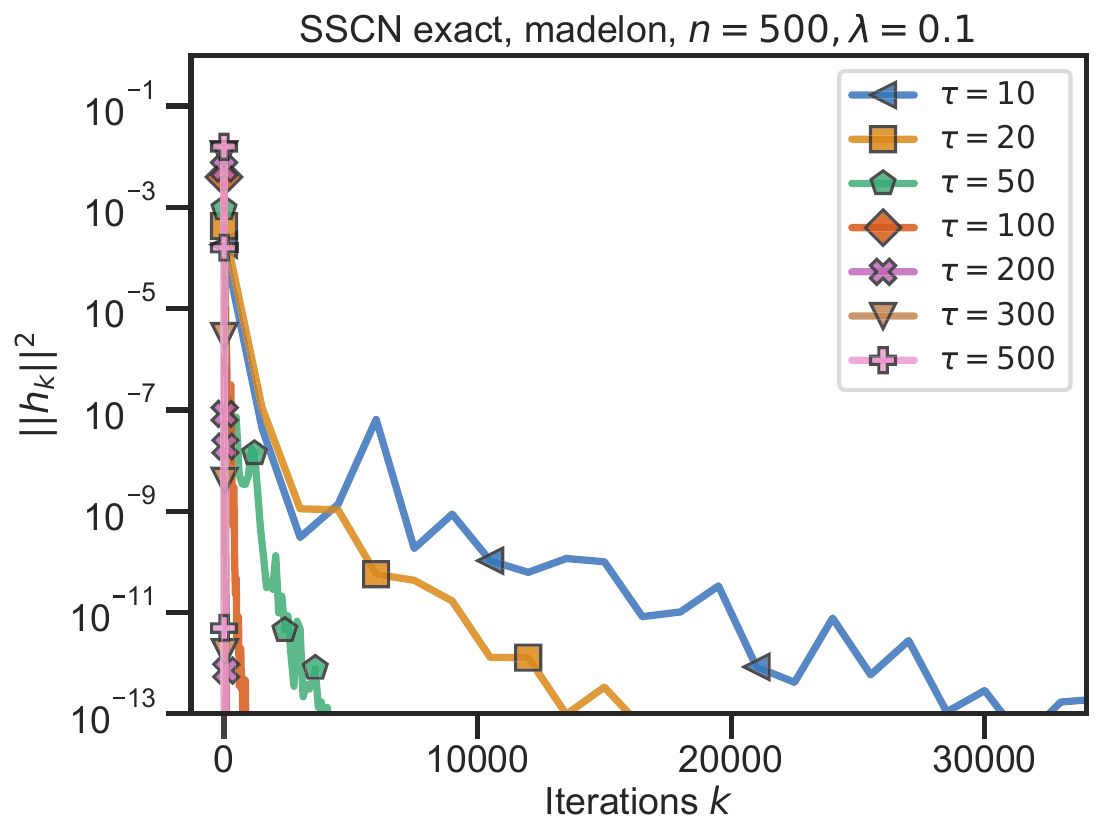}
        \end{tabular}
        
    \caption{Squared norm of the step  $\h_k$ for different constant coordinate schedules for logistic regression with non-convex regularization with $\lambda = 0.1$ for two different datasets (left: \textit{gisette}, right: \textit{madelon}). For the same plot for the  \textit{duke} dataset, see \cref{fig:logistic_regression_norm_h_k2_duke}. Note that the y-axis is plotted in log-scale.}
\label{fig:logistic_regression_norm_h_k2}
\end{figure}

Taking into account Lemma~\ref{lemma:concentration_bounds_gradient_norm_and_hessian},
to ensure~\eqref{GradHessTight}, it is enough to choose $\tau(S_k)$ such that
$$
\ba{rcl}
\sqrt{1 - \frac{\tau(S_k)}{n}} = \frac{\delta^{-1} \epsilon_1}{\| \nabla f(\x_k) \|}
\quad \text{and} \quad  
\sqrt{1 - p'}  =  \frac{\delta^{-1} \sqrt{\epsilon_2}}{\| \nabla^2 f(\x_k) \|_F}.
\ea
$$
Putting everything together, we obtain the following condition for our adaptive scheme 
at iteration $k \geq 0$:
\beq \label{eq:adaptive_tau}
\ba{rcl}
  \!\!\!\!\!\frac{\tau(S_k)}{n} &
  \!\!\!\!\!\geq\!\!\!\!\! & 
  \max \Bigl\{ 1  -  \frac{\delta^{-2}\epsilon_1^2}{\| \nabla f(\x_k) \|^2}, 
  \sqrt{1 -  \frac{\delta^{-2}\epsilon_2}{\| \nabla^2 f(\x_k) \|_F^2 }} \Bigr\}.
\ea
\eeq
Next, we will demonstrate that choosing $\epsilon_1 = \epsilon_2 = c_{k-1} \cdot \| \h_{k-1}^* \|^2$, for a given sequence $(c_k)$, allows us to recover the convergence rate of cubic regularization. Importantly, Theorem~\ref{thm:convergence_||s||^2} permits the choice of an arbitrary sequence $(c_k)$ enabling the adjustment of the number of coordinates as a function of the iteration $k$. Section~\ref{sec:experiments} will illustrate that this flexibility leads to an effective strategy, resulting in substantial speed-ups.

\begin{restatable}{theorem}{MainConvergence}
\label{thm:convergence_||s||^2}
Consider the sequence $\{\x_k\}_{k=0}^K$ generated by $\x_{k+1} = \x_k +  \h_k^*$ where $\tau$ satisfies Eq.~\eqref{eq:adaptive_tau} with $\epsilon_1 = \epsilon_2 = c_{k-1} \cdot \| \h_{k-1}^* \|^2$ for some $c_{k-1} > 0,\; \forall k=0,\hdots,K$. Let $M \geq L_2$.
Let $\Delta_0 = f(\x_0) - f^*$, and define the following constants (dependent on $M$):
\begin{align*}
  C_M &= \left((2M+1)^{3/2} + (4 \delta^{-3/4} \!+\! \sqrt{2} \delta^{-3/2}) \max_i c_i^{3/2} \right)^{-1}, \\
    D_M &= \left( \frac{27 M^3}{2} +  4\delta^{-3/2} \max_i c_i^{3/2} \right)^{-1}.  
\end{align*}
Then with probability at least $1-\delta$, we have 
\begin{equation*}
\ba{rcl}
\min\limits_{1 \leq j \leq K} \mu(\x_j) 
&\leq & 
\frac{1}{K} \Big( \frac{6}{M} \max (C_M^{-1}, D_M^{-1}) \Delta_0 \\
        && + \, 4 \delta^{-3/2} c_{-1}^{3/2} \| \h_{-1}^* \|^3 \Big),
\ea
\end{equation*}
where $\h_{-1}^*$ is such that \\
$\E_{[S]} \| \nabla^2 f(\x_0) - \nabla^2 f(\x_0)_{[S]} \| \leq \sqrt{c_{-1}} \| \h_{-1}^* \|$.

\end{restatable}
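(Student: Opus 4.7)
My strategy is to bound $\mu(\x_{k+1})$ pointwise by a multiple of $\|\h_k^*\|^3$ plus error terms driven by the \emph{previous} step's size, sum over $k$, and telescope the $\|\h_k^*\|^3$ sum via Lemma~\ref{lemma:model_decrease} to $(12/M)\Delta_0$, before invoking $\min_{j}\mu(\x_j)\le\tfrac{1}{K}\sum_j\mu(\x_j)$.

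\textbf{Pathwise bounds and passage to $\mu$.} First I would write the first- and second-order optimality conditions of the cubic subproblem restricted to $S_k$; since $\h_k^*$ is supported on $S_k$ these read $\nabla f(\x_k)_{[S_k]} + \nabla^2 f(\x_k)_{[S_k]}\h_k^* + \tfrac{M}{2}\|\h_k^*\|\h_k^* = \mathbf{0}$ and $\nabla^2 f(\x_k)_{[S_k]} + \tfrac{M}{2}\|\h_k^*\|\Im \succeq \mathbf{0}$ on $S_k$. Combined with the gradient form of~\eqref{eq:global_bound_tayler_expansion}, $\|\nabla f(\x_{k+1}) - \nabla f(\x_k) - \nabla^2 f(\x_k)\h_k^*\|\le\tfrac{L_2}{2}\|\h_k^*\|^2$, and the Weyl-type inequality $\nabla^2 f(\x_{k+1}) \succeq \nabla^2 f(\x_k) - L_2\|\h_k^*\|\Im$ (from Assumption~\ref{assumption:lipschitz_hessian}), I obtain the pathwise estimates
\begin{equation*}
\|\nabla f(\x_{k+1})\| \le \tfrac{L_2+M}{2}\|\h_k^*\|^2 + \|\g_k^{\err}\| + \|\Hm_k^{\err}\|\,\|\h_k^*\|, \qquad -\lambda_{\min}(\nabla^2 f(\x_{k+1})) \le \bigl(\tfrac{M}{2}+L_2\bigr)\|\h_k^*\| + \|\Hm_k^{\err}\|,
\end{equation*}
where $\g_k^{\err}:=\nabla f(\x_k)-\nabla f(\x_k)_{[S_k]}$ and $\Hm_k^{\err}:=\nabla^2 f(\x_k)-\nabla^2 f(\x_k)_{[S_k]}$. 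Raising the first to the $3/2$-power, the second to the cube, using $(a+b+c)^p\le 3^{p-1}(a^p+b^p+c^p)$ and Young's inequality on the mixed term $\|\Hm_k^{\err}\|\|\h_k^*\|$, then taking the max over the two, yields
\begin{equation*}
\mu(\x_{k+1}) \;\le\; A_M\|\h_k^*\|^3 + B_M\|\g_k^{\err}\|^{3/2} + \Gamma_M\|\Hm_k^{\err}\|^3,
\end{equation*}
with explicit constants $A_M, B_M, \Gamma_M$ matching the $(2M+1)^{3/2}$ and $27M^3/8$ terms that appear in $C_M^{-1}$ and $D_M^{-1}$.

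\textbf{Adaptive sampling, Markov, and telescoping.} The adaptive rule~\eqref{eq:adaptive_tau} with $\epsilon_1=\epsilon_2=c_{k-1}\|\h_{k-1}^*\|^2$, combined with Lemma~\ref{lemma:concentration_bounds_gradient_norm_and_hessian}, gives $\E\|\g_k^{\err}\|\le c_{k-1}\|\h_{k-1}^*\|^2$ and $\E\|\Hm_k^{\err}\|\le\sqrt{c_{k-1}}\|\h_{k-1}^*\|$. A single Markov application transfers these to high-probability bounds of the form $\|\g_k^{\err}\|^{3/2}\le \delta^{-3/2}c_{k-1}^{3/2}\|\h_{k-1}^*\|^3$ and $\|\Hm_k^{\err}\|^3\le \delta^{-3}c_{k-1}^{3/2}\|\h_{k-1}^*\|^3$. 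Substituting into the per-step estimate for $\mu$, summing for $k=0,\ldots,K-1$, using $\sum_{k}\|\h_k^*\|^3\le\tfrac{12}{M}\Delta_0$ by Lemma~\ref{lemma:model_decrease}, and finally $\min_{1\le j\le K}\mu(\x_j)\le\tfrac{1}{K}\sum_{j=1}^K\mu(\x_j)$, produces exactly the stated bound. The boundary contribution $\delta^{-3}c_{-1}^{3/2}\|\h_{-1}^*\|^3$ arises from the $k=0$ term, whose sampling error is not controlled by an actual previous step but by the bootstrap quantity $\h_{-1}^*$ introduced in the hypothesis.

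\textbf{Main obstacle.} The delicate point is the probabilistic passage: while Lemma~\ref{lemma:concentration_bounds_gradient_norm_and_hessian} only controls $\E\|\Hm_k^{\err}\|$, the per-step estimate on $\mu$ contains $\|\Hm_k^{\err}\|^3$, and a naive Markov-plus-union-bound over the $K$ iterations would inject a spurious $K^3$ into the final bound. The required trick is a single global application of Markov to a well-chosen aggregate (for instance $\sum_k\|\Hm_k^{\err}\|/(\sqrt{c_{k-1}}\|\h_{k-1}^*\|)$, whose expectation is at most $K$ by the adaptive normalization), so that the $\delta^{-3}$ factor appears exactly once---matching the $\delta^{-3}$ in $C_M^{-1}$ and $D_M^{-1}$---without any $K$-dependence leaking into the constants.
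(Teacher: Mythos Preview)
Your approach matches the paper's essentially step for step: the paper also derives the pathwise bounds you wrote via the first- and second-order optimality conditions of the subproblem (its Lemmas~\ref{lemma:bounding_full_grad_by_s_k} and~\ref{lemma:bounding_full_hessian_by_s_k}), raises them to the $3/2$- and cube-powers via Jensen, substitutes $\epsilon_1=\epsilon_2=c_{k-1}\|\h_{k-1}^*\|^2$, sums over $k$ (shifting the index so that the $\h_{-1}^*$ boundary term appears exactly as you describe), and telescopes $\sum_k\|\h_k^*\|^3$ via Lemma~\ref{lemma:model_decrease}. The only cosmetic difference is that the paper bounds $\|\nabla f(\x_{k+1})\|^{3/2}$ and $[-\lambda_{\min}(\nabla^2 f(\x_{k+1}))]^3$ in two separate chains and takes the max at the end, whereas you pass to $\mu$ immediately; either order works and produces the same constants $C_M,D_M$.

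Regarding your \textbf{Main obstacle}: you are right to flag it, but the paper does \emph{not} perform the aggregate-Markov trick you propose. It simply applies Markov per iteration inside Lemmas~\ref{lemma:bounding_full_grad_by_s_k} and~\ref{lemma:bounding_full_hessian_by_s_k} (producing the $\delta^{-2}$ and $\delta^{-1}$ factors there, which become $\delta^{-3}$ after the $3/2$- and cube-powers), then sums over $K$ and states the final result ``with probability at least $1-\delta$'' without an explicit union bound. So your instinct that something is being swept under the rug is well founded---just do not expect the paper to resolve it. Note also that your suggested fix, Markov on $\sum_k \|\Hm_k^{\err}\|/(\sqrt{c_{k-1}}\|\h_{k-1}^*\|)$, would not work as written: controlling the sum of these ratios does not control the sum of their cubes, which is what the per-step $\mu$-bound actually requires.
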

Theorem~\ref{thm:convergence_||s||^2} states that Algorithm~\ref{alg:SSCN} with an adaptive sampling scheme converges to an $\epsilon$-second-order stationary point at a rate of $\bigO(\epsilon^{-3/2}, \epsilon^{-3})$.

\begin{figure}[b!]
    \centering
    \begin{tabular}{cc}
        \includegraphics[width=0.24\textwidth]{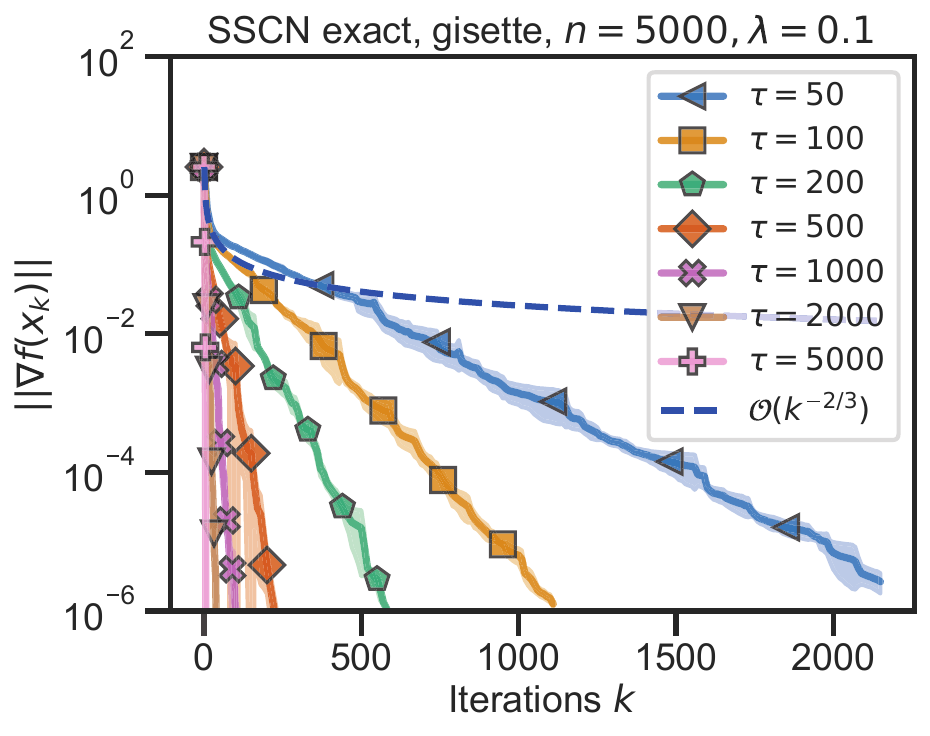}
        \includegraphics[width=0.24\textwidth]{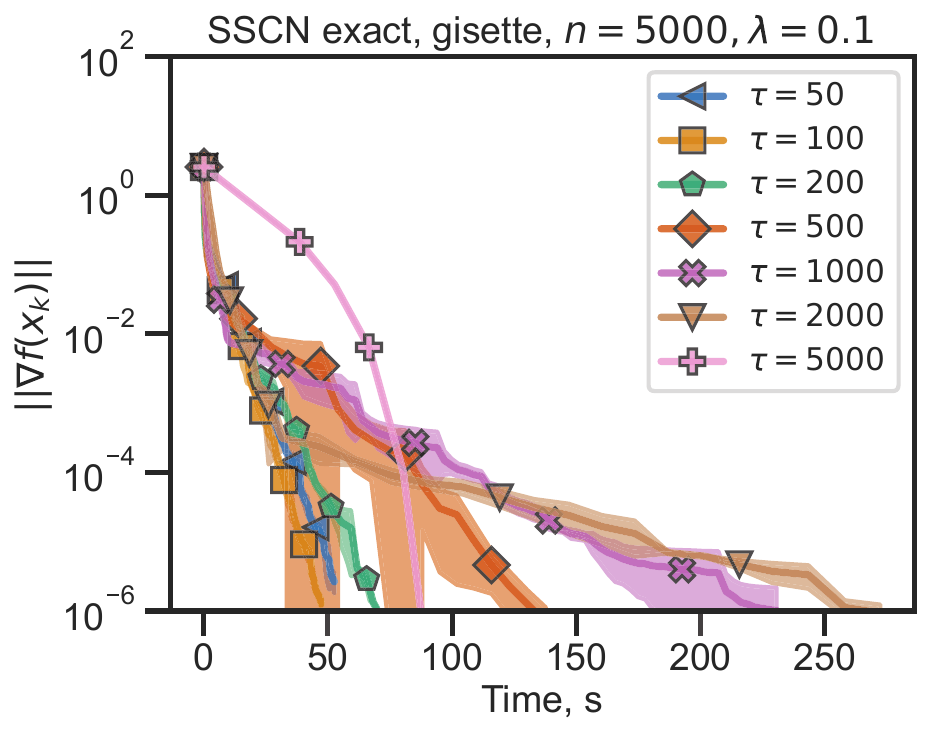}\\
        \includegraphics[width=0.25\textwidth]{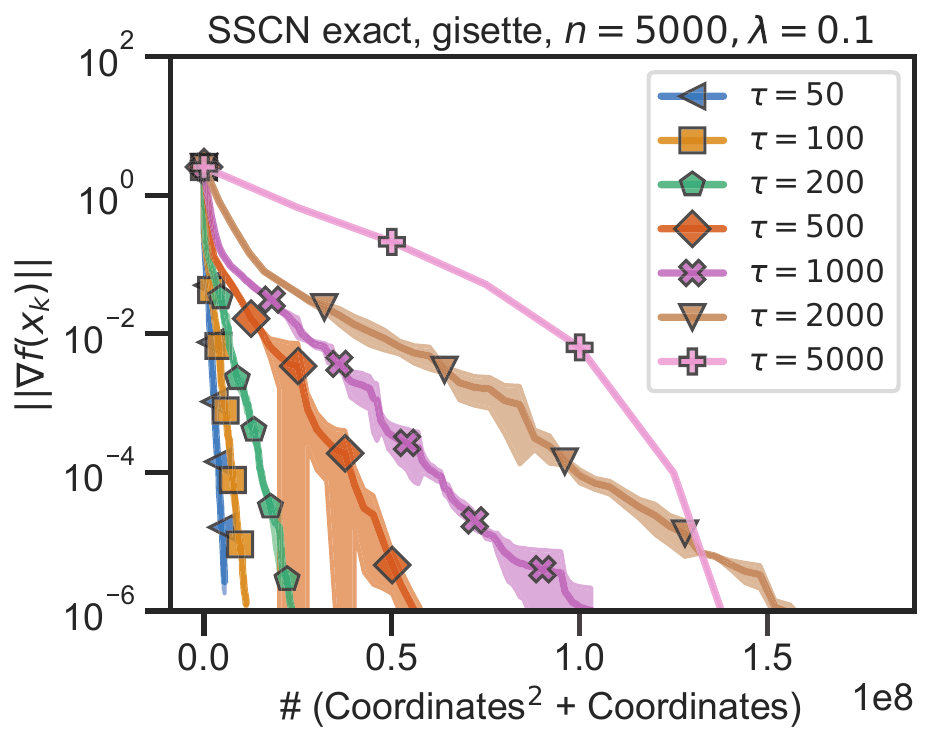}
    \end{tabular}

    \caption{Convergence of different constant coordinate schedules measured w.r.t. iterations (first column), time (second column) and \# (Coordinates$^2$ + Coordinates) evaluated (third column) averaged over three runs for logistic regression with non-convex regularization with $\lambda = 0.1$ for \textit{gisette} dataset. Similar plots for the \textit{duke} and \textit{madelon} datasets can be found in Fig.~\ref{fig:logistic_regression_nonconv_convergence_duke_madelon}.}
\label{fig:logistic_regression_nonconv_convergence_gisette}
\end{figure}

\paragraph{A practical scaling rule.}

The theoretical result of Theorem~\ref{thm:convergence_||s||^2} relies on using Eq.~\eqref{eq:adaptive_tau} which requires access to the gradient and Hessian norms. We note that one can use the estimates computed over a subset of coordinates $S \subset [n]$. We give further details regarding the validity of this scheme in the Appendix. Alternatively, we have discovered that a less complex schedule produces comparable results. Our starting point is the observation that $\| \h_{k-1}^* \|^2$ exhibits exponential growth on most datasets, as illustrated in Figure \ref{fig:logistic_regression_norm_h_k2}. We posit that this exponential increase allows us to employ a coordinate sampling schedule that also follows an exponential trend. By choosing a slowly growing constant for the exponential, we obtain significant speed-ups in practice. We substantiate the effectiveness of this straightforward approach in Appendix~\ref{subsec:constant_vs_exponential_schedule}.

\begin{table}[h]
    \centering
     \caption{Overview of the datasets used in the experimental section with non-convex regularizer $\lambda$.}
    \label{table:datasets_overview}
    \begin{tabular}{c|cccc}
         & \textbf{Type} & \textbf{n} & \textbf{\#samples} & \textbf{$\lambda$} \\
        \hline
        \textit{gisette} & Classification & 5.000 & 6.000  & $0.1$ \\
        \textit{duke} & Classification & 7.129 & 44  & $0.1$ \\
        \textit{madelon} & Classification & 500 & 2.000  & $0.1$ \\
        \textit{realsim} & Classification & 20.958 & 72.309 & $0.1$
    \end{tabular}
\end{table}

\section{EXPERIMENTS}
\label{sec:experiments}
We now verify our theoretical results numerically. Due to space limitations, we can only present a fraction of the experiments and refer the reader to the Appendix for the remaining experiments. We ran experiments with a logistic regression loss with a \textit{non-convex regularizer} $\lambda\cdot \sum_{i=1}^n \x_i^2/(1+\x_i^2)$~\citep{kohler2017sub} where $\lambda > 0$ controls the strength of the regularizer. An overview of the three datasets used in our experiments can be found in Table \ref{table:datasets_overview}. All runs were initialized in the origin at $\x_0 = \mathbf{0}$ and all the plots shown in this section are averaged over three runs. The shaded region corresponds to one standard deviation.
In the inner loop of Algorithm~\ref{alg:SSCN}, we solve the subproblem exactly up to a pre-specified tolerance of $1e^{-5}$. We use the exact same subsolver, which is discussed in section 3.3 in~\cite{kohler2017sub} \footnote{We used the implementation provided at \url{https://github.com/jonaskohler/subsampled_cubic_regularization/} under the Apache-2.0 license.}. The experiments were run on an Apple MacBook Pro with an Apple M1 Pro Chip and 16 GB RAM.

\textbf{Constant coordinate schedule.}
In Figure~\ref{fig:logistic_regression_nonconv_convergence_gisette} and~\ref{fig:logistic_regression_nonconv_convergence_duke_madelon} we plot the convergence of different constant coordinate schedules w.r.t. iterations, time and \# (coordinates$^2$ + coordinates). We chose the last measure to approximate the efficiency of the coordinate schedule, since the Hessian matrix scales quadratically and the gradient scales linearly with the number of sampled coordinates.
As expected, full cubic Newton is the fastest in terms of number of iterations and the fewer coordinates are sampled, the more iterations are required to reach the same gradient norm. 
However, in terms of average run time and number of evaluated coordinates, we observe that \textbf{smaller coordinate schedules are faster and more efficient up to some gradient norm} for the $\textit{gisette}$ and $\textit{duke}$ dataset, where the benefit is more pronounced if the problem is higher dimensional. We note that, even for the smaller madelon dataset, smaller coordinate schedules are more efficient w.r.t \# coordinate evaluations until $\|\nabla f(\x_k) \| \approx 1e^{-1}$.
This clearly underscores the potential for substantial computational savings through the utilization of a straightforward approach that samples a fixed number of coordinates. 
Next, we will compare our method to other first and second order subspace methods.

\textbf{Comparison to other methods.}
We compare SSCN to the random subspace regularized Newton method (RS-RNM) proposed by \citep{fuji2022randomized} for different constant coordinate schedules, which we believe is the fairest comparison to our method. Other inexact or stochastic Newton methods either study convex setups \citet{gower2019rsn, jiang2024krylov} or focus on settings, where the stochasticity arises from subsampling data points \citet{yao2018inexact, xu2020newton}, while the dimensionality of the problem remains unchanged. In comparison, \citep{fuji2022randomized} and our work both study subspace Newton methods in a smooth, nonconvex setting.
We also compare SSCN to a vanilla randomized coordinate descent (CD), where the step size $\eta$ was chosen via a line search procedure that guarantees that $f(x_{k+1})-f(x_k) \geq \frac{\eta}{2} \|\nabla f(x_k)\|^2_2$, which is the so-called Armijo condition~\citep{nocedal1999numerical}.

The results are shown in Figure~\ref{fig:logistic_regression_nonconv_CD_vs_SSCN_gisette} amd \ref{fig:logistic_regression_nonconv_CD_vs_SSCN_duke}. 
It can be seen that SSCN is much faster than RS-RNM on both datasets.
We attribute it to two different aspects:
1.) SSCN is more efficient as it does not need to build the full Hessian matrix before projecting it onto a random subspace. Consequently, the speed-up of RS-RNM compared to full space RNM is only w.r.t. the matrix inversion, which is performed in a lower dimension. This is particularly apparent on the \textit{gisette} dataset, which has 5000 coordinates. In contrast, SSCN only constructs the Hessian submatrix of size $\tau \times \tau$. This is why the runtime is much longer for RS-RNM, while the number of iterations for both methods is comparable.
2.) SSCN performs well on ill-conditioned problems, which is the case for the \textit{madelon} dataset (which has a condition number of the order of $1e7 \sim 1e8$), while RS-RNM seems to be affected by the condition number. For this reason, RS-RNM also needs more iterations to find a first-order stationary point.\\
For the \textit{duke} dataset, we observe that CD converges extremely fast in few iterations. Since the per-iteration cost of CD is much lower compared to SSCN, this results in much faster convergence for CD. However, CD converges very slowly to what seems to be a suboptimal solution for the \textit{madelon} dataset, while for the \textit{gisette} dataset the fastest schedule is clearly SSCN which only samples $2\%$ of the coordinates. This set of experiments highlights some interesting trade-offs between CD and SSCN. In the case of simpler objective functions, CD demonstrates a clear advantage. However, this advantage can diminish rapidly \textbf{when dealing with more intricate objective functions where SSCN tends to be more efficient.}
We believe that these experiments highlight that our proposed method is a competitive subspace Newton method in practice.

\section{LIMITATIONS}
\label{sec:limitation}

In terms of theory, we emphasize that the adaptive scheme studied in Section~\ref{sec:adaptive_sampling} is developed assuming we have access to the partial derivatives
of the expected objective function. Thus, if the dataset contains a large number of data points, one might need to resort to a stochastic approximation, which would require adapting the analysis to work with high probability. This could be an interesting venue for future work where one can use both sampling of coordinates and datapoints in the same algorithm.

Based on experimental observations (refer to, for example, \cref{fig:logistic_regression_nonconv_CD_vs_SSCN_gisette} and \cref{fig:logistic_regression_nonconv_CD_vs_SSCN_duke}), we have found that the comparative advantage of SSCN over first-order CD is heavily contingent upon the complexity of the objective function. When dealing with a well-conditioned loss function, the lower per-iteration cost of CD results in notably faster convergence in terms of wall clock time. However, in scenarios where the loss function is ill-conditioned, SSCN manages to converge while CD struggles to do so within a reasonable timeframe. Thus, there appears to be a significant interest in studying the intrinsic ill-conditioning aspect of contemporary machine learning models. This would allow us to better understand the applicability of coordinate methods for such models.

\paragraph{Extension to arbitrary subspaces}
While this work focuses on coordinate subspaces, we believe that our overall analysis should also work for arbitrary subspaces $S_k$. 
The key step is to have analogous inequalities to \cref{lemma:concentration_bounds_gradient_norm_and_hessian} when $S_k$ is an arbitrary subspace. In this case, we just need to substitute $\nabla f(x_k)|_{S_k} \in \R^\tau$ with the vector $\nabla f_{S_k}(x_k) := S_k^\top \nabla f(x_k)$,
and replace $\nabla^2 f(x_k)|_{S_k} \in \R^{\tau \times \tau}$ with the matrix $\nabla^2 f(x_k)_{S_k} := S_k^{\top} \nabla^2 f(x_k) S_k$,
where the 'thin' matrix $S_k \in \R^{n \times \tau}$ describes the subspace. In the case when $S_k$ is a submatrix of the identity matrix, it corresponds to the coordinate subspace and our current method. Then, we can use the analogous inequality to \cref{lemma:concentration_bounds_gradient_norm_and_hessian} as the definition of 'effective subspace size' $\tau$, and our results will remain valid. However, we would like to emphasize that arbitrary subspaces are not that important from a practical perspective, and coordinate selection is typically the only viable choice in practice. This is why we focused mainly on this case in the paper. For all other choices, it is also not clear how the gradients and Hessian should be estimated effectively. This still remains an open question.

\section{CONCLUSION}
We analyzed the convergence rate of SSCN for the class of twice differentiable non-convex functions. From a theoretical perspective, to the best of our knowledge, we are the first to prove the global convergence guarantees of the subspace Newton method for a wide family of non-convex problems, ensuring that 
\begin{enumerate}
    \itemsep-0.1mm 
    \item for an arbitrary fixed $\tau$ our global rates interpolate between the rate of coordinate descent and the cubic Newton rate and recover the cubic Newton rate for full coordinate sampling and
    \item by using our novel adaptive sampling scheme for $\tau$, SSCN achieves rates that are provably better than previously known. 
\end{enumerate}
Furthermore, we have observed empirically that a more straightforward exponential schedule produces favorable results. 
Overall, our experiments demonstrated that one can sample a fraction of the coordinates while observing fast convergence. This results in significant computational gains compared to the vanilla cubic regularization algorithm. There are various interesting extensions to consider such as the use of importance sampling and combining coordinate sampling with datapoint sampling in the case of finite-sum objective functions.

\section*{Acknowledgments}

ND is supported by the Swiss State Secretariat for
Education, Research and Innovation (SERI) under contract number 22.00133.

\newpage

\bibliographystyle{plainnat}
\bibliography{main}

\newpage

\appendix
\onecolumn
\textbf{\Large APPENDIX}

{
  \tableofcontents
}

\newpage

\section{PROOF OF AUXILIARY LEMMAS}

\subsection{Model optimality}
\label{SectionAppendixOptimality}

As mentioned earlier, in our analysis, we will assume that each iteration 
is performed by computing $\h^*_k = \arg\min_{\h \in \R^n} \bigl[ 
\bar{m}(\h) := \bar{m}_{\x_k, \Qm_k, S_k, M}(\h) \bigr]$
exactly, and then we set $\x_{k + 1} = \x_k + \h_k^*$. Of course, in practice the algorithm needs to solve
the subproblem only along coordinates in $S_k$, which has dimensionality $\tau$.
Note that in general (for non-convex objective functions), the model $\bar{m}(\h)$ can also be non-convex.
However, its \textit{global minimum} is always well-defined and can be found efficiently by employing
standard techniques from linear algebra (see discussion in the end of the previous section).
Prior work~\citep{cartis2011adaptive} has shown that it is possible to retain the remarkable properties of the cubic regularization algorithm with an inexact model minimizer. For our purpose, 
we will simply rely on first-order and second-order optimality, i.e. 
$\nabla \bar{m}(\h^*_k) = 0$ and $\nabla^2 \bar{m}(\h_k^*) \succeq 0$, which respectively imply:
\beq \label{eq:first_order_optimality}
\ba{rcl}
    \nabla f(\x_k)_{[\SM_k]} + {\Qm_k}_{[\SM_k]} \h^*_k + \frac{M}{2} \|\h^*_k\| \cdot \h^*_k & = & 0, \\[10pt]
    {\Qm_k}_{[\SM_k]}
    + \frac{M}{2} \|\h^*_k\| \cdot \Im + \frac{M}{2 \| \h^{*}_k \|} \h^*_k (\h^*_k)^\top & \succeq & 0.
\ea
\eeq

Based on these optimality properties, we can establish the following lemmas, which will serve as the foundational bases for our main convergence theorem.

\begin{restatable}{proposition}{SecondOrderOptimality}
\label{prop:second_order_optimality}
For all global minimizers $\h^*_k$ of $\bar{m}(\h) = \bar{m}_{\x_k, \Qm_k, \SM_k, M}(\h)$ over $\R^n$ it holds that 

\beq \label{eq:pos_semidefiniteness_of_expression}
\ba{rcl}
    {\Qm_k}_{[\SM_k]} + \frac{M}{2} \| \h^*_k \| \cdot \Im & \succeq & 0,
\ea
\eeq
where $\Im$ denotes the identity matrix.
\end{restatable}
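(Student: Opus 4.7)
The plan is to reduce the $n$-dimensional claim to a $\tau$-dimensional cubic-subproblem statement and then apply the classical global-optimality argument.

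First I would observe that $\bar m$ depends on $\h$ only through $\h_{[S_k]}$, so any global minimizer can be taken (without loss of generality) to be supported on $S_k$; nonzero components outside $S_k$ only increase $\|\h^*_k\|$ and hence make $\Qm_{k,[S_k]}+\frac{M}{2}\|\h^*_k\|\Im$ only more positive semidefinite. With $\h^*_k$ supported on $S_k$, that matrix is block diagonal, with a trivially PSD block $\frac{M}{2}\|\h^*_k\| I_{n-\tau}$ on $S_k^c\times S_k^c$, so the claim reduces to showing $A\succeq 0$ in $\R^{\tau\times\tau}$, where $A := Q + \tfrac{Mr}{2}\,I_\tau$, $Q := \Qm_k|_{S_k}$, and $r := \|\h^*_k|_{S_k}\|$. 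Writing $g:=\nabla f(\x_k)|_{S_k}$ and $\h^*:=\h^*_k|_{S_k}$, the first-order condition in \eqref{eq:first_order_optimality} becomes $A\h^*=-g$.

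The key estimate comes from \emph{global} optimality applied to perturbations on the sphere of radius $r$. The subproblem in $\R^\tau$ has the form $h\mapsto \langle g,h\rangle+\tfrac12 h^\top Q h+\tfrac{M}{6}\|h\|^3$, so for any $y\in\R^\tau$ with $\|y\|=r$ the cubic term cancels in the global-minimum inequality, leaving
$$\langle g,\h^*-y\rangle + \tfrac12\bigl(\h^{*\top}Q\h^* - y^\top Q y\bigr) \;\leq\; 0.$$
Substituting $g=-A\h^*$, expanding the cross terms, and using the identity $r^2-\h^{*\top}y=\tfrac12\|\h^*-y\|^2$ (valid because $\|\h^*\|=\|y\|=r$), the whole expression rearranges into a single quadratic form, yielding
$$(\h^*-y)^\top A\,(\h^*-y) \;\geq\; 0 \qquad\text{for every } y\in\R^\tau \text{ with } \|y\|=r.$$

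To turn this into $A\succeq 0$, given any unit vector $w\in\R^\tau$ I would choose the reflection $y:=\h^*-2(\h^{*\top}w)\,w$, which automatically satisfies $\|y\|=r$ and $\h^*-y=2(\h^{*\top}w)\,w$. Substituting gives $4(\h^{*\top}w)^2\,w^\top A w\geq 0$, and hence $w^\top A w\geq 0$ for every $w$ not orthogonal to $\h^*$. Since the set of such $w$ is dense on the unit sphere and $w\mapsto w^\top Aw$ is continuous, we conclude $A\succeq 0$.

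The main obstacle is conceptual rather than computational: the pointwise second-order optimality condition in \eqref{eq:first_order_optimality} only delivers $A+\tfrac{M}{2r}\h^*(\h^*)^\top\succeq 0$, which is strictly weaker than $A\succeq 0$ in directions not orthogonal to $\h^*$. The argument must therefore exploit \emph{global} optimality, and the genuinely non-obvious ingredient is the sphere-reflection choice $y=\h^*-2(\h^{*\top}w)w$: it turns a spherical comparison into an inequality that probes $A$ along an arbitrary direction $w$.
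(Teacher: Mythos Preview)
Your argument is correct and follows essentially the same route as the paper (which adapts Cartis--Gould--Toint, Theorem~3.1): both compare $\h^*$ with another point on the sphere $\|y\|=r$ via global optimality, and your reflection $y=\h^*-2(\h^{*\top}w)w$ is precisely the second intersection point $\u^*_k$ of the line $\h^*+\alpha w$ with that sphere that the paper constructs. The only omission is the degenerate case $\h^*=0$, where your density argument is vacuous; the paper dispatches it in one line via the second-order necessary condition at the origin, and you should too.
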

\begin{proof}
    This proof follows closely the proof of Theorem 3.1 in \cite{cartis2011adaptive}.
    From the second-order necessary optimality conditions at $\h^*_k$ we have

    $$
    \ba{rcl}
        \left< \left( {\Qm_k}_{[\SM_k]}  
        + \frac{M}{2} \|\h^*_k\| \cdot \Im 
        + \frac{M}{2 \| \h^*_k \| } \h^*_k (\h^*_k)^\top \right) \w, \w \right> 
        & \geq & 0 
    \ea
    $$
    for all vectors $\w \in \R^n$.

    If $\h^*_k = 0$, we immediately have the result. Thus, we only need to consider $\h^*_k  \neq 0$. 
   There are two cases to analyse. Firstly, suppose that $\la \w, \h^*_k \ra$ = 0. 
   Then it immediately follows  
   \beq
   \label{eq:w^Ts_eq_0}
   \ba{rcl}
       \left< \left( {\Qm_k}_{[\SM_k]}
       + \frac{M}{2} \|\h^*_k\| \cdot \Im \right) \w, \w \right>
       & \geq & 0 
       \qquad 
       \text{for all } \w \in \R^n \text{ s.t. } \la \w, \h^*_k \ra = 0.
   \ea
   \eeq
   It remains to consider vectors $\w$ for which $\la \w,  \h^*_k \ra \neq 0$. Since $\w$ and $\h^*_k$ are not orthogonal, the line $\h^*_k + \alpha \w$ intersects the ball of radius $\|\h^*_k\|$ at two points, $\h^*_k$ and $\u^*_k \neq \h^*_k$, and thus 
   \beq
    \label{eq:u_star_norm_=_y_star_norm}
   \ba{rcl}
       \| \u^*_k\| & = & \|\h^*_k\|.
   \ea
   \eeq
   Let $\w^*_k = \u^*_k - \h^*_k$, and note that $\w^*_k$ is parallel to $\w$, thus
   $\w = \beta \w^*_k$ for some $\beta \not= 0$.
   Since $\h^*_k$ is a global minimizer of $\bar{m}(\h)$, we have that 
   \beq \label{eq:second_order_cond}
   \ba{rcl}
       0 &\leq&  \bar{m}(\u^*_k) - \bar{m}(\h^*_k) \\
       \\
        &=& \langle \nabla f(\x_k)_{[S_k]}, (\u^*_k - \h^*_k) \rangle  
        + \frac{1}{2} \langle {\Qm_k}_{[\SM_k]} \u^*_k, \u^*_k \rangle 
        - \frac{1}{2} \langle {\Qm_k}_{[\SM_k]}  \h^*_k, \h^*_k \rangle \\
        \\
        & & \qquad 
        + \; \frac{M}{6} (\| \u^*_k \|^3 - \| \h^*_k \|^3) \\
        \\
         &\overset{\eqref{eq:u_star_norm_=_y_star_norm}}{=}&
         
         \langle \nabla f(\x_k)_{[S_k]}, (\u^*_k - \h^*_k) \rangle  
         + \frac{1}{2} \langle {\Qm_k}_{[\SM_k]} \u^*_k, \u^*_k \rangle 
         - \frac{1}{2} \langle {\Qm_k}_{[\SM_k]}  \h^*_k, \h^*_k \rangle.
    \ea
    \eeq
   But \eqref{eq:first_order_optimality} gives that
   \beq \label{eq:first_order_cond}
    \ba{rcl}
         \langle \nabla f(\x_k)_{[S_k]}, (\u^*_k - \h^*_k) \rangle 
         & = & \langle {\Qm_k}_{[\SM_k]} \h^*_k, (\h^*_k - \u^*_k) \rangle 
            + \frac{M}{2} \|\h^*_k\| \langle \h^*_k - \u^*_k, \h^*_k \rangle.
   \ea
   \eeq

   Further note that from \eqref{eq:u_star_norm_=_y_star_norm} it follows that 
   \beq \label{eq:w^*_norm}
   \ba{rcl}
       \langle \h^*_k - \u^*_k, \h^*_k \rangle 
       & = & \frac{1}{2} \la \h^*_k, \h^*_k \ra 
        + \frac{1}{2} \la \u^*_k, \u^*_k \ra 
        - \la \u^*_k, \h^*_k \ra 
       \;\; = \;\; \frac{1}{2} \langle \w^*_k, \w^*_k \rangle.
   \ea
   \eeq
   
   Now plugging \eqref{eq:w^*_norm} in \eqref{eq:first_order_cond}, and then plugging it in \eqref{eq:second_order_cond} we get 
   \beq \label{eq:w^Ts_neq_0}
   \ba{rcl}
    0 &\leq& 
    \frac{M}{4} \| \h^*_k \| \langle \w^*_k, \w^*_k \rangle 
    + \frac{1}{2}  \langle {\Qm_k}_{[\SM_k]} \h^*_k, \h^*_k \rangle 
    - \langle {\Qm_k}_{[\SM_k]} \h^*_k, \u^*_k \rangle \\
    \\
    & & \qquad 
    + \; \frac{1}{2} \langle {\Qm_k}_{[\SM_k]} \u^*_k, \u^*_k \rangle  \\
    \\
    &=&  
    \frac{M}{4} \| \h^*_k \| \langle \w^*_k, \w^*_k \rangle 
        + \frac{1}{2}\langle {\Qm_k}_{[\SM_k]} (\u^*_k - \h^*_k), (\u^*_k - \h^*_k) \rangle \\
    \\
    &\overset{\w_k^* = \u_k^* - \h_k^*}{=}& 
    \frac{1}{2}\left\langle \left( {\Qm_k}_{[\SM_k]} 
    + \frac{M}{2} \|\h^*_k\| \cdot \Im \right)(\u^*_k - \h^*_k), (\u^*_k - \h^*_k)\right\rangle \\
    \\
    &=&
    \frac{1}{2\beta}\left\langle \left( {\Qm_k}_{[\SM_k]} 
    + \frac{M}{2} \|\h^*_k\| \cdot \Im \right)\w, \w\right\rangle.
   \ea
   \eeq
   Therefore, it holds for any $\w \in \R^n$ such that $\la \w, \h^*_k \ra \neq 0$.
   Finally we can conclude from \eqref{eq:w^Ts_eq_0} and \eqref{eq:w^Ts_neq_0} that 
   $$
   \ba{rcl}
       {\Qm_k}_{[\SM_k]} + \frac{M}{2} \|\h^*_k\| \cdot \Im & \succeq & 0,
   \ea
   $$
   which completes the proof.
\end{proof}

\begin{restatable}{lemma}{GradTimesS}
    For all global minimizers $\h^*_k$ of $\bar{m}(\h) = \bar{m}_{\x_k, \Qm_k, \SM_k, M}(\h)$ over $\R^n$ it holds that 
    \beq \label{eq:grad_x_times_s^*_nonpos}
    \ba{rcl}
        \langle \nabla f(\x_k)_{[\SM_k]}, \h_k^* \rangle & \leq & 0.
    \ea
    \eeq
\end{restatable}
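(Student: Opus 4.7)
The plan is to combine the first-order and second-order optimality conditions for $\h_k^*$ that were recorded in \eqref{eq:first_order_optimality} and established in Proposition~\ref{prop:second_order_optimality}. Specifically, the first-order condition gives a clean expression for $\nabla f(\x_k)_{[\SM_k]}$ in terms of $\h_k^*$, while the second-order condition controls the sign of a quadratic form in $\h_k^*$. Taking an inner product with $\h_k^*$ should convert the vector identity into a scalar inequality of exactly the form we need.

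In more detail, I would first rewrite the first-order stationarity condition as
$$
\nabla f(\x_k)_{[\SM_k]} \;=\; -\,(\Qm_k)_{[\SM_k]} \h_k^* \;-\; \tfrac{M}{2} \| \h_k^* \| \cdot \h_k^*,
$$
and then take the inner product of both sides with $\h_k^*$ to obtain
$$
\langle \nabla f(\x_k)_{[\SM_k]}, \h_k^* \rangle \;=\; -\,\langle (\Qm_k)_{[\SM_k]} \h_k^*, \h_k^* \rangle \;-\; \tfrac{M}{2} \| \h_k^* \|^3.
$$

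Next, I would invoke Proposition~\ref{prop:second_order_optimality}, which states that $(\Qm_k)_{[\SM_k]} + \tfrac{M}{2}\|\h_k^*\|\cdot \Im \succeq 0$. Applying this to the vector $\h_k^*$ gives
$$
\langle (\Qm_k)_{[\SM_k]} \h_k^*, \h_k^* \rangle \;\geq\; -\tfrac{M}{2}\| \h_k^* \| \cdot \| \h_k^* \|^2 \;=\; -\tfrac{M}{2} \| \h_k^* \|^3.
$$
Substituting this lower bound into the previous display yields
$$
\langle \nabla f(\x_k)_{[\SM_k]}, \h_k^* \rangle \;\leq\; \tfrac{M}{2} \| \h_k^* \|^3 - \tfrac{M}{2} \| \h_k^* \|^3 \;=\; 0,
$$
which is exactly \eqref{eq:grad_x_times_s^*_nonpos}.

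There is no real obstacle here: the bulk of the work has already been done in Proposition~\ref{prop:second_order_optimality}, which is the non-trivial part of the argument. Given that proposition, the present lemma is a short algebraic consequence of the two optimality conditions, and the only thing to be careful about is matching the factor $\tfrac{M}{2}\|\h_k^*\|$ in the two bounds so that the positive and negative contributions cancel exactly. The case $\h_k^* = 0$ is trivially covered since both sides vanish.
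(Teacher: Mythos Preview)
Your proof is correct and is essentially identical to the paper's own argument: both take the inner product of the first-order stationarity condition \eqref{eq:first_order_optimality} with $\h_k^*$ and combine it with the quadratic form inequality obtained by applying Proposition~\ref{prop:second_order_optimality} to $\h_k^*$, so that the $(\Qm_k)_{[\SM_k]}$ and $\tfrac{M}{2}\|\h_k^*\|^3$ terms cancel to yield the desired sign.
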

\begin{proof}
    Multiplying \eqref{eq:pos_semidefiniteness_of_expression} twice with $\h_k^*$ and multiplying \eqref{eq:first_order_optimality} once with $\h_k^*$ we get
    $$
    \ba{rcl}
        \langle {\Qm_k}_{[\SM_k]}  \h_k^*,\h_k^* \rangle + \frac{M}{2} \|\h_k^*\|^3 
        & \geq & 0, \\
        \\
        \langle \nabla f(\x_k)_{[S_k]}, \h_k^* \rangle + {\Qm_k}_{[\SM_k]}  \h_k^*, \h_k^*\rangle 
        + \frac{M}{2}\| \h_k^* \|^3 & = & 0.
    \ea
    $$
\end{proof}

Note that \eqref{eq:pos_semidefiniteness_of_expression}
is a stronger version of the standard second-order
optimality condition \eqref{eq:first_order_optimality},
which takes additionally into account that $\h^{*}$
is a \textit{global minimum}.

\ModelDecrease*

\begin{proof}[Proof Lemma~\ref{lemma:model_decrease}]
By \cref{assumption:lipschitz_hessian},%
we have for all $\x, \y \in \R^n$:
$$
\ba{rcl}
|f(\y) - f(\x) - \langle \nabla f(\x), \y - \x \rangle - \frac12 \langle \nabla^2 f(\x) (\y - \x), \y - \x \rangle | & \leq & \frac{L_2}{6} \| \y - \x \|^3.
\ea
$$

This implies that,
\beq \label{eq:arbitrary_rel_f_m}
\ba{rcl}
f(\x_{k+1}) &\leq& 
f(\x_k) + \langle \nabla f(\x_k), \h_k^* \rangle 
+ \frac12 \langle \nabla^2 f(\x_k) \h_k^*, \h_k^* \rangle + \frac{L_2}{6} \| \h_k^* \|^3  \\
\\
&\leq& f(\x_k) + \langle \nabla f(\x_k), \h_k^* \rangle + \frac12 \langle \nabla^2 f(\x_k) \h_k^*, \h_k^* \rangle + \frac{M}{6} \| \h_k^* \|^3  \\
\\
&=& f(\x_k) + \langle \nabla f(\x_k)_{[S_k]}, \h_k^* \rangle + \frac12 \langle \nabla^2 f(\x_k)_{[S_k]} \h_k^*, \h_k^* \rangle + \frac{M}{6} \| \h_k^* \|^3  \\
\\
&=& \bar{m}_{\x_k, \nabla^2 f(\x_k), S_k, M}(\h_k^*),
\ea
\eeq
where we used $M \geq L_2$ in the second inequality.

By first-order optimality of the cubic model, recall that
\begin{equation*}
\ba{rcl}
\nabla f(\x_k)_{[\SM_k]} + \nabla^2  f(\x_k)_{[\SM_k]} \h^*_k + \frac{M}{2} \|\h^*_k\| \cdot \h^*_k 
& = & 0.
\ea
\end{equation*}

Taking the inner product with $\h^*_k$, we obtain
\begin{equation} \label{eq:arbitrary:hessian}
\ba{rcl}
\langle \nabla^2  f(\x_k)_{[\SM_k]} \h^*_k, \h^*_k \rangle 
& = & 
-\langle \nabla f(\x_k)_{[\SM_k]}, \h^*_k \rangle - \frac{M}{2} \|\h^*_k\|^3.
\ea
\end{equation}

Combining Eq.~\eqref{eq:arbitrary_rel_f_m} and~\eqref{eq:arbitrary:hessian}, we obtain
\beq \label{eq:arbitrary_decrease_f}
\ba{rcl}
f(\x_{k+1}) - f(\x_k) &\leq& \frac12 \langle \nabla f(\x_k)_{[S_k]}, \h_k^* \rangle - \frac{M}{12} \| \h_k^* \|^3 \nonumber \\
\\
&\leq& - \frac{M}{12} \| \h_k^* \|^3,
\ea
\eeq
where the last inequality holds since $\langle \nabla f(\x_k)_{[\SM_k]}, \h_k^* \rangle \leq 0$ by Eq.~\eqref{eq:grad_x_times_s^*_nonpos}.
\end{proof}

\section{CONVERGENCE OF A FIXED SAMPLING SCHEME} \label{sec:convergence_fixed_sampling}

In this section, we assume that 
in each iteration we sample a coordinate subspace $S_k$
of the same fixed sample size $\tau(S_k) \equiv \tau \in [n]$.
We establish the global convergence rates for our method,
for an arbitrary initialization $\x_0$.

\subsection{General convergence rate}

Let us prove our general result that allows for an approximate matrix $\Qm \approx \nabla^2 f(\x_k)_{[S_k]}$.

\TheoremAnyConvergenceQ*
\begin{proof}
First, we want to choose $M_k > 0$ such that the following progress condition is satisfied:
\beq
\label{M_k_Choice_Progress}
\ba{rcl}
f(\x_{k} + \h_k^{*}) & \leq & 
\bar{m}_{\x_k, \Qm_k, S_k, M_k}(\h_k^{*}).
\ea
\eeq
We can ensure that this condition holds for a sufficiently large value of $M_k$. Indeed,
by Lipschitzness of the Hessian, we have
$$
\ba{rcl}
f(\x_k + \h_k^{*}) & \leq &
f(\x_k) + \la \nabla f(\x_k)_{[S_k]} \h_k^{*} \ra
+ \frac{1}{2} \la \nabla^2 f(\x_k)_{[S_k]} \h_k^{*}, \h_k^{*} \ra
+ \frac{L_2}{6}\| \h_k^* \|^3 \\
\\
& = & 
\bar{m}_{\x_k, \Qm_k, S_k, M_k}(\h_k^*) 
+ \frac{1}{2} \la (\nabla^2 f(\x_k)_{[S_k]} - \mat{Q}_k) \h_k^{*}, \h_k^{*} \ra
+ \frac{L_2 - M_k}{6}\| \h_k^{*} \|^3 \\
\\
& \overset{\eqref{SigmaDef}}{\leq} &
\bar{m}_{\x_k, \Qm_k, S_k, M_k}(\h_k^*) + \frac{\sigma}{2}\| \h_k^* \|^2 + \frac{L_2 - M_k}{6}\| \h_k^{*} \|^3,
\ea
$$
and to satisfy~\eqref{M_k_Choice_Progress}, it is sufficient to have
\beq \label{Sufficient_M_k}
\ba{rcl}
\frac{M_k}{6}\| \h_k^* \|^3 & \geq & \frac{\sigma}{2}\|\h_k^*\|^2 + \frac{L_2}{6}\|\h_k^*\|^3.
\ea
\eeq
Note that the stationary condition for $\h_k^*$ gives
\beq \label{QStatCond}
\ba{rcl}
\nabla f(\x_k)_{[S_k]} + \mat{Q}_k \h_k^* + \frac{M_k}{2} \| \h_k^* \| \h_k^* & = & 0, \\
\\
\mat{Q}_k + \frac{M_k}{2} \mat{I} & \succeq & 0.
\ea
\eeq

Then, observe that
\begin{equation}
\mat{Q}_k \overset{\eqref{SigmaDef}}{\preceq }
\sigma \mat{I} + 
\nabla^2 f(\x_k)_{[S_k]} 
\;\; \preceq \;\; (\sigma + L_1)\mat{I},
\label{eq:bound_Q}
\end{equation}

where the second inequality is due to Assumption \ref{assumption:lipschitz_gradient}.\\
Denoting $r := \| \h_k^* \|$, 
we get
$$
\ba{rcl}
\| \nabla f(\x_k)_{[S_k]} \|^2
& \overset{\eqref{QStatCond}}{=} &
\la ( \mat{Q}_k + \frac{M_kr}{2} \mat{I}  )^2 \h_k^*, \h_k^* \ra
\;\; \overset{\eqref{eq:bound_Q}}{\leq} \;\;
(\sigma + L_1 + \frac{M_k r}{2})^2 r^2.
\ea
$$
Hence,
\beq \label{RIneq}
\ba{rcl}
\frac{M_k}{2} r^2 + (\sigma + L_1) r & \geq & \| \nabla f(\x_k)_{[S_k]} \|.
\ea
\eeq
Resolving the quadratic equation, we conclude from~\eqref{RIneq} that
\beq \label{M_k_square}
\ba{rcl}
M_k\| \h_k^* \| & = & M_k r \;\; \overset{\eqref{RIneq}}{\geq} \;\; 
\sqrt{ (\sigma + L_1)^2 + 2M_k\| \nabla f(\x_k)_{[S_k]} \|  } - \sigma - L_1.
\ea
\eeq
Therefore, we can estimate the left hand side in~\eqref{Sufficient_M_k} as follows,
$$
\ba{rcl}
\frac{M_k}{6}\| \h_k^* \|^3
& = & 
\frac{M_k}{12}\| \h_k^* \|^3 
+ \frac{M_k}{12}\| \h_k^* \|^3 \\
\\
& \geq & 
\frac{\sqrt{ (\sigma + L_1)^2 + 2M_k\| \nabla f(\x_k)_{[S_k]} \|  } - \sigma - L_1}{12}\| \h_k^* \|^2 
+ \frac{M_k}{12}\| \h_k^* \|^3,
\ea
$$
and to ensure the inequality in~\eqref{M_k_Choice_Progress}, it is sufficient to choose
any $M_k \geq 2 L_2 + \frac{(7 \sigma + L_1)^2}{2\| \nabla f(\x_k)_{[S_k]} \| }$.
Let us make the following simple choice:
\beq \label{M_k_Q_Choice}
\boxed{
\ba{rcl}
M_k & := & 2 L_2 + \frac{7^2 (\sigma + L_1)^2}{2\| \nabla f(\x_k)_{[S_k]} \| }.
\ea
}
\eeq
Note that without loss of generality we assume that $\| \nabla f(\x_k)_{[S_k]} \| > 0$.
Otherwise, the algorithm simply does not move for the current step,
since it appears to be already in a local optimum in the sampled subspace.

Therefore, for this choice of $M_k$, we have established, for arbitrary $\y \in \R^n$:
$$
\ba{rcl}
f(\x_{k+1}) & \leq & 
\bar{m}_{\x_k, \Qm_k, S_k, M_k}(\h_k^{*}) 
\;\; \leq \;\;
\bar{m}_{\x_k, \Qm_k, S_k, M_k}(\y - \x_k), 
\ea
$$
where the last inequality holds since $\h_k^{*}$ is a global minimum of the model.
Hence, for any $\y \in \R^n$, we have:
$$
\ba{rcl}
f(\x_{k + 1}) & \leq & f(\x_k) + \la \nabla f(\x_k)_{[S_k]}, \y - \x_k \ra
+ \frac{1}{2}\la {\Qm_k}_{[S_k]}(\y - \x_k), \y - \x_k \ra
+ \frac{M_k}{6}\|\y - \x_k\|^3 \\
\\
& \overset{\eqref{eq:bound_Q}}{\leq} &
f(\x_k) + \la \nabla f(\x_k)_{[S_k]}, \y - \x_k \ra
+ \frac{\sigma + L_1}{2} \| \y - \x_k \|^2
+ \frac{M_k}{6}\|\y - \x_k\|^3.
\ea
$$
Let us take $\y := \x_k - \alpha \nabla f(\x_k)_{[S_k]}$ (the coordinated descent
step), with some $\alpha > 0$. Then,
$$
\ba{rcl}
f(\x_{k + 1}) & \leq & f(\x_k) - \alpha \| \nabla f(\x_k)_{[S_k]} \|^2
+ \frac{\alpha^2 (\sigma + L_1)}{2}\| \nabla f(\x_k)_{[S_k]} \|^2
+ \frac{\alpha^3 M_k}{6} \| \nabla f(\x_k)_{[S_k]} \|^3,
\ea
$$
or, rearranging the terms,
$$
\ba{rcl}
f(\x_k) - f(\x_{k + 1})
& \geq & \alpha \| \nabla f(\x_k)_{[S_k]} \|^2 \cdot 
\Bigl(  1 - \frac{\alpha (\sigma + L_1)}{2} 
- \frac{\alpha^2 M_k}{6} \| \nabla f(\x_k)_{[S_k]}\|  \Bigr).
\ea
$$
Note that $\alpha > 0$ is an arbitrary stepsize. We choose it in a way to have
$$
\ba{rcl}
1 - \frac{\alpha (\sigma + L_1)}{2} 
- \frac{\alpha^2 M_k}{6} \| \nabla f(\x_k)_{[S_k]}\| & = & \frac{1}{2},
\ea
$$
which is achieved by finding the positive root of the quadratic equation.
We obtain, denoting $g := \| \nabla f(\x_k)_{[S_k]}\|$ and $\beta := L_1 + \sigma$:
$$
\ba{rcl}
\alpha & := & 
\frac{-\beta + \sqrt{\beta^2 + \frac{4}{3} M_k g} }{ \frac{2}{3} M_k g } 
\;\; = \;\; \frac{3\beta}{2M_k g} 
\cdot \Bigl( \sqrt{1 + \frac{4M_k}{3\beta^2}g} \; -\; 1 \Bigr) \\
\\
& = & 
\frac{3\beta}{2M_k g} \cdot \frac{1 + \frac{4M_k}{3\beta^2}g - 1}{\sqrt{1 + \frac{4M_k}{3\beta^2}g} \; + \; 1 } 
\;\; = \;\;
\frac{2}{\beta} \cdot \frac{1}{\sqrt{1 + \frac{4M_k}{3\beta^2}g} \; + \; 1 } \\
\\
& \geq & 
\frac{2}{\beta} \cdot \frac{1}{2 + \frac{2M_k}{3\beta^2} g},
\ea
$$
where in the last bound we used the inequality $\sqrt{1 + t} \leq 1 + \frac{t}{2}$, valid for $t \geq 0$. 

Thus, we obtain the following progress for one step of our method:
$$
\ba{rcl}
f(\x_k) - f(\x_{k + 1}) & \geq & 
\frac{1}{\beta} \| \nabla f(\x_k)_{[S_k]} \|^2
\cdot \frac{1}{ 2 + \frac{2 M_k}{3\beta^2} \| \nabla f(\x_k)_{[S_k]} \| } \\
\\
& = & \frac{1}{2 \beta} 
\frac{\| \nabla f(\x_k)_{[S_k]} \|^2}{1 + \frac{M_k}{3\beta^2} \| \nabla f(\x_k)_{[S_k]} \| } \\
\\
& \overset{\eqref{M_k_Q_Choice}}{=} & 
\frac{1}{2 \beta} 
\frac{\| \nabla f(\x_k)_{[S_k]} \|^2}{1 + \frac{7^2}{6} + \frac{2L_2}{3\beta^2} \| \nabla f(\x_k)_{[S_k]} \| } \\
\\
& \geq &
\frac{1}{2 \beta} 
\frac{\| \nabla f(\x_k)_{[S_k]} \|^2}{1 + \frac{7^2}{6} + \frac{2L_2}{3\beta^2} \| \nabla f(\x_k) \| },
\ea
$$
where the last inequality holds due to
the trivial observation,
$$
\ba{rcl}
\| \nabla f(\x_k)_{[S_k]} \|^2
& = & 
\sum\limits_{i \in S_k} (\nabla f(\x_k))_i^2
\;\; \leq \;\;
\sum\limits_{i = 1}^n (\nabla f(\x_k))_i^2
\;\; = \;\; \| \nabla f(\x_k) \|^2.
\ea
$$
Taking the expectation w.r.t. $S_k$, we get
$$
\ba{rcl}
\E_{S_k} \Bigl[ f(\x_k) - f(\x_{k + 1}) \Bigr] 
&\geq& 
\frac{1}{2 \beta} 
\frac{  \E_{S_k }\| \nabla f(\x_k)_{[S_k]} \|^2}{1 + \frac{7^2}{6} + \frac{2L_2}{3\beta^2} \| \nabla f(\x_k)  \| }
\;\; = \;\;
\frac{1}{2 \beta} 
\frac{  ( \frac{\tau}{n} ) \| \nabla f(\x_k) \|^2}{1 + \frac{7^2}{6} + \frac{2L_2}{3\beta^2} \| \nabla f(\x_k)  \| }.
\ea
$$

The last equality holds because
\begin{equation}
\E_S \| \nabla f(\x_k)_{[S_k]} \|^2 = \sum\limits_{i=1}^n \E_S (\nabla f(\x_k)^{(i)} 1_{i \in S_k})^2 = \frac{\tau(S_k)}{n} \| \nabla f(\x_k) \|^2,
\end{equation}
where we used $\E_S[(1_{i \in S})^2] = \E_S[(1_{i \in S})] = \frac{\tau(S)}{n}$ if $S$ is uniformly sampled. 

Assuming that $\|\nabla f(\x_k)\| \geq \varepsilon$ and using
monotonicity of the last expression w.r.t $\|\nabla f(\x_k)\|$, we obtain
$$
\ba{rcl}
\E_{S_k}\Bigl[ f(\x_k) - f(\x_{k + 1}) \Bigr] 
& \geq & 
\frac{ (\frac{\tau}{n} ) \varepsilon^2}{2 \beta (1 + \frac{7^2}{6})  + \frac{4 L_2}{3\beta} \varepsilon }.
\ea
$$
Taking the full expectation and telescoping this bound for $K$ iterations,
we obtain
$$
\ba{rcl}
f(\x_0) - f^{\star} & \geq & 
f(\x_0) - \E f(\x_{K})
\;\; \geq \;\;
\frac{ (\frac{\tau}{n} ) \varepsilon^2 K}{2 \beta (1 + \frac{7^2}{6})  + \frac{4 L_2}{3\beta} \varepsilon },
\ea
$$
which leads to the following global complexity:
$$
\ba{rcl}
K & \leq & 
\frac{n}{\tau}  \cdot \frac{2(1 + \frac{7^2}{6}) \beta (f(\x_0) - f^{\star})}{\varepsilon^2} + \frac{n}{\tau} \cdot \frac{4 L_2 (f(\x_0) - f^{\star})}{3 \beta \varepsilon}.
\ea
$$

\end{proof}

\subsection{The power of second-order information}

In this section, we use exact second-order information, $\Qm_k = \nabla^2 f(\x_k)_{[S_k]}$. {For better readability, let us introduce the short notation $\tilde{m}(\h) := \tilde{m}_{\x_k,S_k,M_k}(\h) := \bar{m}_{\x_k,S_k, \nabla^2 f(\x_k)_{[S_k]},M_k}(\h)$.}
We show that in this case, it is possible to prove better convergence rates
that interpolate between CD and the full Cubic Newton method.

By Lemma~\ref{lemma:concentration_bounds_gradient_norm}, we have that
\beq \label{GradBound}
\ba{rcl}
\E_S \| \nabla f(\x) - \nabla f(\x)_{[S]} \| 
& \leq & \sqrt{1 - \frac{\tau}{n}} \| \nabla f(\x) \|.
\ea
\eeq
And from the proof of Lemma~\ref{lemma:concentration_bounds_hess_vec}, we get
\beq \label{HessBound}
\ba{rcl}
\E_S \| \nabla^2 f(\x) - \nabla^2 f(\x)_{[S]} \|^2_2
& \leq & (1 - p') \| \nabla^2 f(\x) \|_F^2,
\ea
\eeq
where $p' = \tfrac{\tau^2}{n^2}$.

Let us repeat the analysis of one step of our method, employing
bounds~\eqref{GradBound} and~\eqref{HessBound} directly.
The goal is to refine Lemma~\ref{lemma:bounding_full_grad_by_s_k}.
We fix $\x_k$ and consider one step of the method:
\beq \label{StepDef}
\ba{rcl}
\h_k^{*} & = & \argmin_{\h} {\tilde{m}_{\x_k,S_k,M_k}(\h)}
\ea
\eeq
for an arbitrary $S_k \subset [n]$ and $M \geq L_2$.
Then by first order optimality of $h_k^*$, the new gradient norm is bounded as
$$
\ba{rcl}
\| \nabla f(\x_k + \h_k^{*} ) \|
& \leq & 
\| \nabla f(\x_k) + \nabla^2 f(\x_k) \h_k^* \|
+ \frac{L_2}{2}\| \h_k^* \|^2 \\
\\
& \leq &
\| \nabla f(\x_k) - \nabla f(\x_k)_{[S_k]} \| 
+ \| (\nabla^2 f(\x_k) - \nabla^2 f(\x_k)_{[S_k]}) \h_k^* \| \\
\\
& & \quad + \; \frac{L_2}{2}\| \h_k^* \|^2 
+ \| \nabla f(\x_k)_{[S_k]} + \nabla^2 f(\x_k)_{[S_k]} \h_k^* \| \\
\\
& \overset{\eqref{eq:first_order_optimality}}{=} & 
\| \nabla f(\x_k) - \nabla f(\x_k)_{[S_k]} \| 
+ \| (\nabla^2 f(\x_k) - \nabla^2 f(\x_k)_{[S_k]}) \h_k^* \| \\
\\
& & \quad + \;
\frac{L_2 + M}{2}\| \h_k^* \|^2 \\
\\
& \leq & 
\| \nabla f(\x_k) - \nabla f(\x_k)_{[S_k]} \| 
+ \| \nabla^2 f(\x_k) - \nabla^2 f(\x_k)_{[S_k]} \| \cdot \| \h_k^* \| \\
\\
& & \quad + \;
\frac{L_2 + M}{2}\| \h_k^* \|^2.
\ea
$$
Now, we take (the full) expectation $\E[\cdot]$ and use
the Cauchy-Schwartz inequality for random variables:
$$
\ba{rcl}
\E \| \nabla f(\x_k + \h_k^*) \|
& \leq & 
\E \| \nabla f(\x_k) - \nabla f(\x_k)_{[S_k]} \|  \\
\\
& & \qquad  + \; 
\sqrt{ \E\bigl[ \| \nabla^2 f(\x_k) - \nabla^2 f(\x_k)_{[S_k]} \|^2\bigr]
\cdot
\E\bigl[\| \h_k^* \|^2 \bigr] } \\
\\
& & \qquad + \;
\frac{L_2 + M}{2} \E[ \| \h_k^* \|^2 ] \\
\\
& \overset{\eqref{GradBound}, \eqref{HessBound}}{\leq} &
\sqrt{1 - \frac{\tau}{n}} \cdot \E \| \nabla f(\x_k) \| \\
\\
& & 
\qquad 
+ \; \sqrt{ (1 - p') \E \| \nabla^2 f(\x_k) \|_F^2 \cdot \E[ \|\h_k^*\|^2 ]  } 
+ \frac{L_2 + M}{2} \E[ \| \h_k^* \|^2 ] \\
\\
& \overset{(*)}{\leq} & 
\sqrt{1 - \frac{\tau}{n}} \cdot \E \| \nabla f(\x_k) \|
+ \frac{(1 - p') \E \| \nabla^2 f(\x_k) \|_F^2}{2M}
+ \frac{L_2 + 2M}{2} \E[ \| \h_k^* \|^2 ] \\
\\
& \leq & 
\sqrt{1 - \frac{\tau}{n}} \cdot \E \| \nabla f(\x_k) \|
+ \frac{(1 - p') \E \| \nabla^2 f(\x_k) \|_F^2}{2M}
+ \frac{3M}{2} \E[ \| \h_k^* \|^2 ],
\ea
$$
where we used Young's inequality in $(*)$.

Since the function $g(x) = x^{3/2}, x \geq 0$ is convex, we have,
for an arbitrary $\lambda \in (0, 1)$, and $a, b\geq 0$:
\beq \label{Conv32}
\ba{rcl}
(a + b)^{3/2} & = & 
\frac{1}{\lambda^{3/2}} 
\Bigl( \lambda a 
+ (1 - \lambda)\bigl[ \frac{\lambda}{1 - \lambda}(b)\bigr] \Bigr)^{3/2} \\
\\
& \leq & 
\frac{1}{\lambda^{1/2}} a^{3/2} + \frac{1}{(1 - \lambda)^{1/2}} b^{3/2}.
\ea
\eeq
Let us fix some $\lambda_* \in (0, 1)$, and use inequality~\eqref{Conv32}
with $\lambda := \lambda_* \in (0, 1)$, obtaining:
$$
\ba{rcl}
& & \!\!\!\!\!\!\!\!\!\!\!\!\!\!\!
(\E \| \nabla f(\x_k + \h_k^{*}) \| )^{3/2} \\
\\
& \leq &
\frac{1}{\lambda_*^{1/2}} \bigl[1 - \frac{\tau}{n} \bigr]^{3/4} ( \E_S \| \nabla f(\x_k) \| )^{3/2}
+ \bigl[\frac{1}{1 - \lambda_*}\bigr]^{1/2}
\cdot \Bigl( 
\frac{(1 - p') \E \| \nabla^2 f(\x_k) \|_F^2}{2M}
+ \frac{3M}{2} \E[ \| \h_k^* \|^2 ]
\Bigr)^{3/2} \\
\\
& \leq & 
\frac{1}{\lambda_*^{1/2}} 
\bigl[1 - \frac{\tau}{n} \bigr]^{3/4} ( \E \| \nabla f(\x_k) \| )^{3/2}
+ \bigl[\frac{2}{1 - \lambda_*}\bigr]^{1/2} \cdot
\frac{(1 - p')^{3/2}}{(2M)^{3/2}} \E \| \nabla^2 f(\x_k) \|_F^3 \\
\\
& & \qquad
+ \; \bigl[\frac{2}{1 - \lambda_*}\bigr]^{1/2}
\bigl[\frac{3M}{2}\bigr]^{3/2} \E[ \| \h_k^* \|^3 ]
\ea
$$
where in the last inequality we used~\eqref{Conv32} again, but
with $\lambda := \frac{1}{2}$, as well as Jensen's inequality for the expectation.
Hence, we obtain that
$$
\ba{rcl}
\E[ \|\h_k^*\|^3 ]
& \geq & 
\bigl[ \frac{1 - \lambda_*}{2} \bigr]^{1/2}
\bigl[ \frac{2}{3M} \bigr]^{3/2}
\Bigl(  (\E\| \nabla f(\x_{k + 1}) \|)^{3/2}  
- \frac{1}{\lambda_*^{1/2}} \bigl[ 1 - \frac{\tau}{n} \bigr]^{3/4} ( \E \| \nabla f(\x_k) \| )^{3/2}  \Bigr) \\
\\
& & \quad - \;
\frac{(1 - p')^{3/2}}{3^{3/2} M^3} \E \| \nabla^2 f(\x_k) \|_F^3.
\ea
$$
It remains to combine it with the progress of one step in terms
of the functional residual (Lemma~\ref{lemma:model_decrease}):
$$
\ba{rcl}
\E f(\x_k) - \E f(\x_{k + 1}) & \geq & \frac{M}{12} \E[ \|\h_k^*\|^3 ]
\ea
$$
We have proved the following inequality.
\begin{lemma}
For one step of our method, with an arbitrary $S_k \subset [n]$
of size $\tau_k = |S_k|$, $1 \leq \tau_k \leq n$, and $M \geq L_2$, it holds, for any $\lambda_* \in (0, 1)$:
\beq \label{NewGradNewBound}
\ba{cl}
&\E f(\x_k) - \E f(\x_{k + 1}) \\
\\
& \geq \;
\frac{1}{12}
\bigl[ \frac{1 - \lambda_*}{2 M} \bigr]^{1/2}
\bigl[ \frac{2}{3} \bigr]^{3/2}
\Bigl(  (\E\| \nabla f(\x_{k + 1}) \|)^{3/2}  
- \frac{1}{\lambda_*^{1/2}}\bigl[ 1 - \frac{\tau_k}{n} \bigr]^{3/4} ( \E \| \nabla f(\x_k) \| )^{3/2}  \Bigr) \\
\\
& \qquad - \;
\frac{(1 - p_{2, k})^{3/2}}{12 \cdot 3^{3/2} M^2} \E \| \nabla^2 f(\x_k) \|_F^3,
\ea
\eeq
where $p_{2,k} = \frac{\tau_k(\tau_k - 1)}{n(n - 1)}$.
\end{lemma}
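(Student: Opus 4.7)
The plan is to follow the chain of estimates sketched in the paragraphs immediately preceding the lemma statement, turning that informal derivation into a linear sequence of one-line bounds that ends with an application of Lemma~\ref{lemma:model_decrease}. The five ingredients are: (i) first-order optimality of $\h_k^*$ for the subspace cubic model, (ii) Lipschitz continuity of the Hessian (Assumption~\ref{assumption:lipschitz_hessian}), (iii) the coordinate-sampling concentration bounds of Lemma~\ref{lemma:concentration_bounds_gradient_norm_and_hessian}, (iv) Young's and Cauchy--Schwarz inequalities, and (v) the elementary convexity inequality $(a+b)^{3/2}\le \lambda^{-1/2}a^{3/2}+(1-\lambda)^{-1/2}b^{3/2}$ for $a,b\ge 0$, $\lambda\in(0,1)$, which is stated and proved in the displayed equation~\eqref{Conv32} just above the lemma.

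First I would bound $\|\nabla f(\x_{k+1})\|$ pointwise. Writing $\x_{k+1}=\x_k+\h_k^*$, Assumption~\ref{assumption:lipschitz_hessian} gives $\|\nabla f(\x_{k+1})-\nabla f(\x_k)-\nabla^2 f(\x_k)\h_k^*\|\le \frac{L_2}{2}\|\h_k^*\|^2$. Adding and subtracting the sampled gradient and Hessian and invoking the first-order optimality condition~\eqref{eq:first_order_optimality} for the model $\bar m_{\x_k,\nabla^2 f(\x_k),S_k,M}$, the residual $\nabla f(\x_k)_{[S_k]}+\nabla^2 f(\x_k)_{[S_k]}\h_k^*$ equals $-\tfrac{M}{2}\|\h_k^*\|\h_k^*$. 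Applying the triangle inequality one more time, I obtain a clean bound of $\|\nabla f(\x_{k+1})\|$ by the sum of $\|\nabla f(\x_k)-\nabla f(\x_k)_{[S_k]}\|$, $\|\nabla^2 f(\x_k)-\nabla^2 f(\x_k)_{[S_k]}\|\cdot\|\h_k^*\|$, and $\tfrac{L_2+M}{2}\|\h_k^*\|^2$.

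Next I take expectation over $S_k$. The gradient-deviation term is controlled directly by~\eqref{eq:concentration_bounds_gradient_norm}. For the mixed Hessian-step term I apply Cauchy--Schwarz for random variables and then~\eqref{eq:concentration_bounds_hess_vec} (squared), so that it becomes $\sqrt{(1-p_{2,k})\,\E\|\nabla^2 f(\x_k)\|_F^2 \cdot \E\|\h_k^*\|^2}$. A Young's inequality $\sqrt{ab}\le \tfrac{a}{2M}+\tfrac{Mb}{2}$ splits this product into a Hessian-Frobenius piece divided by $2M$ and an $\tfrac{M}{2}\E\|\h_k^*\|^2$ piece, which I absorb into the existing quadratic term to give an overall coefficient of $\tfrac{3M}{2}$ in front of $\E\|\h_k^*\|^2$ (since $L_2\le M$). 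This yields the single inequality on $\E\|\nabla f(\x_{k+1})\|$ in terms of $\E\|\nabla f(\x_k)\|$, $\E\|\nabla^2 f(\x_k)\|_F^2$ and $\E\|\h_k^*\|^2$ that appears in the excerpt.

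Now I raise both sides to the $3/2$ power and apply inequality~\eqref{Conv32} twice. First, with $\lambda=\lambda_*$, I peel off the gradient-deviation term, producing the factor $\lambda_*^{-1/2}[1-\tau_k/n]^{3/4}$ on $(\E\|\nabla f(\x_k)\|)^{3/2}$ and $(1-\lambda_*)^{-1/2}$ in front of the remaining $(\cdots+\cdots)^{3/2}$. Then I apply~\eqref{Conv32} again with $\lambda=1/2$ to split that remainder into a Frobenius term and a step term, and use Jensen's inequality in the form $(\E Y)^{3/2}\le \E[Y^{3/2}]$ (with $Y=\|\h_k^*\|^2$ so that $(\E\|\h_k^*\|^2)^{3/2}\le \E\|\h_k^*\|^3$, and similarly for $\|\nabla^2 f(\x_k)\|_F^2$) to push the expectation inside the $3/2$ power on the right-hand side. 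Solving the resulting inequality for $\E\|\h_k^*\|^3$ and finally invoking Lemma~\ref{lemma:model_decrease}, which states $f(\x_k)-f(\x_{k+1})\ge \tfrac{M}{12}\|\h_k^*\|^3$ pointwise and therefore under expectation, produces exactly~\eqref{NewGradNewBound} with the stated constants. The only delicate part is constant bookkeeping: tracking the factors $[(1-\lambda_*)/2]^{1/2}$, $(2/(3M))^{3/2}$, and the two $3^{-3/2}$ factors, and verifying that Jensen's inequality is applied in the correct direction for both $\|\h_k^*\|^2$ and $\|\nabla^2 f(\x_k)\|_F^2$; the rest is mechanical.
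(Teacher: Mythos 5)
Your proposal reproduces the paper's own derivation step for step: the pointwise bound on $\|\nabla f(\x_k+\h_k^*)\|$ via first-order optimality and Hessian Lipschitzness, the expectation with Cauchy--Schwarz plus the concentration bounds of Lemma~\ref{lemma:concentration_bounds_gradient_norm_and_hessian}, Young's inequality with the $\tfrac{3M}{2}$ absorption, the two applications of~\eqref{Conv32} (with $\lambda_*$ and then $1/2$) together with Jensen, and the final appeal to Lemma~\ref{lemma:model_decrease}. The argument is correct and essentially identical to the one in the paper, including the direction of Jensen's inequality and the constant bookkeeping.
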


Let us denote by $L_{1}$ the Lipschitz constant of the gradient,
which provides us a uniform bound
for the Hessian in Frobenius norm: $\| \nabla^2 f(\x) \|_F \leq  \sqrt{n} L_{1}$, $\forall \x$. 

Then, using the same subset size $\tau_k \equiv \tau$ for all iterations,
we can telescope \eqref{NewGradNewBound} as follows:
$$
\ba{rcl}
f(\x_0) - f^{\star} & \geq & 
\frac{1}{12}
\bigl[ \frac{1 - \lambda_*}{2 M} \bigr]^{1/2}
\bigl[ \frac{2}{3} \bigr]^{3/2}
\sum\limits_{i = 1}^k
\Bigl[ 
(\E\| \nabla f(\x_i) \|)^{3/2}
- \frac{1}{\lambda_*^{1/2}}\bigl[ 1 - \frac{\tau}{n} \bigr]^{3/4}(\E\| \nabla f(\x_{i - 1}) \|)^{3/2}
\Bigr] \\
\\
& &
\;\; - \;
\frac{(1 - p_{2})^{3/2} n^{3/2} L_1^3}{12 \cdot 3^{3/2} M^2} k \\
\\
& = &
\frac{1}{12}
\bigl[ \frac{1 - \lambda_*}{2M} \bigr]^{1/2}
\bigl[ \frac{2}{3} \bigr]^{3/2} \Bigl(1 - \frac{1}{\lambda_*^{1/2}}\bigl[ 1 - \frac{\tau}{n} \bigr]^{3/4} \Bigr)
\sum\limits_{i = 1}^{k - 1} (\E \| \nabla f(\x_i) \|)^{3/2} \\
\\
& & \;\;
+  \;
\frac{1}{12}
\bigl[ \frac{1 - \lambda_*}{2M} \bigr]^{1/2}
\bigl[ \frac{2}{3} \bigr]^{3/2}( \E \| \nabla f(\x_k) \| )^{3/2} \\
\\
& & \;\;
-  \;
\frac{1}{12}
\bigl[ \frac{1 - \lambda_*}{2M\lambda_*} \bigr]^{1/2}
\bigl[ \frac{2}{3} \bigr]^{3/2} \bigl[1 - \frac{\tau}{n}\bigr]^{3/4} 
\| \nabla f(\x_0) \|^{3/2} 
\; - \; \frac{(1 - p_{2})^{3/2} n^{3/2} L_1^3 }{12 \cdot 3^{3/2} M^2} k.
\ea
$$
Therefore, we obtain the following convergence result.
\begin{theorem}\label{thm:convergence_interpolating_rate}
For any $M \geq L_2$, $\lambda_* \in (0, 1)$, and $1 \leq \tau \leq n$, it holds
$$
\ba{cl}
& \min\limits_{1 \leq i \leq k} ( \E \| \nabla f(\x_i) \|  )^{3/2} \\
\\
& \; \leq \; 
12 \bigl[ \frac{3}{2} \bigr]^{3/2} \bigl[ \frac{2}{1 - \lambda_*}  \bigr]^{1/2}
 \Bigl(1 - \frac{1}{\lambda_*^{1/2}}\bigl[ 1 - \frac{\tau}{n} \bigr]^{3/4} \Bigr)^{-1}
\cdot
\Bigl[  \frac{\sqrt{M}(f(\x_0) - f^{\star})}{k}  
+ \frac{(1 - p')^{3/2} n^{3/2}}{12 \cdot 3^{3/2} M^{3/2}} L_{1}^3   \Bigr]
+ \frac{R_0}{k},
\ea
$$
where 
$$
\ba{rcl}
R_0 & := & 
\frac{1}{\lambda_*^{1/2}}\bigl[1 - \frac{\tau}{n} \bigr]^{3/4}
\cdot
 \Bigl(1 - \frac{1}{\lambda_*^{1/2}}\bigl[ 1 - \frac{\tau}{n} \bigr]^{3/4} \Bigr)^{-1}
 \| \nabla f(\x_0) \|^{3/2}.
\ea
$$
\end{theorem}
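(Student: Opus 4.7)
The plan is to derive the bound by telescoping the one-step inequality~\eqref{NewGradNewBound} over $i=1,\dots,k$ and then invoking the trivial bound $\min_{1 \le i \le k} a_i \le \frac{1}{k}\sum_{i=1}^k a_i$. Since the method uses a constant subset size $\tau$ and constant regularization $M$, all coefficients in~\eqref{NewGradNewBound} are independent of $i$, which is exactly what makes the telescoping clean. The key algebraic trick has already been executed in the preceding lemma, which applied the convexity inequality $(a+b)^{3/2} \le \lambda_*^{-1/2} a^{3/2} + (1-\lambda_*)^{-1/2} b^{3/2}$ in order to split the ``new gradient'' term from the step-norm and Hessian-error contributions; the present theorem is essentially a clean packaging of the telescoped sum.

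First, I would introduce the shorthands $g_i := (\E \|\nabla f(\x_i)\|)^{3/2}$ and $c := \lambda_*^{-1/2}\bigl[1-\tfrac{\tau}{n}\bigr]^{3/4}$. Summing~\eqref{NewGradNewBound} from $i=1$ to $k$ after taking full expectations, the left-hand side telescopes to $f(\x_0) - \E f(\x_k) \le f(\x_0) - f^{\star}$. On the right-hand side, the expressions $g_i - c \cdot g_{i-1}$ combine into $g_k + (1-c)\sum_{i=1}^{k-1} g_i - c \cdot g_0$, multiplied by the common prefactor $\tfrac{1}{12}\bigl[\tfrac{1-\lambda_*}{2M}\bigr]^{1/2}\bigl[\tfrac{2}{3}\bigr]^{3/2}$. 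The Hessian remainder $\tfrac{(1-p_2)^{3/2}}{12\cdot 3^{3/2} M^2}\E\|\nabla^2 f(\x_i)\|_F^3$ is bounded uniformly via Assumption~\ref{assumption:lipschitz_gradient} together with $\|\mat{A}\|_F \le \sqrt{n}\|\mat{A}\|$, yielding an additive contribution of order $\tfrac{(1-p_2)^{3/2} n^{3/2} L_1^3}{M^2} \cdot k$ on the right-hand side.

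Next, I would lower-bound $g_k + (1-c)\sum_{i=1}^{k-1} g_i \ge (1-c)\sum_{i=1}^k g_i$ (valid in the non-trivial regime $c<1$), rearrange to isolate $\sum_{i=1}^k g_i$, and divide by $k$. Inverting the prefactor produces the front constant $12\bigl[\tfrac{3}{2}\bigr]^{3/2}\bigl[\tfrac{2}{1-\lambda_*}\bigr]^{1/2}(1-c)^{-1}$; the functional-residual term becomes $\sqrt{M}(f(\x_0)-f^*)/k$, since $1/\sqrt{2M}$ inverts to $\sqrt{2M}$; and the Hessian-bias term becomes $(1-p_2)^{3/2} n^{3/2} L_1^3/M^{3/2}$ inside the bracket (the exponent $3/2$ on $M$ comes from $\sqrt{M}\cdot M^{-2}$). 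The leftover boundary term $c\cdot g_0/(1-c)$ matches exactly the stated $R_0$, and it decays as $R_0/k$ after dividing by $k$. Applying $\min_{1\le i \le k} g_i \le \tfrac{1}{k}\sum_{i=1}^k g_i$ finishes the proof.

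The main obstacle is not conceptual but bookkeeping: carefully inverting the prefactor and tracking how the $\sqrt{M}$ appears in front of $f(\x_0)-f^*$ while $M^{-3/2}$ appears in the Hessian bias. A secondary subtlety is that the stated right-hand side is only finite when $c<1$, equivalently $\lambda_* > (1-\tau/n)^{3/2}$; this is implicitly required for the bound to be meaningful and should be chosen in tandem with $\tau$.
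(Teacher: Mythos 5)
Your proposal is correct and follows essentially the same route as the paper: telescoping the one-step inequality \eqref{NewGradNewBound} with constant $\tau$ and $M$, regrouping the sum $\sum_i (g_i - c\, g_{i-1})$ into $(1-c)\sum_i g_i$ plus the boundary term that becomes $R_0$, bounding $\E\|\nabla^2 f(\x_i)\|_F \le \sqrt{n}L_1$, and finishing with $\min_i g_i \le \frac{1}{k}\sum_i g_i$. Your bookkeeping of the constants (the $\sqrt{M}$ versus $M^{-3/2}$ split and the $(1-c)^{-1}$ prefactor) and your remark that the bound is only meaningful when $\lambda_*^{-1/2}(1-\tau/n)^{3/4} < 1$ both match the paper, which resolves the latter by later setting $\lambda_* = 1 - \tau/n$.
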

The constant $R_0$ is a technical term which is not important.
Let us ignore it for simplicity.

Note that we have a freedom in choosing parameter $M \geq L_2$
and $\lambda_* \in (0, 1)$.
Let us assume that all gradients are sufficiently large, for some given
tolerance $\varepsilon > 0$:
$$
\ba{rcl}
\E \| \nabla f(\x_i) \| & \geq & \varepsilon, \qquad \forall 0 \leq i \leq k.
\ea
$$
We can choose $M \geq L_2$ such that
the constant term with $L_1$ be sufficiently small,
namely
$$
\ba{rcl}
12 \bigl[ \frac{3}{2} \bigr]^{3/2} \bigl[ \frac{2}{1 - \lambda_*}  \bigr]^{1/2}
 \Bigl(1 - \frac{1}{\lambda_*^{1/2}} \bigl[ 1 - \frac{\tau}{n} \bigr]^{3/4} \Bigr)^{-1}
 \cdot \frac{(1 - p')^{3/2} n^{3/2}}{12 \cdot 3^{3/2} M^{3/2}} L_{1}^3 
 & \leq & 
\bigl(\frac{\varepsilon}{2}\bigr)^{3/2}.
\ea
$$

E.g., this condition will be satisfied for the choice
\beq \label{MChoiceNew}
\boxed{
\ba{rcl}
M & := & L_2 + 
\bigl[ \frac{2}{1 - \lambda_*} \bigr]^{1/3}
\Bigl(1 - \frac{1}{\lambda_*^{1/2}}\bigl[1 - \frac{\tau}{n} \bigr]^{3/4} \Bigr)^{-2/3}
\frac{(1 - p')n}{\varepsilon} L_1^2.
\ea
}
\eeq

In this case, the number of iterations $k$ required to reach $\varepsilon$ accuracy is:
\beq \label{ComplexityNew}
\ba{rcl}
k & = & \mathcal{O}\Bigl( 
\bigl[ \frac{1}{1 - \lambda_*} \bigr]^{1/2}
\Bigl( 1 - \frac{1}{\lambda_*^{1/2}} \bigl[ 1 - \frac{\tau}{n} \bigr]^{3/4} \Bigr)^{-1}
\frac{\sqrt{M} (f(\x_0) - f^{\star})}{\varepsilon^{3/2}}
+ \frac{R_0}{\varepsilon^{3/2}}
\Bigr) \\
\\
& \overset{\eqref{MChoiceNew}}{=} &
\mathcal{O}\biggl(
\bigl[ \frac{1}{1 - \lambda_*} \bigr]^{1/2}
\Bigl( 1 - \frac{1}{\lambda_*^{1/2}} \bigl[ 1 - \frac{\tau}{n} \bigr]^{3/4} \Bigr)^{-1}
\cdot 
\frac{\sqrt{L_2}(f(\x_0) - f^{\star})}{\varepsilon^{3/2}} \\
\\
& &
\;
+ \;
\bigl[ \frac{1}{1 - \lambda_*} \bigr]^{2/3}
\Bigl( 1 - \frac{1}{\lambda_*^{1/2}}\bigl[1 - \frac{\tau}{n} \bigr]^{3/4} \Bigr)^{-4/3}
\cdot
n^{1/2}(1 - p')^{1/2}
\frac{L_1(f(\x_0) - f^{\star})}{\varepsilon^2}
+ \frac{R_0}{\varepsilon^{3/2}}
\biggr).
\ea
\eeq
It remains to choose $\lambda_* \in (0, 1)$.

Let us consider the following simple choice: $\lambda_* := 1 - \frac{\tau}{n}$.
Then,
$$
\ba{rcl}
1 - \frac{1}{\lambda_*^{1/2}}\bigl[1 - \frac{\tau}{n} \bigr]^{3/4}
& = &
1 - \bigl[1 - \frac{\tau}{n}\bigr]^{1/4}
\;\; = \;\;
1 - \frac{(n - \tau)^{1/4}}{n^{1/4}} 
\;\; = \;\;
\frac{n^{1/4} - (n - \tau)^{1/4}}{n^{1/4}} \\
\\
& \geq & 
\frac{[n - n + \tau]}{4n^{3/4}} \frac{1}{n^{1/4}}
\;\; = \;\;
\frac{\tau}{4n},
\ea
$$
where we used concavity of the function $\phi(x) := x^{1/4}, x \geq 0$,
which implies that, for any $x, y \geq 0$:
$$
\ba{rcl}
y^{1/4} \;\; = \;\; \phi(y) & \leq & \phi(x) + \phi'(x) (y - x)
\;\; = \;\;
x^{1/4} + \frac{y - x}{4 x^{3/4}}
\quad \Leftrightarrow \quad
x^{1/4} - y^{1/4} \;\; \geq \;\; \frac{x - y}{4x^{3/4}}.
\ea
$$
At the same time,
$$
\ba{rcl}
\frac{1}{1 - \lambda_*} & = & 
\frac{n}{\tau}.
\ea
$$
Therefore, for this choice of $\lambda_* := 1 - \frac{\tau}{n}$ we obtain the following 
interpolating complexity, valid for any $1 \leq \tau \leq n$:
\beq \label{Compl}
\boxed{
\ba{rcl}
k & = & 
\mathcal{O}
\Bigl(
\bigl[\frac{n}{\tau}\bigr]^{3/2}
\frac{\sqrt{L_2}(f(\x_0) - f^{\star})}{\varepsilon^{3/2}}
+ 
n^{1/2}(1 - p')^{1/2}
\bigl[ \frac{n}{\tau} \bigr]^{2} \frac{L_1 (f(\x_0) - f^{\star}}{\varepsilon^2}
\Bigr).
\ea
}
\eeq

\section{CONVERGENCE ADAPTIVE SCHEME - HIGH-PROBABILITY VERSION}
\label{sec:convergence_adaptive_sampling}

\subsection{General second-order convergence}

For instructional purposes, we first conduct an analysis under some general conditions for the quality of the approximation of the sampled coordinate gradients. In section~\ref{sec:app_adaptive_sampling}, we will demonstrate that one can remove this condition using an adaptive sampling scheme.

\begin{condition}[Condition only used in Theorem~\ref{RefTheoremAnyConvergenceQ}]
For $\epsilon_1, \epsilon_2 \geq 0$, for all $\x \in \R^n$ and any subset $\SM \subset [n]$ from $\cD$ with probability at least $1=\delta$,
\begin{equation}
\label{eq:concentration_bound_gradient_norm}
\| \nabla f(\x) - \nabla f(\x)_{[S]} \| \leq \delta^{-\frac12}\epsilon_1 \text{  and  } \| \nabla^2 f(\x) - \nabla^2 f(\x)_{[S]} \| \leq \delta^{-\frac12} \sqrt{\epsilon_2}.
\end{equation}
\label{condition:sampling}
\end{condition}

This condition is established in Lemma~\ref{lemma:concentration_bounds_gradient_norm_and_hessian}, with its proof provided in the subsequent two lemmas.
\begin{restatable}{lemma}{ConcentrationGradientNorm}
\label{lemma:concentration_bounds_gradient_norm}
    For any $\x \in \R^n$ and any subset $S \subset [n]$, we have under an uniform sampling scheme
    \begin{align}
        \E_S \| \nabla f(\x) - \nabla f(\x)_{[S]} \| \leq \sqrt{1 - \frac{\tau(S)}{n} } \| \nabla f(\x) \| =: \epsilon_1.
    \end{align}
    This implies a high-probability bound 
    \begin{align}
    \label{eq:high_prob_bound_gradient_concentration}
    \Pr \left(\| \nabla f(\x) - \nabla f(\x)_{[S]} \| \geq \delta^{-\frac12} \underbrace{\sqrt{1 - \frac{\tau(S)}{n}} \| \nabla f(\x) \|}_{= \epsilon_1} \right) \leq \delta.
    \end{align}
\end{restatable}

\begin{proof}
    For any $\x \in \R^n$, we have
    $$
    \ba{rcl}
        \| \nabla f(\x) - \nabla f(\x)_{[S]} \|^2 &=& \sum\limits_{i=1}^n (\nabla f(\x)^{(i)} - \nabla f(\x)^{(i)} 1_{i \in S})^2  \\
        \\
        &=& \sum\limits_{i=1}^n (\nabla f(\x)^{(i)})^2 + (\nabla f(\x)^{(i)} 1_{i \in S})^2 - 2 (\nabla f(\x)^{(i)})^2 1_{i \in S}.
    \ea
    $$
    
    Taking expectation over $S$, we get
    $$
    \ba{rcl}
        \E_S \| \nabla f(\x) - \nabla f(\x)_{[S]} \|^2 &=& 
        \sum\limits_{i=1}^n \E_S (\nabla f(\x)^{(i)} - \nabla f(\x)^{(i)} 1_{i \in S})^2  \\
        \\
        &=& \sum\limits_{i=1}^n (\nabla f(\x)^{(i)})^2 + \E_S (\nabla f(\x)^{(i)} 1_{i \in S})^2 - 2 \E_S (\nabla f(\x)^{(i)})^2 1_{i \in S}  \\
        \\
        &\stackrel{(i)}{=}& \sum\limits_{i=1}^n (\nabla f(\x)^{(i)})^2 (1 + \frac{\tau(S)}{n} - 2 \frac{\tau(S)}{n})  \\
        \\
        &=& (1 - \frac{\tau(S)}{n}) \| \nabla f(\x) \|^2,
    \ea
    $$
    where $(i)$ used $\E_S[(1_{i \in S})^2] = \E_S[(1_{i \in S})] = \frac{\tau(S)}{n}$.\\
    
    We conclude the expectation bound using Jensen's inequality,
    \begin{equation}
        \E_S \| \nabla f(\x) - \nabla f(\x)_{[S]} \| \leq (\E_S \| \nabla f(\x) - \nabla f(\x)_{[S]} \|^2)^{1/2} 
        \leq \sqrt{1 - \frac{\tau(S)}{n}} \| \nabla f(\x) \|.
    \end{equation}

    Finally, we can obtain a high-probability bound using Markov's inequality as follows:
    \begin{align}
    \Pr(\| \nabla f(\x) - \nabla f(\x)_{[S]} \| \geq \mu) = \Pr(\| \nabla f(\x) - \nabla f(\x)_{[S]} \|^2 \geq \mu^2) \leq \mu^{-2} \E_S \| \nabla f(\x) - \nabla f(\x)_{[S]} \|^2.
    \end{align}
    Setting $\mu = \delta^{-\frac12} \sqrt{1 - \frac{\tau(S)}{n}} \| \nabla f(\x) \|$ for $\delta \in (0,1)$ yields:
    \begin{align}
    \Pr \left(\| \nabla f(\x) - \nabla f(\x)_{[S]} \| \geq \delta^{-\frac12} \underbrace{\sqrt{1 - \frac{\tau(S)}{n}} \| \nabla f(\x) \|}_{= \epsilon_1} \right) \leq \delta.
    \end{align}
\end{proof}

\begin{restatable}{lemma}{ConcentrationHessian}
\label{lemma:concentration_bounds_hess_vec}
    For any $\x \in \R^n$ and any subset $S \subset [n]$, we have under an uniform sampling scheme
    \begin{equation}
        \E_S \| \nabla^2 f(\x)_{[S]} - \nabla^2 f(\x) \|_2 
        \leq \sqrt{1-p'} \| \nabla^2 f(\x) \|_F =: \sqrt{\epsilon_2},
    \end{equation}
    where $p' := \left(\frac{\tau(S)}{n}\right)^2$. This implies a high-probability bound
    \begin{align}
    \label{eq:high_prob_bound_hessian_concentration}
    \Pr \left(\| \nabla^2 f(\x) - \nabla^2 f(\x)_{[S]} \|_2 \geq \delta^{-\frac12} \underbrace{\sqrt{1 - p'} \| \nabla^2 f(\x) \|_F}_{= \sqrt{\epsilon_2}} \right) \leq \delta.
    \end{align}
\end{restatable}

\begin{proof}
    \begin{align}
        \E_S \| \nabla^2 f(\x)_{[S]} - \nabla^2 f(\x) \|_F^2
        &= \E_S \sum_{i,j} \big( (\nabla^2 f(\x))_{ij}^2 1^2_{i,j \in S} + (\nabla^2 f(\x))_{ij}^2 - 2 (\nabla^2 f(\x))_{ij}^2 1_{i,j \in S} \big) \\
        &= \sum_{i,j} (\nabla^2 f(\x))_{ij}^2 \E_S[1_{i,j \in S}^2 + 1 - 2 \cdot 1_{i,j \in S}] \\
        &= (1 - p') \sum_{i,j} (\nabla^2 f(\x))_{ij}^2 \\
        &= (1-p') \| \nabla^2 f(\x) \|_F^2,
    \end{align}
    where $p' := \E_S[1_{i,j \in S}] = p^2 = \left(\frac{\tau(S)}{n}\right)^2$.

    Since $\|\Am\|_2 \leq \|\Am\|_F$ for any matrix $\Am$, we get the desired result, again using Jensen's inequality: 
    \begin{equation*}
        \E_S \| \nabla^2 f(\x)_{[S]} - \nabla^2 f(\x) \|_2 
        \leq \sqrt{1-p'} \| \nabla^2 f(\x) \|_F.
    \end{equation*}
    We can derive a high probability bound by Markov's inequality since
    \begin{align}
    \nonumber
    \Pr(\| \nabla^2 f(\x)_{[S]} - \nabla^2 f(\x) \|_2 \geq \mu) = \Pr(\| \nabla^2 f(\x)_{[S]} - \nabla^2 f(\x) \|^2_2 \geq \mu^2) &\leq \frac{\E_S \| \nabla^2 f(\x)_{[S]} - \nabla^2 f(\x) \|^2_2}{\mu^2} \\
    &\leq \frac{\E_S \| \nabla^2 f(\x)_{[S]} - \nabla^2 f(\x) \|^2_F}{\mu^2}
    \end{align}

    Setting $\mu = \delta^{-\frac12} \sqrt{1 - \left(\frac{\tau(S)}{n}\right)^2} \| \nabla^2 f(\x) \|_F$ for $\delta \in (0,1)$ yields:
    \begin{align}
    \Pr \left(\| \nabla^2 f(\x) - \nabla^2 f(\x)_{[S]} \|_2 \geq \delta^{-\frac12} \underbrace{\sqrt{1 - p'} \| \nabla^2 f(\x) \|_F}_{= \sqrt{\epsilon_2}} \right) \leq \delta.
    \end{align}
\end{proof}

We are now ready to state the first main result of this section.

\begin{restatable}{theorem}{MainConvergenceBall}
\label{thm:convergence_ball}
    Let the sequence $\{\x_i\}$ be generated by $\x_{k+1} = \x_k + \arg\min_\h {\tilde{m}_{\x_k,S_k,M_k}(\h)}$ and let $M \geq L_2$. Assume that the objective function $f(\x)$ is bounded below:
\begin{align*}
    f(\x) \geq f^* \quad \forall \x \in \R^n.
\end{align*}
Let $\Delta_0 = f(\x_0) - f^*$, and define the following constants (dependent on $M$):
\begin{align*}
C &= \tfrac{12\sqrt{3}}{M} \left( \tfrac{2M+1}{2} \right)^{3/2}, D = 162 M^2.
\end{align*}
Then, under Condition~\ref{condition:sampling} we have with probability at least $1-\delta$,
\begin{equation*}
\min_{1 \leq j \leq k} \mu(\x_j) \leq \max(C, D) \frac{\Delta_0}{k} + \sqrt{3} \delta^{-3/4} \epsilon_1^{3/2} + 4 \delta^{-3/2}\epsilon_2^{3/2}.
\end{equation*}

\end{restatable}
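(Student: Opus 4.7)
My plan is to combine the first- and second-order optimality of the subsampled cubic model with the Lipschitz-Hessian bound \eqref{eq:global_bound_tayler_expansion} to obtain per-iteration control of $\|\nabla f(\x_{k+1})\|$ and $-\lambda_{\min}(\nabla^2 f(\x_{k+1}))$ in terms of (i) sampling errors and (ii) $\|\h_k^*\|$; to then telescope the cubic progress from Lemma \ref{lemma:model_decrease} across iterations; and finally to convert the expectation bounds in Condition \ref{condition:sampling} to a high-probability tail via Markov's inequality.

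\textbf{Step 1: Per-iteration first-order bound.} From \eqref{eq:first_order_optimality}, $\nabla f(\x_k)_{[S_k]} + \nabla^2 f(\x_k)_{[S_k]}\h_k^* = -\tfrac{M}{2}\|\h_k^*\|\h_k^*$. Combining with the Lipschitz-Hessian inequality $\|\nabla f(\x_{k+1}) - \nabla f(\x_k) - \nabla^2 f(\x_k)\h_k^*\| \leq \tfrac{L_2}{2}\|\h_k^*\|^2$ and the triangle inequality yields
\begin{equation*}
\|\nabla f(\x_{k+1})\| \leq \|\nabla f(\x_k) - \nabla f(\x_k)_{[S_k]}\| + \|\nabla^2 f(\x_k) - \nabla^2 f(\x_k)_{[S_k]}\|\cdot\|\h_k^*\| + \tfrac{L_2+M}{2}\|\h_k^*\|^2.
\end{equation*}
I split the cross term via AM--GM ($ab \leq \tfrac12 a^2 + \tfrac12 b^2$), use $L_2 \leq M$, and raise to the $3/2$-power using $(a+b+c)^{3/2} \leq \sqrt 3(a^{3/2}+b^{3/2}+c^{3/2})$. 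The coefficient of $\|\h_k^*\|^3$ becomes $\sqrt{3}\bigl(\tfrac{2M+1}{2}\bigr)^{3/2}$, which, after being converted into $f$-decrease via Lemma \ref{lemma:model_decrease} (i.e.\ $\|\h_k^*\|^3 \leq \tfrac{12}{M}(f(\x_k)-f(\x_{k+1}))$), contributes the constant $C = \tfrac{12\sqrt3}{M}\bigl(\tfrac{2M+1}{2}\bigr)^{3/2}$.

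\textbf{Step 2: Per-iteration second-order bound.} Proposition \ref{prop:second_order_optimality} gives $-\lambda_{\min}(\nabla^2 f(\x_k)_{[S_k]}) \leq \tfrac{M}{2}\|\h_k^*\|$, so by $L_2$-Lipschitzness of the Hessian,
\begin{equation*}
-\lambda_{\min}(\nabla^2 f(\x_{k+1})) \leq -\lambda_{\min}(\nabla^2 f(\x_k)) + L_2\|\h_k^*\| \leq \|\nabla^2 f(\x_k) - \nabla^2 f(\x_k)_{[S_k]}\| + \tfrac{3M}{2}\|\h_k^*\|.
\end{equation*}
Cubing with $(a+b)^3 \leq 4(a^3+b^3)$ introduces the coefficient $\tfrac{27M^3}{2}$ on $\|\h_k^*\|^3$, which telescopes into $D = 162 M^2$. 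Together with Step 1, the bound $\mu(\x_{k+1}) \leq \|\nabla f(\x_{k+1})\|^{3/2} + [-\lambda_{\min}(\nabla^2 f(\x_{k+1}))]^3$, averaging over $k = 0, \dots, K-1$, and applying $\sum_k \|\h_k^*\|^3 \leq \tfrac{12\Delta_0}{M}$ yields
\begin{equation*}
\min_{1\leq j\leq K} \mu(\x_j) \leq \tfrac1K\sum_{k=0}^{K-1}\mu(\x_{k+1}) \leq \max(C,D)\tfrac{\Delta_0}{K} + (\text{averaged sampling errors}).
\end{equation*}

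\textbf{Step 3: High-probability control of the residual.} The residual terms are $\tfrac1K\sum\|\nabla f - \nabla f_{[S_k]}\|^{3/2}$ and $\tfrac1K\sum\|\nabla^2 f - \nabla^2 f_{[S_k]}\|^3$, multiplied by the coefficients identified in Steps 1--2. From the proofs of Lemma \ref{lemma:concentration_bounds_gradient_norm} and \ref{lemma:concentration_bounds_hess_vec}, stronger second-moment bounds are available: $\E\|\nabla f - \nabla f_{[S_k]}\|^2 \leq \epsilon_1^2$ and $\E\|\nabla^2 f - \nabla^2 f_{[S_k]}\|_F^2 \leq \epsilon_2$. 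Applying Markov's inequality at the level of the squared norm (using $\{\|\cdot\|^3 \geq t\} = \{\|\cdot\|^2 \geq t^{2/3}\}$) gives, per iteration, $\Pr(\|\nabla f - \nabla f_{[S_k]}\|^{3/2} \geq \delta^{-3}\epsilon_1^{3/2}) \leq \delta^3$ and $\Pr(\|\nabla^2 f - \nabla^2 f_{[S_k]}\|^3 \geq \delta^{-3}\epsilon_2^{3/2}) \leq \delta^2$. Combining via union/averaging over iterations produces the $\sqrt{3}\delta^{-3}\epsilon_1^{3/2}$ term from the gradient side and, after taking the worst of the gradient-side contribution ($E = \sqrt{3}$) and the Hessian-side contribution inherited from Step 2 ($F = 648 M^2$) to $\epsilon_2^{3/2}$, the $\max(E,F)\delta^{-3}\epsilon_2^{3/2}$ term.

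\textbf{Main obstacle.} The core technical step is Step 3. Condition \ref{condition:sampling} only controls a first moment of the Hessian sampling error, yet Step 2 requires a cubic moment of the same quantity. The natural remedy is to invoke the tighter second-moment bound implicit in the proof of Lemma \ref{lemma:concentration_bounds_hess_vec} and apply Markov at the level of $\|\cdot\|^2$, but one must then carefully coordinate (i) the $\delta^{-3}$ scaling, (ii) the single probability budget $\delta$ shared across all $K$ iterations (via union bound or by applying Markov to the full averaged sum), and (iii) the pairing of the gradient- and Hessian-sampling contributions that together produce the $\max(E,F)$ constant. Once that accounting is resolved, the rest of the argument is mechanical: combine Steps 1--2, telescope using Lemma \ref{lemma:model_decrease}, take the minimum, and read off the constants.
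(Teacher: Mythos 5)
Your proposal follows essentially the same route as the paper: the same per-iteration gradient bound (triangle inequality, first-order optimality of the model, Young's inequality on the Hessian-error cross term), the same eigenvalue bound via Proposition~\ref{prop:second_order_optimality} and Lipschitzness, the same $(a+b+c)^{3/2}\le\sqrt3\,(a^{3/2}+b^{3/2}+c^{3/2})$ and cubing inequalities, telescoping through Lemma~\ref{lemma:model_decrease}, and Markov's inequality applied to the second-moment bounds hidden in the proofs of the concentration lemmas; all four constants match. The only deviations are minor: bounding $\mu$ by the \emph{sum} of the two criticality measures and averaging yields $(C+D)\Delta_0/k$ rather than $\max(C,D)\Delta_0/k$ (the paper instead derives the two minima separately), and the union-bound-over-$K$-iterations issue you flag as the main obstacle is present, and equally unresolved, in the paper's own proof, which simply telescopes the per-iteration high-probability bounds.
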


Theorem~\ref{thm:convergence_ball} states that Algorithm~\ref{alg:SSCN} converges to an $\epsilon$-second-order stationary point at a rate of $\bigO(\epsilon^{-3/2}, \epsilon^{-3})$, up to a ball whose radius is determined by $\epsilon_1$ and $\epsilon_2$. We also expect that the constants in the bound could be made tighter even if it results in a somewhat less readable proof. However, the interesting aspect of this theorem is that it shows that we obtain the same convergence as cubic regularization up to a ball. Next, we turn our attention to characterizing how $\epsilon_1$ and $\epsilon_2$ depend on the number of sampled coordinates.

In order to prove the theorem, we will first prove two lemmas, Lemma \ref{lemma:bounding_full_grad_by_s_k} and~\ref{lemma:bounding_full_hessian_by_s_k}, that relate the gradient and the Hessian of the objective function $f$ with the norm of the step $\h_k^*$.

\begin{restatable}{lemma}{BoundingFullGradByS}
\label{lemma:bounding_full_grad_by_s_k}    
    For any $\x_k \in \R^n$, let $\h_k^* = \arg \min_\h m_3(\h; \x_k,\SM_k)$ for an arbitrary $\SM_k \subset [n] $ and let $M \geq L_2$. Then, the \emph{full} gradient norm $\| \nabla f(\x_k + \h_k^*) \|$ at the new iterate can be bounded with probability at least $1-\delta$ as
    \begin{equation*}
        \| \nabla f(\x_k + \h_k^*) \| \leq \frac{2M+1}{2} \| \h_k^* \|^2 + \delta^{-1/2} \epsilon_1 + \frac12 \delta^{-1} \epsilon_2.
    \end{equation*}
\end{restatable}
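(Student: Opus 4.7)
The plan is to combine the Lipschitz-Hessian Taylor bound, the first-order optimality of the cubic subproblem, and Markov's inequality applied to the concentration bounds from Lemma~\ref{lemma:concentration_bounds_gradient_norm_and_hessian}. Starting from~\eqref{eq:global_bound_tayler_expansion} applied to the gradient, we obtain the deterministic estimate
$$
\| \nabla f(\x_k + \h_k^*) - \nabla f(\x_k) - \nabla^2 f(\x_k)\h_k^* \| \;\leq\; \tfrac{L_2}{2}\| \h_k^* \|^2 \;\leq\; \tfrac{M}{2}\| \h_k^* \|^2,
$$
using $M \geq L_2$. Inserting and subtracting the sampled gradient and Hessian then yields, by the triangle inequality,
$$
\| \nabla f(\x_k + \h_k^*) \| \;\leq\; \tfrac{M}{2}\| \h_k^* \|^2 + \| \nabla f(\x_k) - \nabla f(\x_k)_{[S_k]} \| + \| \nabla^2 f(\x_k) - \nabla^2 f(\x_k)_{[S_k]} \| \cdot \| \h_k^* \| + \| \nabla f(\x_k)_{[S_k]} + \nabla^2 f(\x_k)_{[S_k]} \h_k^* \|.
$$

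Next I would control the last term by invoking the first-order optimality condition~\eqref{eq:first_order_optimality} for the cubic subproblem, which gives $\| \nabla f(\x_k)_{[S_k]} + \nabla^2 f(\x_k)_{[S_k]} \h_k^* \| = \tfrac{M}{2}\|\h_k^*\|^2$. Combining with the earlier estimate leaves us with
$$
\| \nabla f(\x_k + \h_k^*) \| \;\leq\; M\| \h_k^* \|^2 + \| \nabla f(\x_k) - \nabla f(\x_k)_{[S_k]} \| + \| \nabla^2 f(\x_k) - \nabla^2 f(\x_k)_{[S_k]} \| \cdot \| \h_k^* \|.
$$
To get the right coefficient in front of $\|\h_k^*\|^2$, I would apply Young's inequality $ab \leq \tfrac{1}{2}a^2 + \tfrac{1}{2}b^2$ to the Hessian cross-term with $a = \|\nabla^2 f(\x_k) - \nabla^2 f(\x_k)_{[S_k]}\|$ and $b = \|\h_k^*\|$, producing an additional $\tfrac{1}{2}\|\h_k^*\|^2$ term (yielding the announced $\tfrac{2M+1}{2}$) and a squared Hessian-deviation term.

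The final step is passing from expectation to probability. Using Markov on the squared deviations, together with the second-moment bounds in the proofs of Lemma~\ref{lemma:concentration_bounds_gradient_norm} and Lemma~\ref{lemma:concentration_bounds_hess_vec} (namely $\E\|\nabla f(\x_k)-\nabla f(\x_k)_{[S_k]}\|^2 \leq \epsilon_1^2$ and $\E\|\nabla^2 f(\x_k)-\nabla^2 f(\x_k)_{[S_k]}\|^2 \leq \epsilon_2$), I would tune the Markov thresholds so that each bad event has probability at most $\delta/2$, then union-bound to get the claim with probability $1-\delta$. The main obstacle is purely bookkeeping: calibrating Young's constants and the Markov thresholds so that the three error contributions collapse exactly to $\tfrac{2M+1}{2}\|\h_k^*\|^2 + \delta^{-2}\epsilon_1 + \tfrac{1}{2}\delta^{-2}\epsilon_2$; no new conceptual ingredient beyond optimality plus Markov is required.
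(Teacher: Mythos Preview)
Your proposal is correct and follows essentially the same route as the paper: triangle-inequality decomposition using $\nabla\phi(\h_k^*)=\nabla f(\x_k)_{[S_k]}+\nabla^2 f(\x_k)_{[S_k]}\h_k^*$, the Lipschitz-Hessian bound, first-order optimality~\eqref{eq:first_order_optimality}, Young's inequality on the Hessian cross-term, and Markov to pass from the second-moment bounds to high probability. The only cosmetic difference is that the paper applies the high-probability bound to the Hessian deviation \emph{before} Young's inequality (obtaining $\sqrt{\delta^{-2}\epsilon_2}\,\|\h_k^*\|$ and then splitting), whereas you apply Young first and then Markov to the squared deviation; both orderings yield the same constants.
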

\begin{proof}

By the triangle inequality, we have:
\begin{align}\label{eq:bound_full_gradient_norm}
\| \nabla f(\x_k + \h_k^*) \| \leq \| \nabla f(\x_k + \h_k^*) - \nabla \phi(\h_k^*) \| +  \| \nabla \phi(\h_k^*) \|,
\end{align}
where $\nabla \phi(\h_k^*) = \nabla f(\x_k)_{[S_k]} + \nabla^2 f(\x_k)_{[S_k]} \h_k^*$.

From the first-order optimality condition in \eqref{eq:first_order_optimality} it follows  that $\| \nabla \phi(\h_k^*) \| = \frac{M}{2} \| \h_k^* \|^2$. Therefore we focus on the first term, for which
\beq \label{eq:bound_full_gradient_norm_first_term}
\ba{rcl}
& & \!\!\!\!\!\!\!\!\!\!\!\!\!\!\!\!\!\!
\| \nabla f(\x_k + \h_k^*) - \nabla f(\x_k)_{[S_k]} - \nabla^2 f(\x_k)_{[S_k]} \h_k^* \| \\
\\
&\leq&  \| \nabla f(\x_k + \h_k^*) - \nabla f(\x_k) - \nabla^2 f(\x_k) \h_k^* \|  \\
\\
& & \qquad + \; \| \nabla f(\x_k)_{[S_k]} - \nabla f(\x_k) \| + \| \nabla^2 f(\x_k)_{[S_k]} \h_k^* - \nabla^2 f(\x_k) \h_k^* \|  \\
\\
&\overset{\cref{assumption:lipschitz_hessian}}{\leq}&
\frac{L_2}{2} \| \h_k^* \|^2 + \| \nabla f(\x_k)_{[S_k]} - \nabla f(\x_k) \| + \| \nabla^2 f(\x_k)_{[S_k]} \h_k^* - \nabla^2 f(\x_k) \h_k^* \|.
\ea
\eeq

From Eq. \eqref{eq:bound_full_gradient_norm} and Eq. \eqref{eq:bound_full_gradient_norm_first_term} we have that 
$$
\ba{rcl}
\| \nabla f(\x_k + \h_k^*) \| 
&\leq& \frac{M+L_2}{2} \|\h_k^*\|^2 + \| \nabla f(\x_k)_{[S_k]} - \nabla f(\x_k) \| + \| \nabla^2 f(\x_k)_{[S_k]} \h_k^* - \nabla^2 f(\x_k) \h_k^* \| \\
\\
&\leq& \frac{M+L_2}{2} \|\h_k^*\|^2 + \| \nabla f(\x_k)_{[S_k]} - \nabla f(\x_k) \| + \| \nabla^2 f(\x_k)_{[S_k]} - \nabla^2 f(\x_k) \| \| \h_k^* \| \\
\\
&\stackrel{(i)}{\leq}& \frac{M+L_2}{2} \|\h_k^*\|^2 + \delta^{-1/2} \epsilon_1 + \sqrt{\delta^{-1} \epsilon_2} \| \h_k^* \| \\
\\
&\stackrel{(ii)}{\leq}& \frac{M+L_2}{2} \|\h_k^*\|^2 + \delta^{-1/2} \epsilon_1 + \frac12 \delta^{-1} \epsilon_2 + \frac12 \| \h_k^* \|^2 \\
\\
&\leq& \frac{2M+1}{2} \|\h_k^*\|^2 + \delta^{-1/2} \epsilon_1 + \frac12 \delta^{-1} \epsilon_2,
\ea
$$
where $(i)$ uses Eq.~\eqref{eq:high_prob_bound_gradient_concentration} and ~\eqref{eq:high_prob_bound_hessian_concentration}  and $(ii)$ uses Young's inequality for products.
\end{proof}

\begin{restatable}{lemma}{BoundingFullHessianByS}
\label{lemma:bounding_full_hessian_by_s_k}    
    For any $\x_k \in \R^n$, let $\h_k^* = \arg \min_\h m_3(\h; \x_k,\SM_k)$ for an arbitrary $\SM_k \subset [n] $. %
    Then, the smallest eigenvalue at the new iterate can be bounded with probability at least $1-\delta$ as
    \begin{equation}
    - \lambda_{\min}(\nabla^2 f(\x_{k} + \h_{k}^*)) \leq \frac{3M}{2} \|\h_k^*\| + \delta^{-1} \sqrt{\epsilon_2}.
    \end{equation}
\end{restatable}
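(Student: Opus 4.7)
The plan is to bound $-\lambda_{\min}(\nabla^2 f(\x_k + \h_k^*))$ by decomposing it into three pieces that separately capture (i) Hessian variation along the step, (ii) coordinate-sampling error, and (iii) the model's own second-order optimality. For any unit vector $\u \in \R^n$, I would write
\[
-\u^\top \nabla^2 f(\x_k + \h_k^*) \u \;=\; A + B + C,
\]
where
\[
A := -\u^\top [\nabla^2 f(\x_k + \h_k^*) - \nabla^2 f(\x_k)] \u, \quad B := -\u^\top [\nabla^2 f(\x_k) - \nabla^2 f(\x_k)_{[S_k]}] \u,
\]
and $C := -\u^\top \nabla^2 f(\x_k)_{[S_k]} \u$.

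For $A$, Assumption \ref{assumption:lipschitz_hessian} gives $\|\nabla^2 f(\x_k + \h_k^*) - \nabla^2 f(\x_k)\| \leq L_2 \|\h_k^*\| \leq M \|\h_k^*\|$, since $M \geq L_2$. For $B$, I would apply Condition \ref{condition:sampling} (equivalently, Lemma \ref{lemma:concentration_bounds_hess_vec}) together with Markov's inequality: from $\E_{S_k} \|\nabla^2 f(\x_k) - \nabla^2 f(\x_k)_{[S_k]}\| \leq \sqrt{\epsilon_2}$, the choice $t = \delta^{-1}\sqrt{\epsilon_2}$ yields, with probability at least $1-\delta$, $\|\nabla^2 f(\x_k) - \nabla^2 f(\x_k)_{[S_k]}\| \leq \delta^{-1}\sqrt{\epsilon_2}$, so $B \leq \delta^{-1}\sqrt{\epsilon_2}$.

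For $C$, I would invoke Proposition \ref{prop:second_order_optimality}, which strengthens the ordinary necessary optimality condition by using the global-minimum property of $\h_k^*$: it gives $\nabla^2 f(\x_k)_{[S_k]} + \tfrac{M}{2}\|\h_k^*\|\,\Im \succeq 0$, i.e.\ $C \leq \tfrac{M}{2}\|\h_k^*\|$. Summing the three bounds and maximizing over $\|\u\|=1$ yields
\[
-\lambda_{\min}(\nabla^2 f(\x_k + \h_k^*)) \;\leq\; M\|\h_k^*\| + \delta^{-1}\sqrt{\epsilon_2} + \tfrac{M}{2}\|\h_k^*\| \;=\; \tfrac{3M}{2}\|\h_k^*\| + \delta^{-1}\sqrt{\epsilon_2},
\]
as claimed.

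The only genuinely non-routine step is $C$: the standard necessary condition $\nabla^2 \bar m(\h_k^*) \succeq 0$ only controls quadratic forms on directions orthogonal to $\h_k^*$, which is too weak here. The stronger global-minimum characterization in Proposition \ref{prop:second_order_optimality} is exactly what closes the gap, and this is where I would focus attention when writing out the proof. The remaining bounds on $A$ and $B$ are immediate from Lipschitz continuity and the Markov-type concentration bound, respectively.
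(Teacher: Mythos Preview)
Your proposal is correct and follows essentially the same route as the paper: the paper writes the argument as a chain of matrix inequalities $\nabla^2 f(\x_k+\h_k^*)\succeq \nabla^2 f(\x_k)-L_2\|\h_k^*\|\Im \succeq \nabla^2 f(\x_k)_{[S_k]}-\delta^{-1}\sqrt{\epsilon_2}\,\Im-L_2\|\h_k^*\|\Im \succeq -\bigl(\tfrac{M}{2}+L_2\bigr)\|\h_k^*\|\Im-\delta^{-1}\sqrt{\epsilon_2}\,\Im$, which is exactly your three-term decomposition $A+B+C$ phrased in terms of quadratic forms. The ingredients (Lipschitz Hessian, Markov on the sampling error, and Proposition~\ref{prop:second_order_optimality} for the global-minimum optimality) and the resulting constants coincide.
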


\begin{proof}

Recall that
\begin{equation}
    \label{eq:lipschitz_hessian_submatrix}
    \| ( \nabla^2 f(\x) - \nabla^2 f(\y) ) \| \leq L_2 \| \x - \y \|,
\end{equation}

By Eq.~\eqref{eq:pos_semidefiniteness_of_expression}, $\nabla^2 f(\x_k)_{[S_k]} + \frac{M}{2} \| \h_k^* \| \cdot \Im \succeq 0$.
Therefore
$$
\ba{rcl}
\nabla^2 f(\x_{k} + \h_{k}^*) 
&\overset{\eqref{eq:lipschitz_hessian_submatrix}}{\succeq}& 
\nabla^2 f(\x_k) - L_2 \| \h_k^* \| \Im  \\
\\
&\overset{\eqref{eq:high_prob_bound_hessian_concentration}}{\succeq}& \nabla^2 f(\x_k)_{[S_k]} - \sqrt{\delta^{-1} \epsilon_2} \Im - L_2 \| \h_k^* \| \Im  \\
\\
&\overset{Prop. \ref{prop:second_order_optimality}}{\succeq}& 
-\left( \frac{1}{2}M + L_2 \right) \|\h_k^*\| \Im - \sqrt{\delta^{-1} \epsilon_2} \Im  \\
\\
&\succeq& -\frac{3M}{2} \|\h_k^*\| \Im - \sqrt{\delta^{-1} \epsilon_2} \Im,
\ea
$$
where we used $L_2 \leq M$.
Therefore, we have
\begin{equation}
\|\h_k^*\| \geq \frac{2}{3M} \left( - \lambda_{\min}(\nabla^2 f(\x_{k} + \h_{k}^*)) - \sqrt{\delta^{-1} \epsilon_2} \right),
\end{equation}
which implies
\begin{equation}
- \lambda_{\min}(\nabla^2 f(\x_{k} + \h_{k}^*)) \leq \frac{3M}{2} \|\h_k^*\| + \sqrt{\delta^{-1} \epsilon_2}.
\end{equation}

\end{proof}

Now we are ready to prove~\cref{thm:convergence_ball}.
\begin{proof}[Proof of~\cref{thm:convergence_ball}]

By the convexity of the function $g(x) = x^{3/2}, x \geq 0$, Jensen's inequality yields $(\sum_{i=1}^d x_i)^{3/2} \leq \sqrt{d} \sum_{i=1}^d (x_i)^{3/2} $.
Applied to the result of Lemma~\ref{lemma:bounding_full_grad_by_s_k}, we obtain
\beq \label{eq:bound_norm_sk}
\ba{rcl}
\| \nabla f(\x_k + \h_k^*) \|^{3/2} 
&\leq& \left( \frac{M+L_2+1}{2} \| \h_k^* \|^2 + \delta^{-1/2}\epsilon_1 + \delta^{-1}\epsilon_2 \right)^{3/2} \\
\\
&\leq& \sqrt{3} \left( \frac{M+L_2+1}{2} \right)^{3/2} \| \h_k^* \|^3 + \sqrt{3} \delta^{-3/4}\epsilon_1^{3/2} + \sqrt{3} \delta^{-3/2}\epsilon_2^{3/2} \\
\\
&\leq& \sqrt{3} \left( \frac{2M+1}{2} \right)^{3/2} \| \h_k^* \|^3 + \sqrt{3} \delta^{-3/4}\epsilon_1^{3/2} + \sqrt{3} \delta^{-3/2}\epsilon_2^{3/2}.
\ea
\eeq

Let $C_M = \frac{M}{12 \sqrt{3}} \left( \frac{2}{2M+1} \right)^{3/2}$. Using a telescoping argument, we have
$$
\ba{rcl}
    f(\x_0) - f^* 
    &\geq& \sum\limits_{i=0}^{k-1} \left( f(\x_i) - f(\x_{i+1}) \right) \\
    \\
    &\overset{\text{Lem.}~\ref{lemma:model_decrease}}{\geq}& 
    \sum\limits_{i=0}^{k-1} \frac{M}{12} \| \h_i^* \|^3 \\
    \\
    &\overset{\text{Lem.}~\ref{lemma:bounding_full_grad_by_s_k}}{\geq}&
  C_M \sum\limits_{i=0}^{k-1} \|  \nabla f (\x_i + \h_i^*) \|^{3/2} - \sqrt{3} k \delta^{-3/4} C_M \epsilon_1^{3/2} - \sqrt{3} k \delta^{-3/2} C_M \epsilon_2^{3/2} \\
  \\
    &\geq& C_M k \min\limits_{0 \leq i \leq k-1} \|  \nabla f (\x_i + \h_i^*) \|^{3/2} - \sqrt{3} k \delta^{-3/4} C_M \epsilon_1^{3/2} - \sqrt{3} k \delta^{-3/2} C_M \epsilon_2^{3/2}  \\
    \\
    &\geq& C_M k \min\limits_{1 \leq i \leq k} \|  \nabla f (\x_i) \|^{3/2} - \sqrt{3} k \delta^{-3/4} C_M \epsilon_1^{3/2} - \sqrt{3} k \delta^{-3/2} C_M \epsilon_2^{3/2}.
\ea
$$
Now rearranging for $\|\nabla f(\x_j)\|$ we get
\begin{equation}
\min_{1 \leq i \leq k} \|  \nabla f (\x_i) \|^{3/2} \leq \frac{1}{C_M k} (f(\x_0) - f^*) + \sqrt{3} \delta^{-3/4} \epsilon_1^{3/2} + \sqrt{3} \delta^{-3/2} \epsilon_2^{3/2}.
\end{equation}

We can also obtain a guarantee in terms of second-order optimality as follows. By Lemma~\ref{lemma:bounding_full_hessian_by_s_k}, we have $- \lambda_{\min}(\nabla^2 f(\x_{k} + \h_{k}^*)) \leq \frac{3M}{2} \|\h_k^*\| + \sqrt{\delta^{-1} \epsilon_2}$.
By the convexity of the function $g(x) = x^3$ on $\R^+$, Jensen's inequality yields $(\sum_{i=1}^d x_i)^3 \leq d^2 \sum_{i=1}^d (x_i)^3$, therefore
\begin{align}
& (- \lambda_{\min}(\nabla^2 f(\x_{k} + \h_{k}^*)))^3 \leq \frac{27 M^3}{2} \|\h_k^*\|^3 + 4 \delta^{-3/2}\epsilon_2^{3/2} \nonumber \\
\implies & \|\h_k^*\|^3 \geq \frac{2}{27 M^3} \left( (- \lambda_{\min}(\nabla^2 f(\x_{k} + \h_{k}^*)))^3 - 4 \delta^{-3/2}\epsilon_2^{3/2} \right).
\label{eq:bound_norm_of_stepsize_by_Hessian}
\end{align}

Using a telescoping argument, we have
$$
\ba{rcl}
    f(\x_0) - f^* &\geq& \sum\limits_{i=0}^{k-1} f(\x_i) - f(\x_{i+1}) \\ 
    \\ 
&\overset{\text{Lem.}~\ref{lemma:model_decrease}}{\geq}& 
\sum\limits_{i=0}^{k-1} \frac{M}{12} \| \h_i^* \|^3 \\ 
\\
&\overset{\eqref{eq:bound_norm_of_stepsize_by_Hessian}}{\geq}&
  \frac{1}{162 M^2} \sum\limits_{i=0}^{k-1} \left( (- \lambda_{\min}(\nabla^2 f(\x_{i} + \h_{i}^*)))^3 - 4 \delta^{-3/2} \epsilon_2^{3/2} \right) \\
  \\
&\geq&   \frac{1}{162 M^2} \sum\limits_{i=0}^{k-1} (- \lambda_{\min}(\nabla^2 f(\x_{i} + \h_{i}^*)))^3 - {\frac{2}{81 M^2}} k \delta^{-3/2} \epsilon_2^{3/2} \\
\\
&\geq&   \frac{1}{162 M^2} \min\limits_{1 \leq i \leq k} (- \lambda_{\min}(\nabla^2 f(\x_{i} + \h_{i}^*)))^3 - {\frac{2}{81 M^2}}  k \delta^{-3/2} \epsilon_2^{3/2}.
\ea
$$
By rearranging, we get
\begin{align}
\min_{1 \leq i \leq k} (- \lambda_{\min}(\nabla^2 f(\x_{k} + \h_{k}^*)))^3 \leq \frac{162 M^2}{k} (f(\x_0) - f^*) + 4 \delta^{-3/2} \epsilon_2^{3/2}.
\end{align}

\end{proof}

\subsection{Adaptive sampling}
\label{sec:app_adaptive_sampling}

We now present our main convergence result when using the adaptive sampling scheme described in Section~\ref{sec:adaptive_sampling}.\\
\begin{remark}
    Note that the number of sampled coordinates, $\tau(S)$, does not need to be the same for every iteration. Since we analyze the progress of each iteration $k \geq 0$ separately, we allow for different probability distributions when selecting the coordinates at each iteration. In Lemma 4.4, we consider the distribution of selecting a particular coordinate subset, conditioning on the previous iterate. Then, in the final bound we consider the randomness over the sigma-algebra of all the iterates $x_k$. In practice, one can choose the distribution for  $\tau_k$ to be uniform for each $k \geq 0$, but we emphasize once more that $\tau_k$ can vary for different $k$ (for instance we can sample fewer coordinates at the beginning of the optimization process).
\end{remark}

\MainConvergence*

\begin{proof}

By Lemma~\ref{lemma:bounding_full_grad_by_s_k},
\begin{equation}
\| \nabla f(\x_k + \h_k^*) \| \leq \frac{2M+1}{2} \|\h_k^*\|^2 + \frac{(2\delta^{-1/2}+\delta^{-1}) c_{k-1}}{2} \| \h_{k-1}^* \|^2.
\end{equation}

By the convexity of the function $g(x) = x^{3/2}$, Jensen's inequality yields $(\frac{1}{d} \sum_{i=1}^d \x_i)^{3/2} \leq \frac{1}{d} \sum_{i=1}^d (\x_i)^{3/2}$. Therefore
\beq \label{eq:bound_nablaf_1}
\ba{rcl}
\| \nabla f(\x_k + \h_k^*) \|^{3/2} &\leq& \frac{1}{2} \left( (2M+1)^{3/2}  \|\h_k^*\|^3 + ((2\delta^{-1/2}+\delta^{-1}) c_{k-1})^{3/2} \| \h_{k-1}^* \|^3 \right).
\ea
\eeq

Let $\h^*_{-1}$ be such that $\E_{[S]} \| \nabla^2 f(\x_0) - \nabla^2 f(\x_0)_{[S]} \| \leq \sqrt{c_{-1}} \| \h_{-1}^* \|$. Summing Eq.~\eqref{eq:bound_nablaf_1} over $K$, we get
$$
\ba{rcl}
\sum\limits_{i=0}^{K-1} \| \nabla f(\x_i + \h_i^*) \|^{3/2} 
&\leq& 
\frac{1}{2} \left( (2M+1)^{3/2} \sum\limits_{i=0}^{K-1} \|\h_i^*\|^3 + \sum\limits_{i=0}^{K-1} ((2\delta^{-1/2}+\delta^{-1}) c_{i-1})^{3/2} \| \h_{i-1}^* \|^3 \right)  \\
\\
&\leq& \frac{1}{2} \Big( (2M+1)^{3/2} \sum\limits_{i=0}^{K-1} \|\h_i^*\|^3 +  \sum\limits_{i=0}^{K-1} ((2\delta^{-1/2}+\delta^{1}) c_i)^{3/2} \| \h_i^* \|^3 \\
&&\quad +((2\delta^{-1/2}+\delta^{-1}) c_{-1})^{3/2} \| \h_{-1}^* \|^3 \Big) \\
\\
 &\leq& \frac{1}{2} \Big( (2M+1)^{3/2} \sum\limits_{i=0}^{K-1} \|\h_i^*\|^3 + (2\delta^{-1/2}+\delta^{-1})^{3/2} \max c_i^{3/2} \sum\limits_{i=0}^{K-1} \| \h_i^* \|^3 \\ 
&& \quad  + ((2\delta^{-1/2}+\delta^{-1}) c_{-1})^{3/2} \| \h_{-1}^* \|^3 \Big) \\
&\leq& \frac{1}{2} \Big( \left((2M+1)^{3/2} + (4 \delta^{-3/4} + \sqrt{2} \delta^{-3/2}) \max c_i^{3/2} \right) \sum\limits_{i=0}^{K-1} \|\h_i^*\|^3 \\
&& \quad + ((2\delta^{-1/2}+\delta^{-1}) c_{-1})^{3/2} \| \h_{-1}^* \|^3 \Big)
\ea
$$
which implies

\begin{equation}
\sum\limits_{i=0}^{K-1} \|\h_i^*\|^3 \geq 2 C_M \sum\limits_{i=0}^{K-1} \| \nabla f(\x_i + \h_i^*) \|^{3/2} - ((2\delta^{-1/2}+\delta^{-1}) c_{-1})^{3/2} C_M \| \h_{-1}^* \|^3.
\label{eq:lower_bound_sum_s_norm_cube}
\end{equation}
where $C_M := \left((2M+1)^{3/2} + (4 \delta^{-3/4} + \sqrt{2} \delta^{-3/2}) \max c_i^{3/2} \right) ^{-1}$.

Using a telescoping argument, we have
$$
\ba{rcl}
f(\x_0) - f^* &\geq& \sum\limits_{i=0}^{K-1} f(\x_i) - f(\x_{i+1})  \\
\\
&\overset{Eq.~\eqref{eq:model_decrease}}{\geq}& 
\frac{M}{12} \sum\limits_{i=0}^{K-1} \| \h_i^* \|^3 \\
&\overset{Eq.~\eqref{eq:lower_bound_sum_s_norm_cube}}{\geq}& 
\frac{M C_M}{6} \sum\limits_{i=0}^{K-1} \| \nabla f(\x_i + \h_i^*) \|^{3/2} - \frac{M}{12} ((2\delta^{-1/2}+\delta^{-1}) c_{-1})^{3/2} C_M \| \h_{-1}^* \|^3
\ea
$$

Rearranging terms, we obtain
\begin{align}
(f(\x_0) - f^*) + \frac{M}{12} ((2\delta^{-1/2}+\delta^{-1})  c_{-1})^{3/2} C_M \| \h_{-1} \|^3 &\geq \frac{M C_M}{6} \sum\limits_{i=0}^{K-1} \| \nabla f(\x_i + \h_i^*) \|^{3/2} \nonumber \\
&\geq \frac{M C_M}{6} K \min\limits_{0 \leq i \leq K-1} \| \nabla f(\x_i + \h_i^*) \|^{3/2} \nonumber \\
&= \frac{M C_M}{6} K \min\limits_{1 \leq i \leq K} \| \nabla f(\x_i) \|^{3/2}.
\end{align}

Now rearranging for $\|\nabla f(\x_i)\|$ we get
\begin{align}
\min_{1 \leq i \leq K} \|  \nabla f (\x_i) \|^{3/2} &\leq 
\frac{1}{K} \left( \frac{6}{M C_M} (f(\x_0) - f^*) + \frac{1}{2} ((2\delta^{-1/2}+\delta^{-1}) c_{-1})^{3/2} \| \h_{-1}^* \|^3 \right),
\end{align}
which implies $\min_{1 \leq j \leq K} \|  \nabla f (\x_j) \| = \mathcal{O}(K^{-2/3})$.

We can also obtain a guarantee in terms of second-order optimality as follows. By Lemma~\ref{lemma:bounding_full_hessian_by_s_k},
\begin{equation}
- \lambda_{\min}(\nabla^2 f(\x_{k} + \h_{k}^*)) \leq \frac{3M}{2} \|\h_k^*\| + \sqrt{\delta^{-1} \epsilon_2}.
\end{equation}

By the convexity of the function $g(x) = x^3$ on $\R^+$, Jensen's inequality yields $(\frac{1}{d} \sum_{i=1}^d \x_i)^3 \leq \frac{1}{d} \sum_{i=1}^d (\x_i)^3$, therefore

\begin{align}
(- \lambda_{\min}(\nabla^2 f(\x_{k} + \h_{k}^*)))^3 &\overset{\text{monotonicity of } x^3}{\leq} \left( \frac{1}{2} \left( {3M} \|\h_k^*\| + 2\sqrt{\delta^{-1} \epsilon_2} \right) \right) ^{3} \nonumber \\
&\overset{\text{Jensen}}{\leq} \frac{1}{2} \left( {27M^3} \|\h_k^*\|^3 + 8 \delta^{-3/2} \epsilon_2^{3/2}\right) \nonumber \\
&= \frac12 \left({27 M^3} \|\h_k^*\|^3 + 8 \delta^{-3/2} c_{k-1}^{3/2} \| \h_{k-1}^* \|^3 \right).
\label{eq:bound_norm_of_stepsize_by_Hessian_2}
\end{align}

Let $\h_{-1}^*$ be such that $\E_S \| \nabla^2 f(\x) - \nabla^2 f(\x)_{[S]} \| \leq \sqrt{c_{-1}} \| \h_{-1}^* \|$. Summing Eq.~\eqref{eq:bound_norm_of_stepsize_by_Hessian_2} over $K$, we get
$$
\ba{rcl}
\sum\limits_{i=0}^{K-1} (- \lambda_{\min}(\nabla^2 f(\x_{i} + \h_{i}^*)))^3 
&\leq& \frac{27 M^3}{{2}} \sum\limits_{i=0}^{k-1} \|\h_i^*\|^3 + {4}\sum\limits_{i=0}^{k-1} \delta^{-3/2} c_{i-1}^{3/2} \| \h_{i-1}^* \|^3  \\
\\
&\leq& \frac{27 M^3}{{2}} \sum\limits_{i=0}^{k-1} \|\h_i^*\|^3 + {4}\left( \sum\limits_{i=0}^{k-1} \delta^{-3/2} c_i^{3/2} \| \h_i^* \|^3 + \delta^{-3/2} c_{-1}^{3/2} \| \h^*_{-1} \|^3 \right)  \\
\\
&\leq& \left( \frac{27 M^3}{{2}} +  {4}\delta^{-3/2} \max_i c_i^{3/2} \right) \sum\limits_{i=0}^{k-1} \| \h_i^* \|^3 + {4}\delta^{-3/2} c_{-1}^{3/2} \| \h^*_{-1} \|^3,
\ea
$$

which implies
\begin{equation}
\sum_{i=0}^{K-1} \|\h_i^*\|^3 \geq D_M \sum_{i=0}^{K-1} (- \lambda_{\min}(\nabla^2 f(\x_{i} + \h_{i}^*)))^3 - {4}\delta^{-3/2}D_M c_{-1}^{3/2} \| \h_{-1}^* \|^3,
\label{eq:lower_bound_sum_s_norm_cube_2}
\end{equation}
where $D_M = \left( \frac{27 M^3}{{2}} +  {4}\delta^{-3/2} \max_i c_i^{3/2} \right)^{-1}$.

Using a telescoping argument, we have
$$
\ba{rcl}
f(\x_0) - f^* &\geq& \sum\limits_{i=0}^{K-1} f(\x_i) - f(\x_{i+1})  \\
 \\
&\overset{Eq.~\eqref{eq:model_decrease}}{\geq}& \frac{M}{12} \sum\limits_{i=0}^{K-1} \| \h_i^* \|^3 \\ \\
&\overset{Eq.~\eqref{eq:lower_bound_sum_s_norm_cube_2}}{\geq}&
\frac{M}{12} D_M \sum_{i=0}^{K-1} (- \lambda_{\min}(\nabla^2 f(\x_{i} + \h_{i}^*)))^3 - \frac{M}{{3}} \delta^{-3/2} D_M c_{-1}^{3/2} \| \h_{-1}^* \|^3.
\ea
$$

By rearranging terms, we obtain
$$
\ba{rcl}
(f(\x_0) - f^*) + \frac{M}{{3}} \delta^{-3/2} D_M c^{3/2}_{-1} \| \h_{-1}^* \|^3 &\geq& 
\frac{M}{12} D_M \sum_{i=0}^{K-1} (- \lambda_{\min}(\nabla^2 f(\x_{i} + \h_{i}^*)))^3  \\
\\
&\geq& \frac{M}{12} D_M K \min\limits_{0 \leq i \leq K-1} (- \lambda_{\min}(\nabla^2 f(\x_{i} + \h_{i}^*)))^3 \\ \\
&\geq& \frac{M}{12} D_M K \min\limits_{1 \leq i \leq K} (- \lambda_{\min}(\nabla^2 f(\x_{i})))^3.
\ea
$$

Thus, we get
\begin{align}
\nonumber
\min_{1 \leq i \leq K} (- \lambda_{\min}(\nabla^2 f(\x_{i})))^3 \leq 
\frac{1}{K} \left( \frac{12}{M D_M} (f(\x_0) - f^*) + {4} \delta^{-3/2} c^{3/2}_{-1} \| \h_{-1}^* \|^3 \right),
\end{align}
which implies $\min_{1 \leq j \leq K} (- \lambda_{\min}(\nabla^2 f(\x_{j}))) = \mathcal{O}(K^{-1/3})$.

\end{proof}

\newpage

\section{EXTENDED RELATED WORK}

In the literature random subspace methods are also widely known as sketch-and-project methods. In the convex setting, the sketched Newton method proposed in~\cite{hanzely2023sketch} is shown to convergence at a global rate of $\mathcal{O}(k^{-2})$. More recently,~\cite{derezinski2024sharp} derive sharp convergence rates for sketch-and-project methods to iteratively solve linear systems via a connection to randomized singular value decomposition. They extend their setting to the minimization of convex function with stochastic Newton methods. In~\cite{lacotte2021adaptive}, the authors make a connection between the sketch size and the efficient Hessian dimensionality and show quadratic convergence for self-concordant, strongly convex functions.\\
Extensions of stochastic Newton methods to inexact tensor methods for convex objectives were proposed in~\cite{lucchi2023sub, agafonov2024inexact, doikov2020inexact}. 
In the context of non-convex optimization,~\cite{cartis2022randomised} derive a high-probability bound for convergence to a first-order stationary bound for randomised subspace methods, which are safeguarded by trust region or quadratic regularization with a complexity of $\mathcal{O}(k^{-1/2})$, which is the same rate as for gradient-based methods~\citep{nesterov2018lectures}.

\newpage 
 
\section{ADDITIONAL EXPERIMENTS}
\label{sec:additional_experiments}

\paragraph{Used datasets and license}
The datasets used in the experiments are all taken from LibSVM ~\citep{CC01a}, which are provided under a modified BSD license.

\subsection{Constant coordinate schedule on additional datasets}

We verified our theoretical results also on three other datasets \textit{duke} ($n = 7129, d = 44$), \textit{madelon} ($n = 500, d=2000)$ and \textit{realsim} ($n = 20.958, d=72.309$). The convergence results can be found in Figure \ref{fig:logistic_regression_nonconv_convergence_duke_madelon}. 

\begin{figure*}[ht]
    \centering
    \begin{tabular}{cc}

        \includegraphics[width=0.31\textwidth]{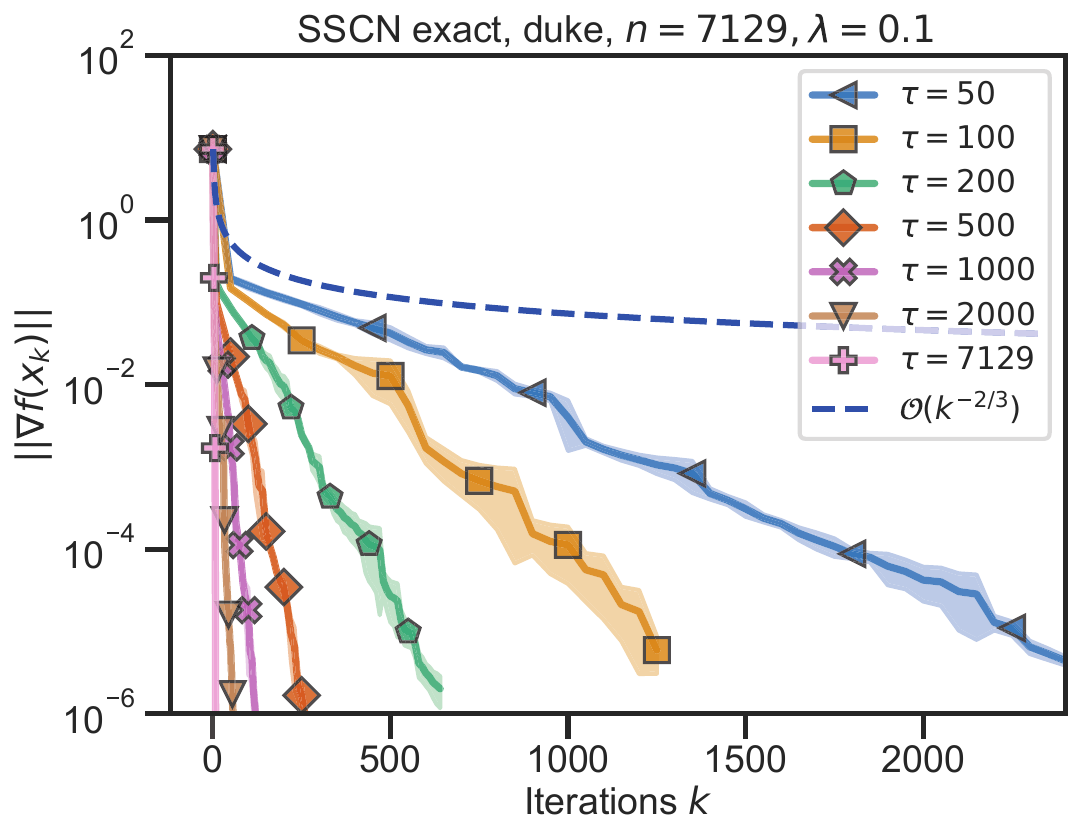}
        \includegraphics[width=0.31\textwidth]{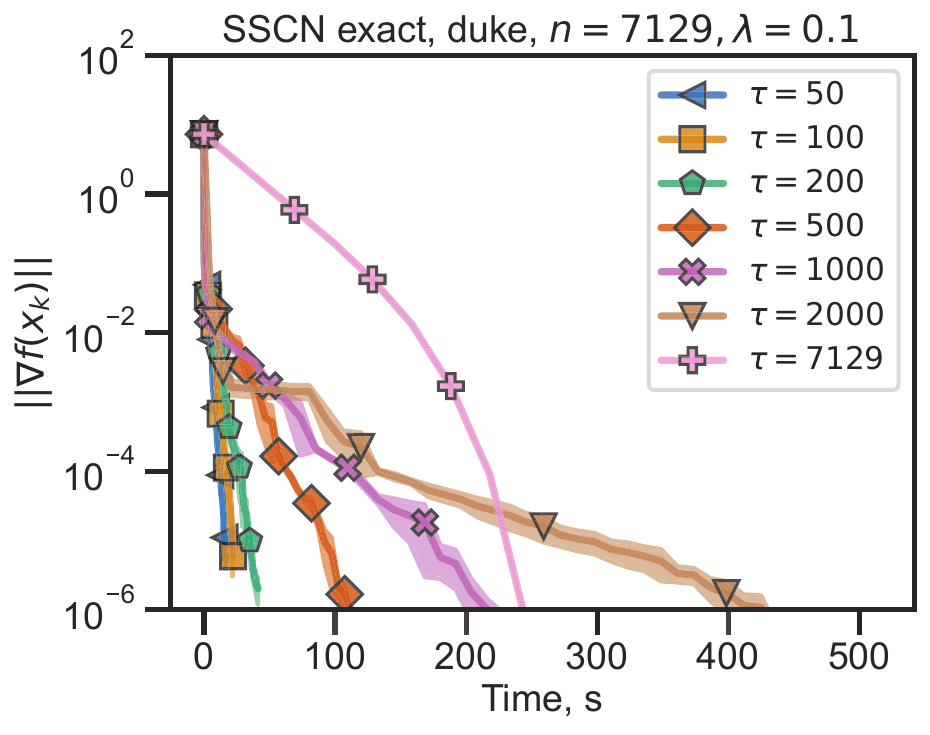}
        \includegraphics[width=0.31\textwidth]{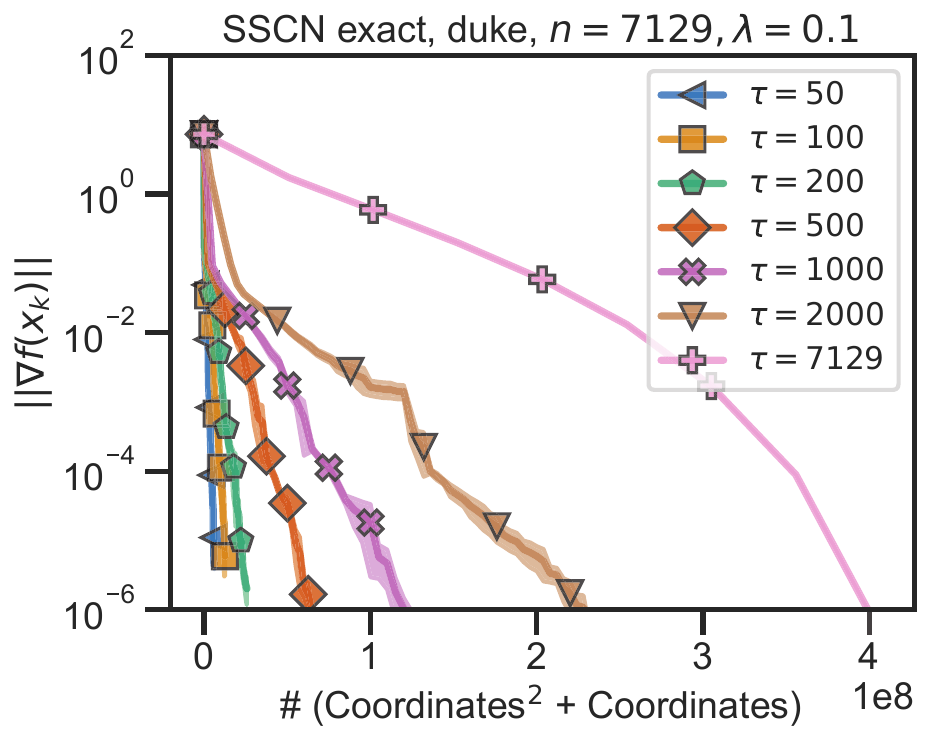}\\
    
        \includegraphics[width=0.31\textwidth]{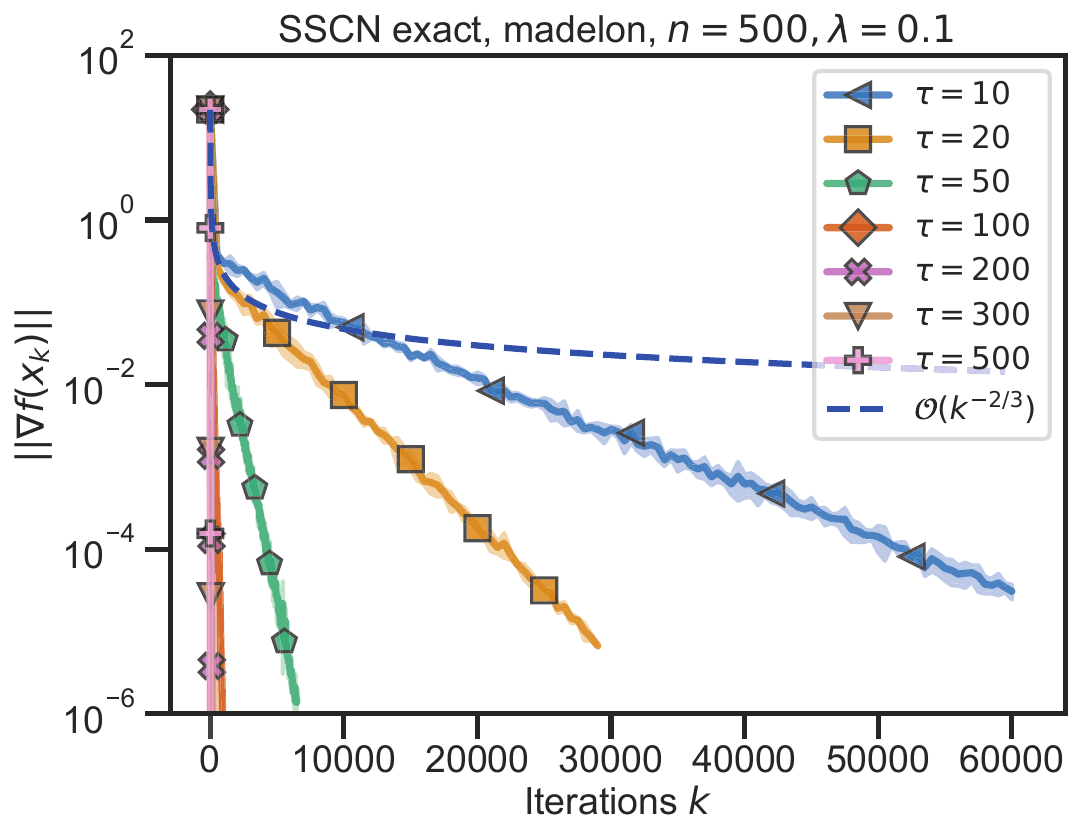}
        \includegraphics[width=0.31\textwidth]{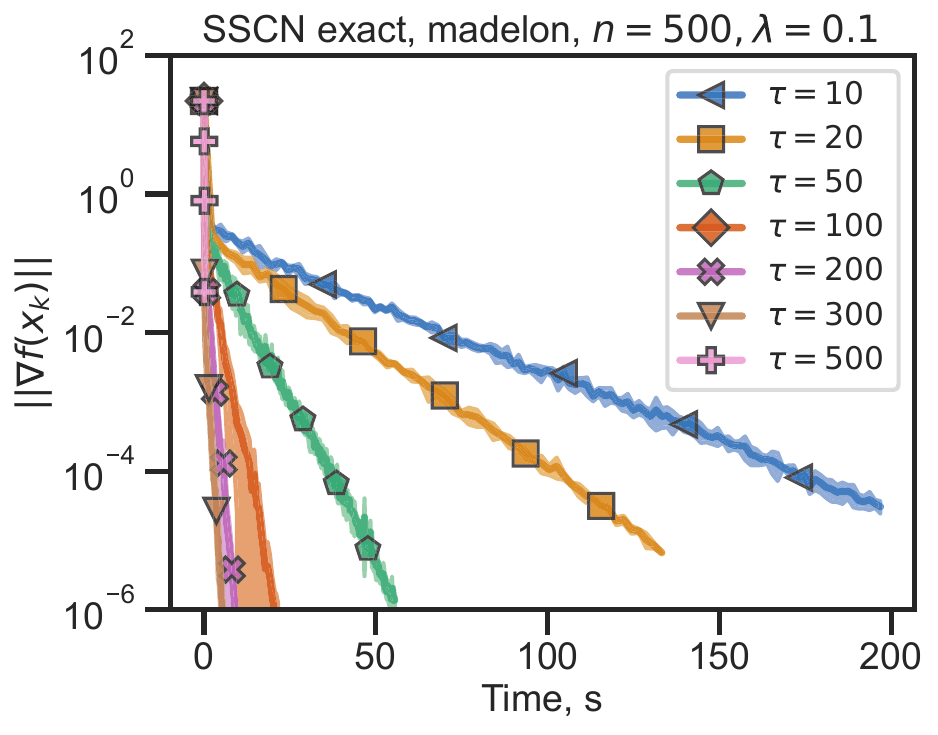}
        \includegraphics[width=0.31\textwidth]{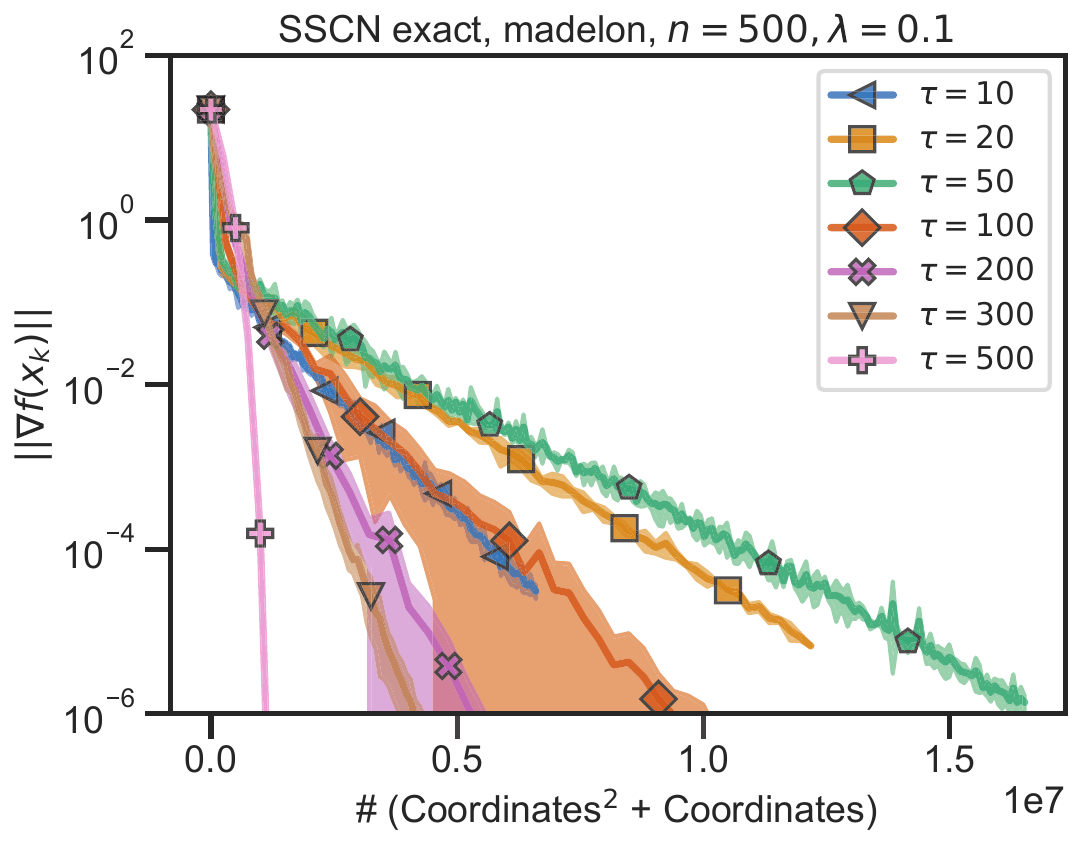} \\

        \includegraphics[width=0.31\linewidth]{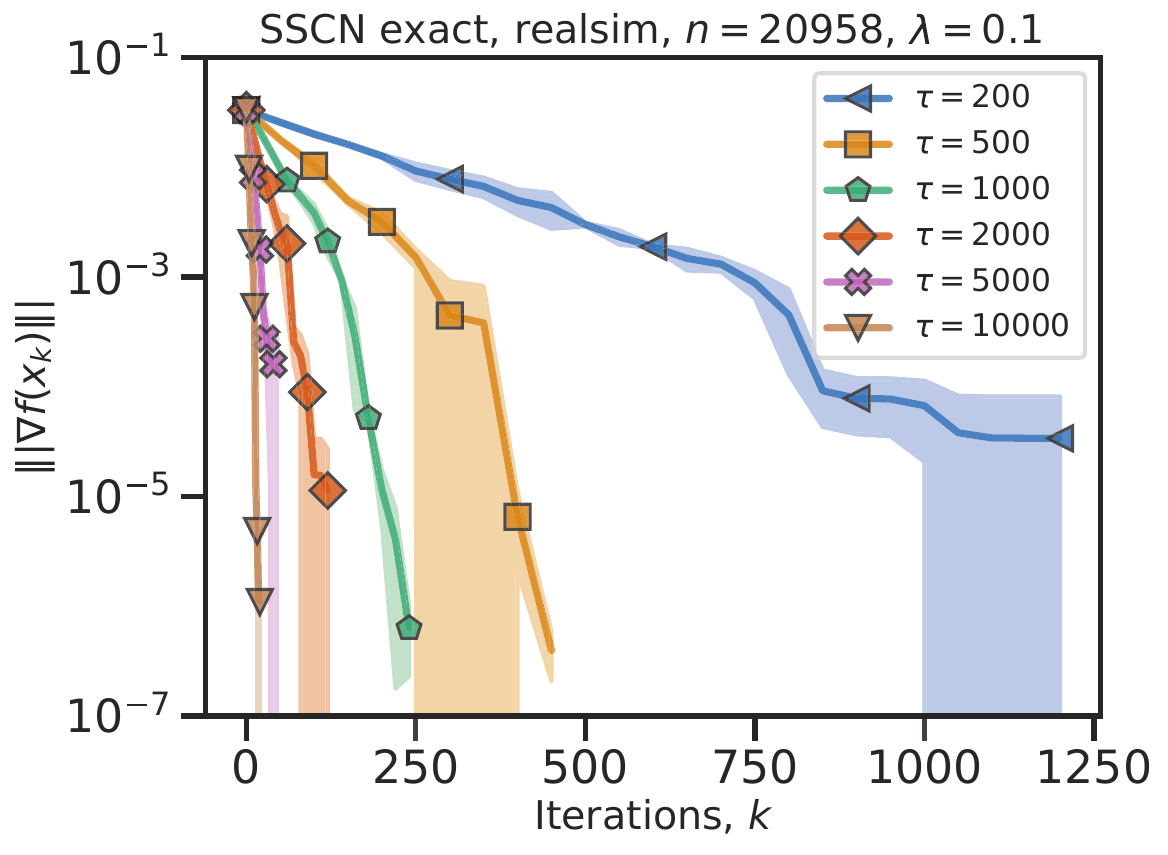}

        \includegraphics[width=0.31\linewidth]{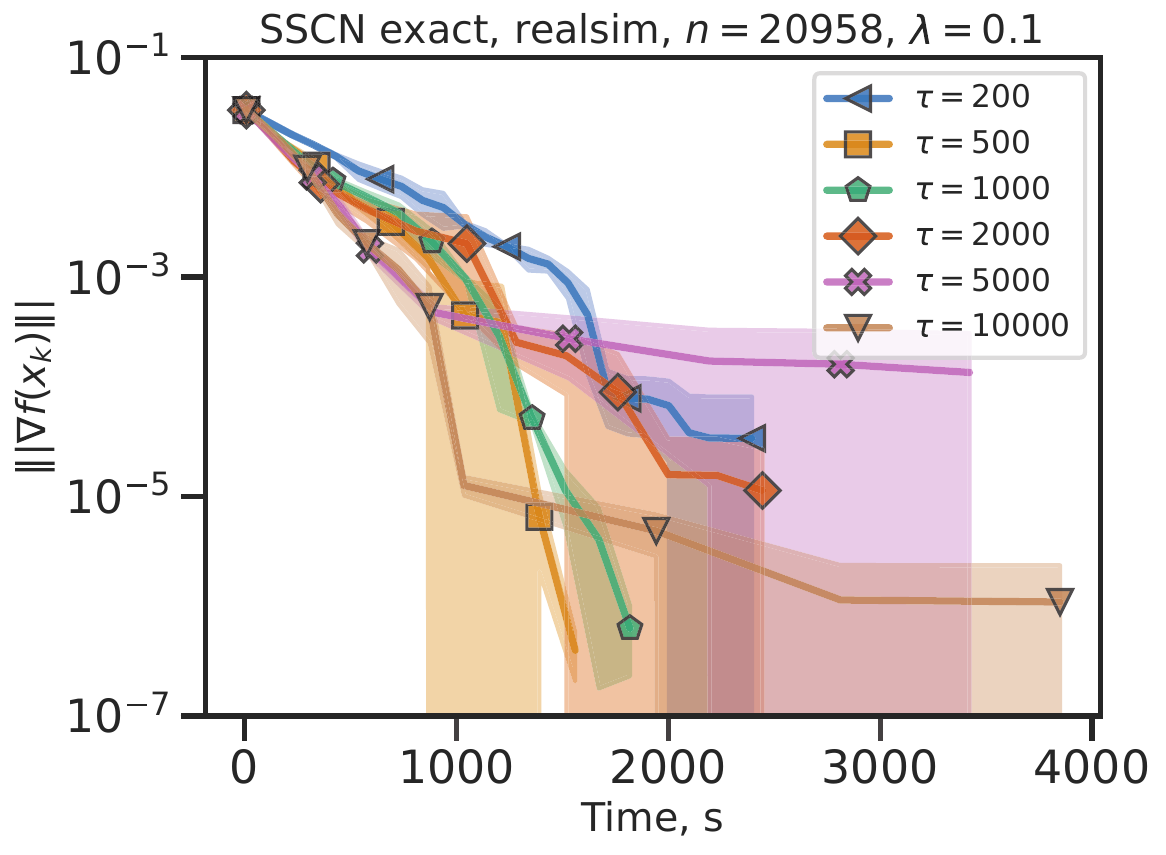}

        \includegraphics[width=0.31\linewidth]{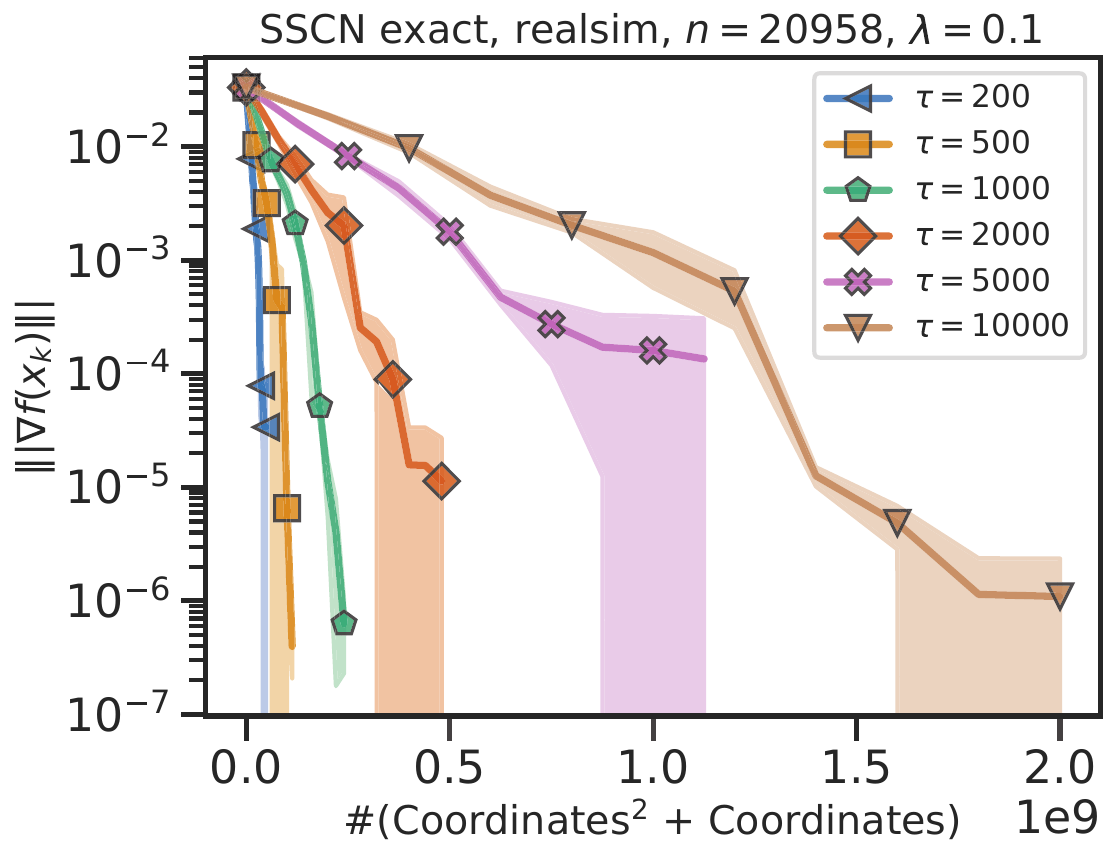}

    \end{tabular}
    \caption{Convergence of different constant coordinate schedules measured w.r.t. iterations (first column), time (second column) and \# (Coordinates$^2$ + Coordinates) evaluated (third column) averaged over three runs for logistic regression with non-convex regularization with $\lambda = 0.1$ for two datasets. First row: \textit{duke}, second row: \textit{madelon}, third row: \textit{realsim}.}
    \label{fig:logistic_regression_nonconv_convergence_duke_madelon}
\end{figure*}

\clearpage

\subsection{Constant vs. exponential schedule}
\label{subsec:constant_vs_exponential_schedule}

As discussed earlier, the adaptive sampling scheme presented in Section~\ref{sec:adaptive_sampling} suggests using an exponential schedule to sample coordinates. We compare this schedule to a constant schedule in Figure~\ref{fig:logistic_regression_exp_vs_const_Schedule}. We observe that for the two high-dimensional datasets \textit{gisette} and \textit{duke}, \textbf{the best constant schedule and the best exponential schedule perform on par both in terms of time and \# coordinates evaluated}. 
We conclude that in some cases a simple constant schedule can already perform sufficiently well. However, an exponential schedule might still be more appropriate if one needs a high-accuracy solution.

\begin{figure*}[h!]
    \centering
    \begin{tabular}{cc}
        \includegraphics[width=0.31\textwidth]{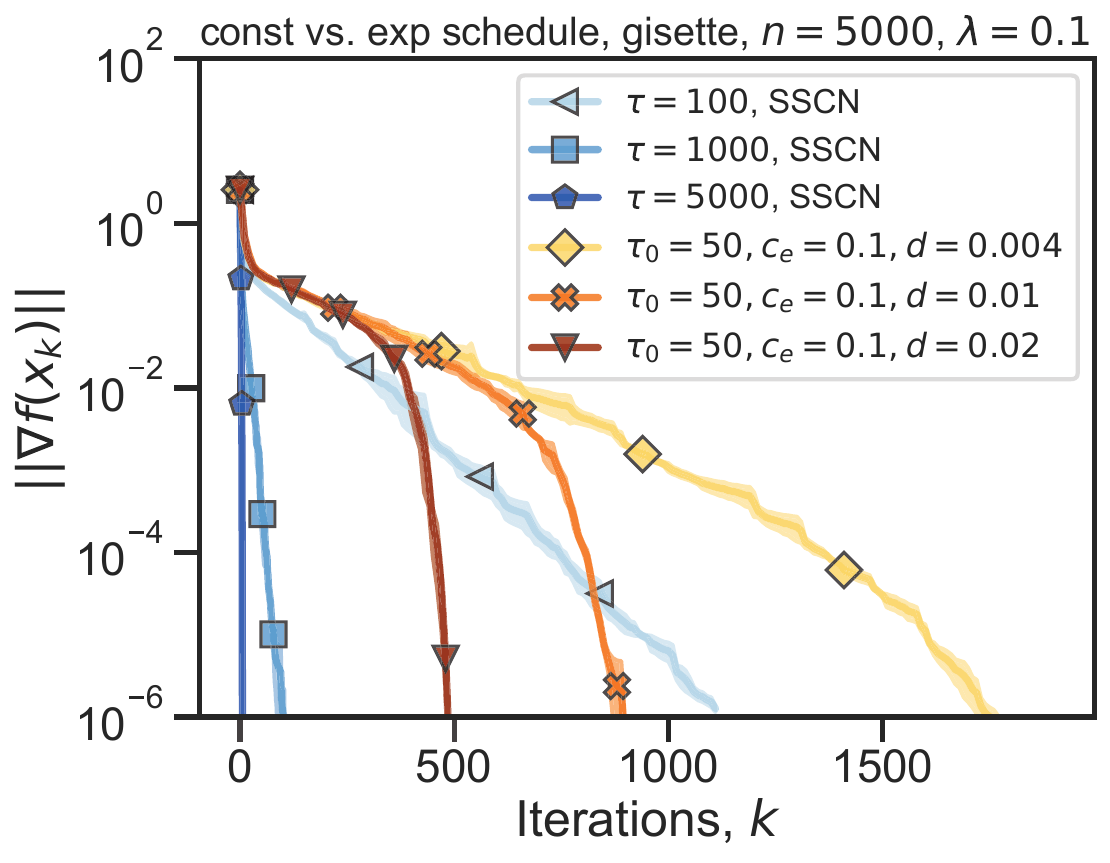}
        \includegraphics[width=0.31\textwidth]{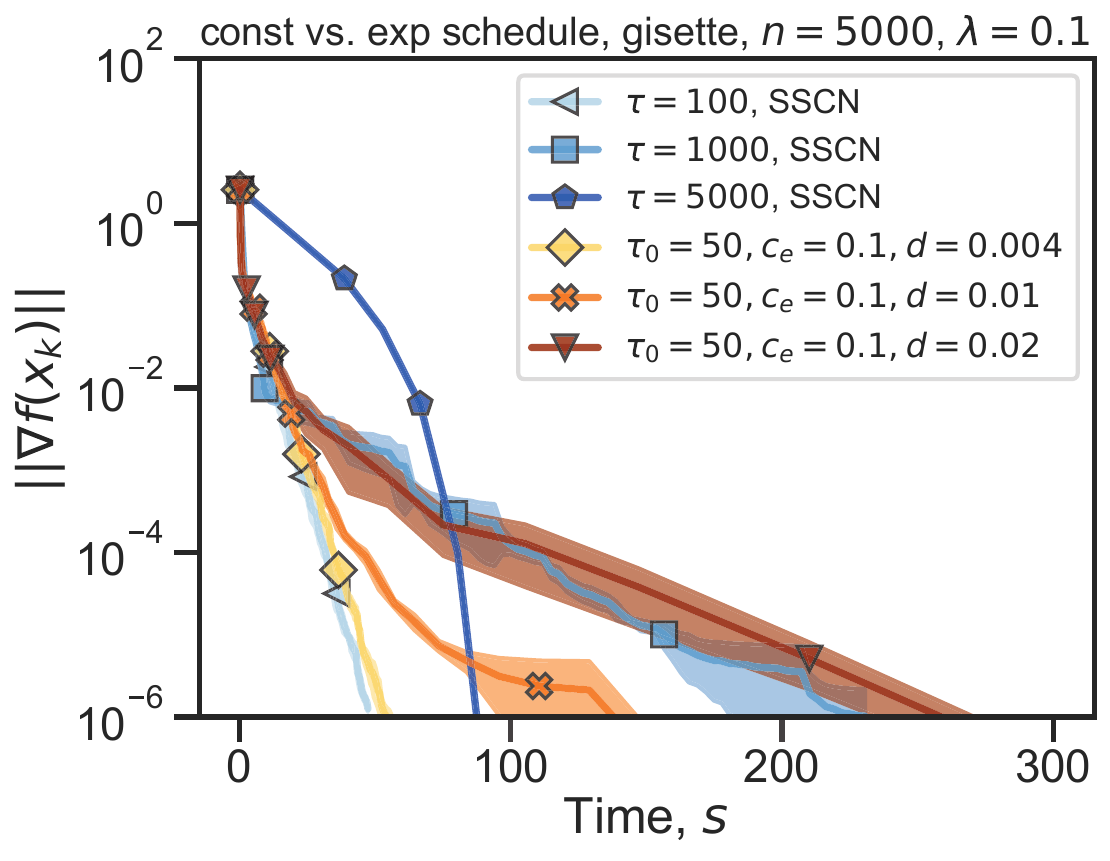}
        \includegraphics[width=0.31\textwidth]{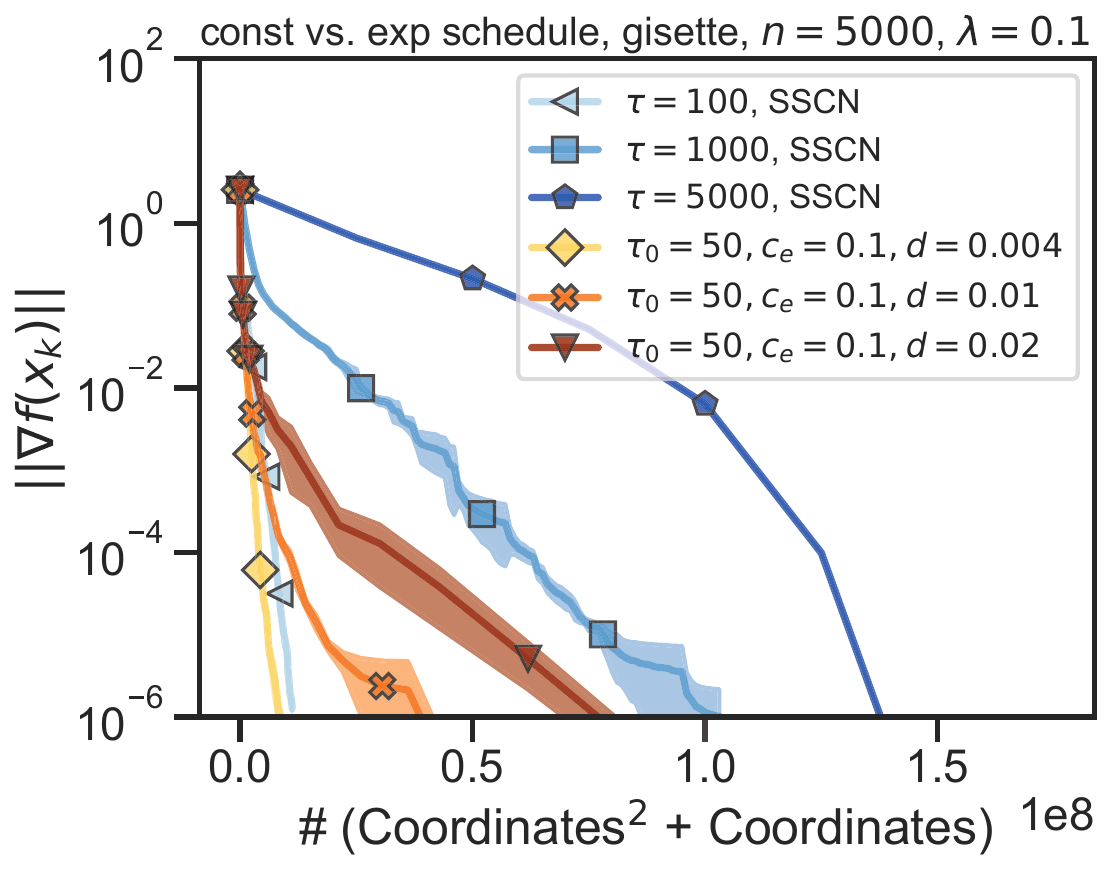} \\
        \includegraphics[width=0.31\linewidth]{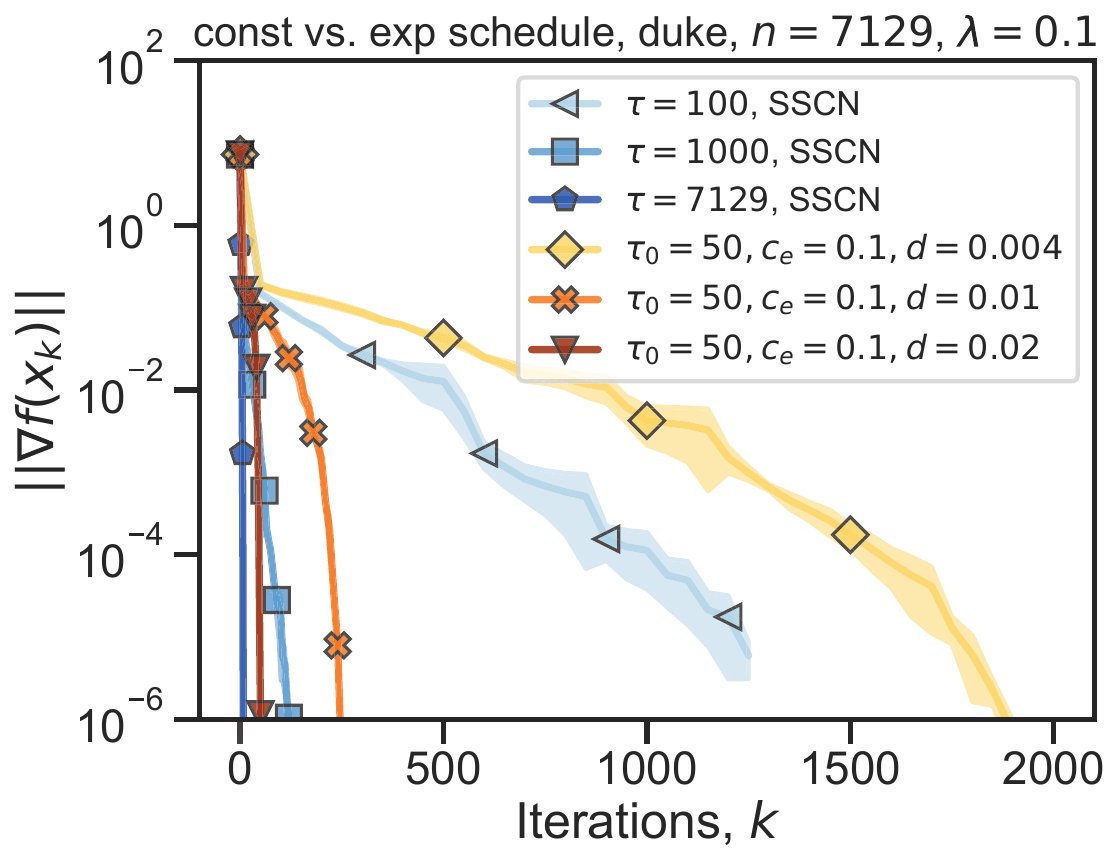}
        \includegraphics[width=0.31\linewidth]{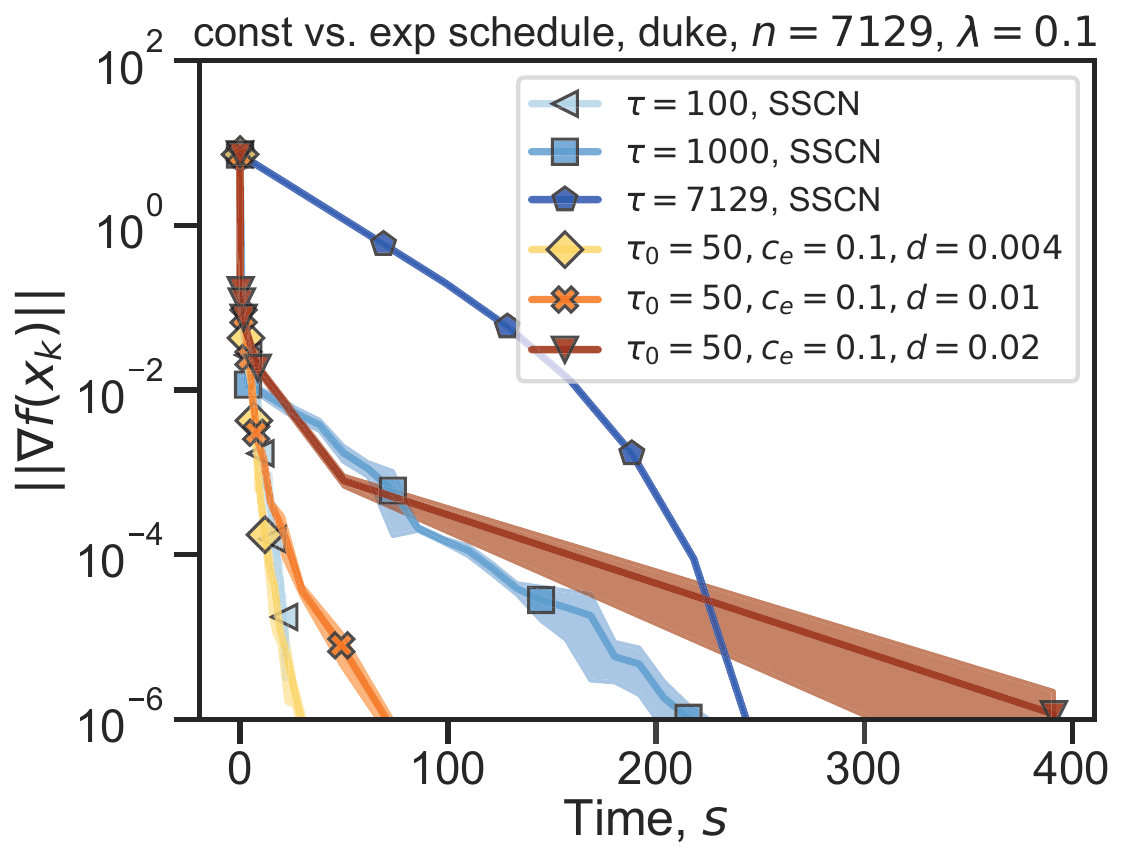}
        \includegraphics[width=0.31\linewidth]{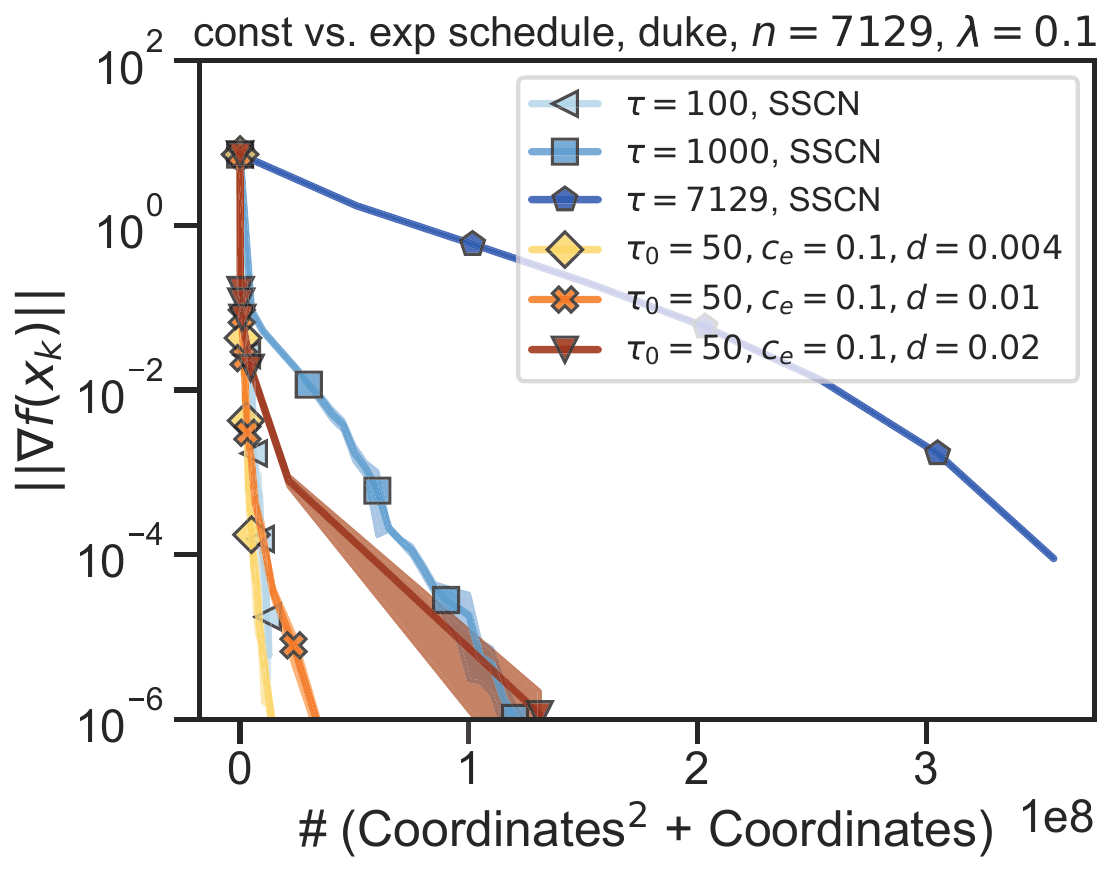}\\
        
        \includegraphics[width=0.31\linewidth]{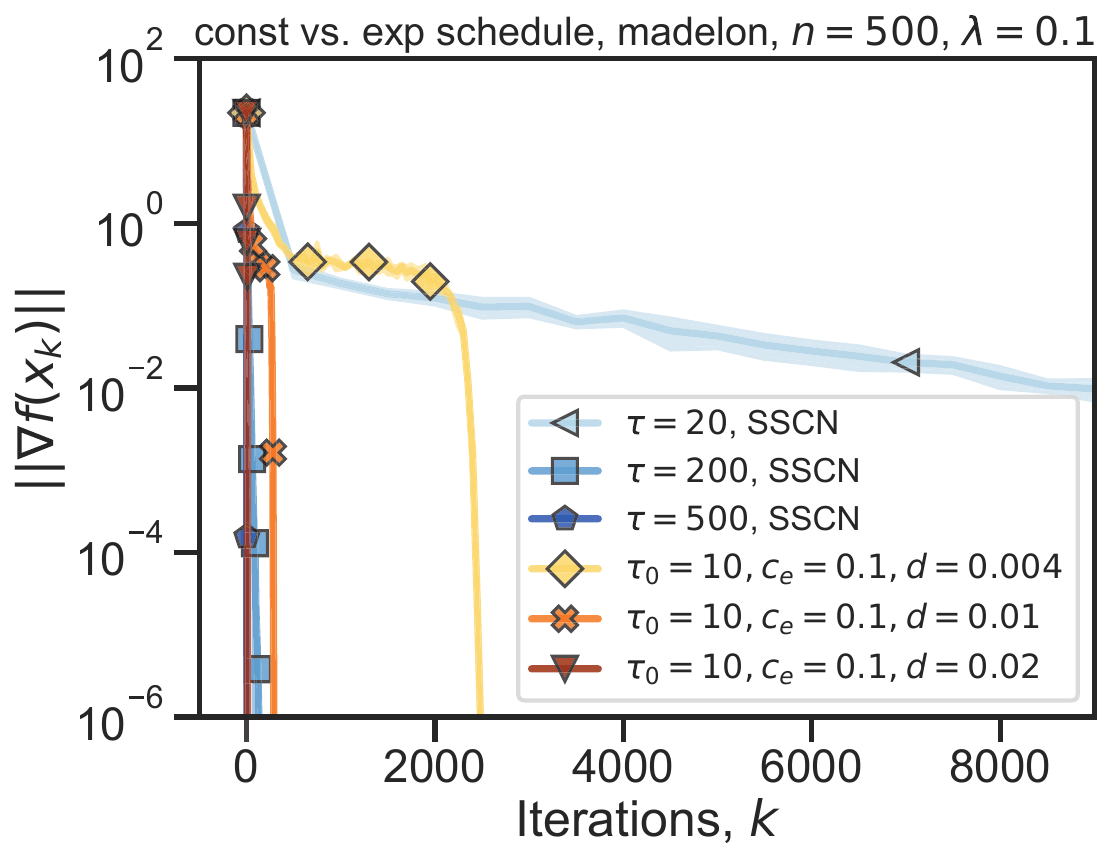}
        \includegraphics[width=0.31\linewidth]{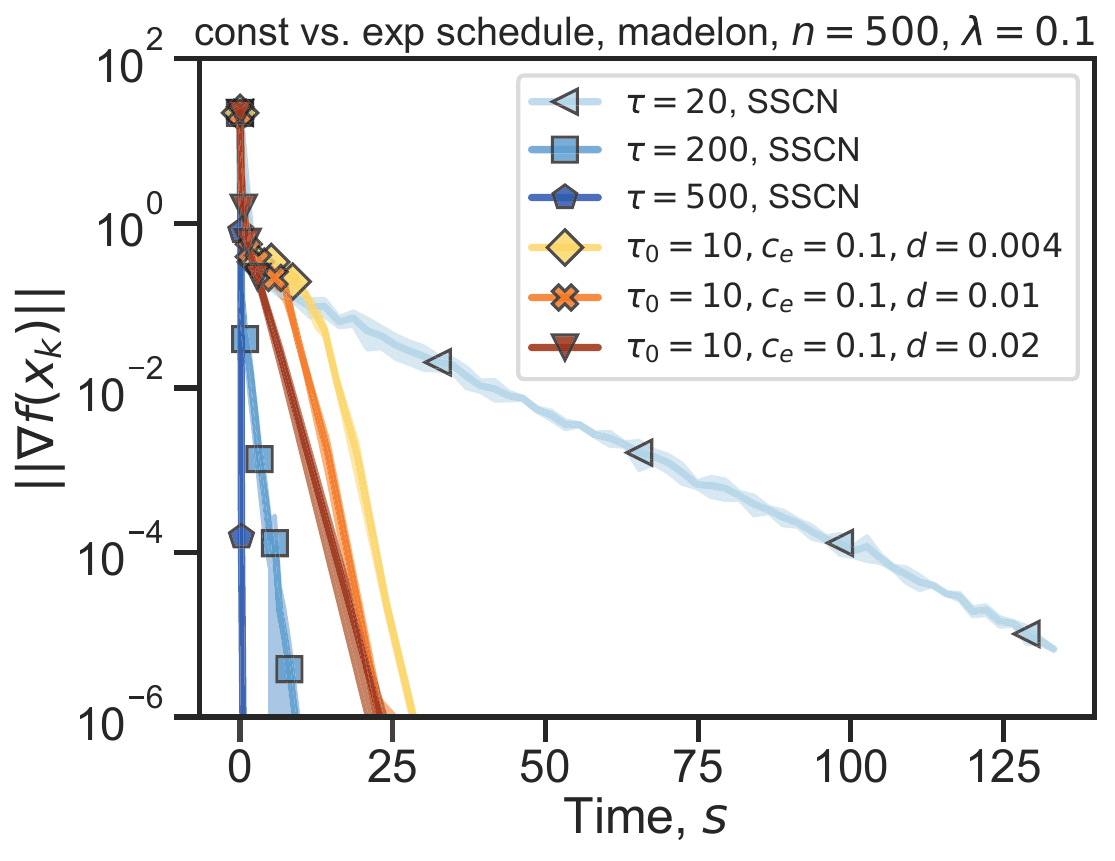}
        \includegraphics[width=0.31\linewidth]{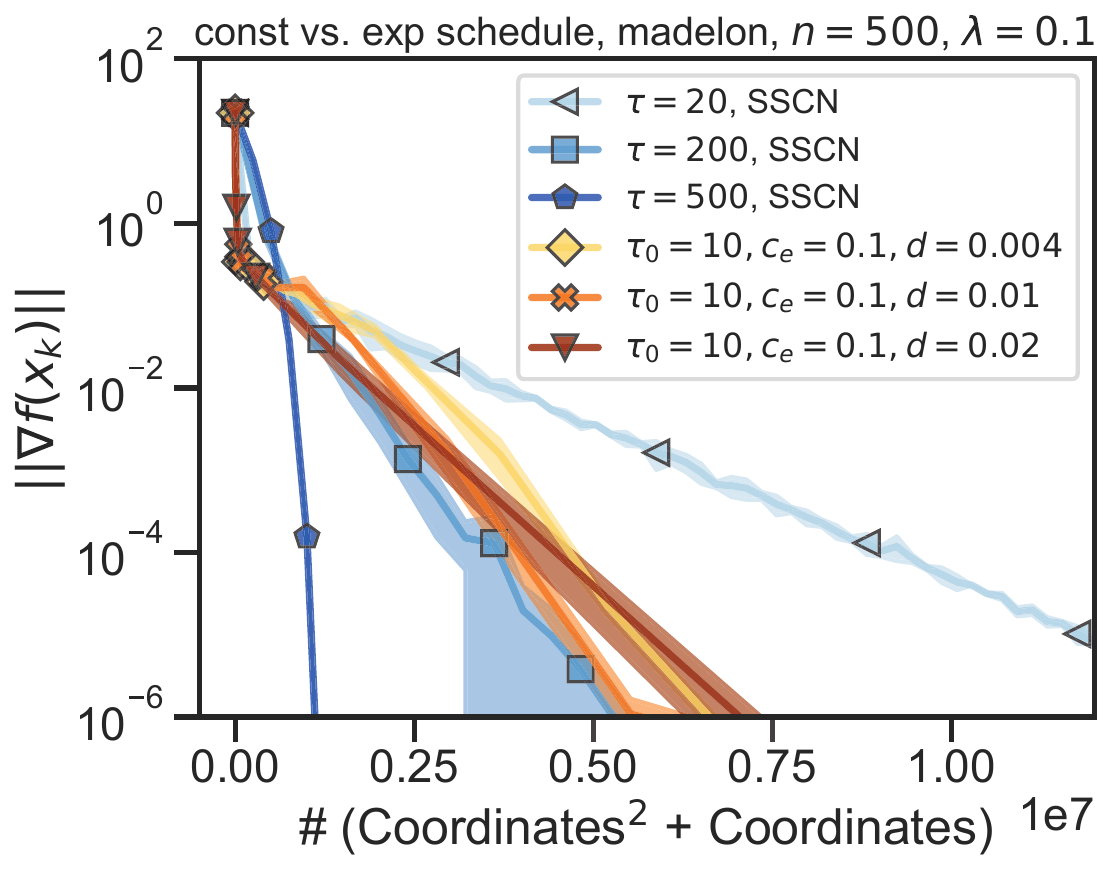}
    \end{tabular}
    \caption{Comparison of constant vs. exponential schedules $\tau(S_k) = \tau_0 + c_e \exp(dk)$ for different parameters w.r.t. iterations (first column) and time (second column) and \# (Coordinates$^2$ + Coordinates) evaluated (third column) averaged over three runs for logistic regression with non-convex regularization with $\lambda = 0.1$ for the \textit{gisette}, \textit{duke} and \textit{madelon} dataset.}
    \label{fig:logistic_regression_exp_vs_const_Schedule}
\end{figure*}

\subsection{Comparison to other methods for more datasets}

We also compared SSCN to CD and RS-RNM \citep{fuji2022randomized} on logistic regression with non-convex regularization on two other datasets, namely the \textit{duke} dataset and the \textit{realsim} dataset. The results can be found in \cref{fig:logistic_regression_nonconv_CD_vs_SSCN_duke}. 
We observe that CD converges significantly faster in time to a first-order stationary point for the \textit{duke} dataset and also slightly faster on the \textit{realsim} dataset, while RS-RNM is significantly slower due to the high per-iteration cost.\\
We trace back this differences in performance to the complexity of the loss landscape. In a separate run, we calculated the condition number of the Hessian of the loss function over the full space as an estimate for complexity of the landscape. The results can be found in \cref{fig:logistic_regression_nonconv_condnum}. We note that the loss is extremely ill-conditioned on the \textit{madelon} dataset, which explains while CD fails to converge while SSCN is not affected. In better conditioned problems such as \textit{duke} or \textit{realsim}, the higher per-iteration cost of SSCN leads to an overall slower convergence. 
In general, we expect more gains from using SSCN over CD on more complex loss landscape.

\begin{figure*}[h!]
    \centering
    \begin{tabular}{cc}
    
        \includegraphics[width=0.29\linewidth]{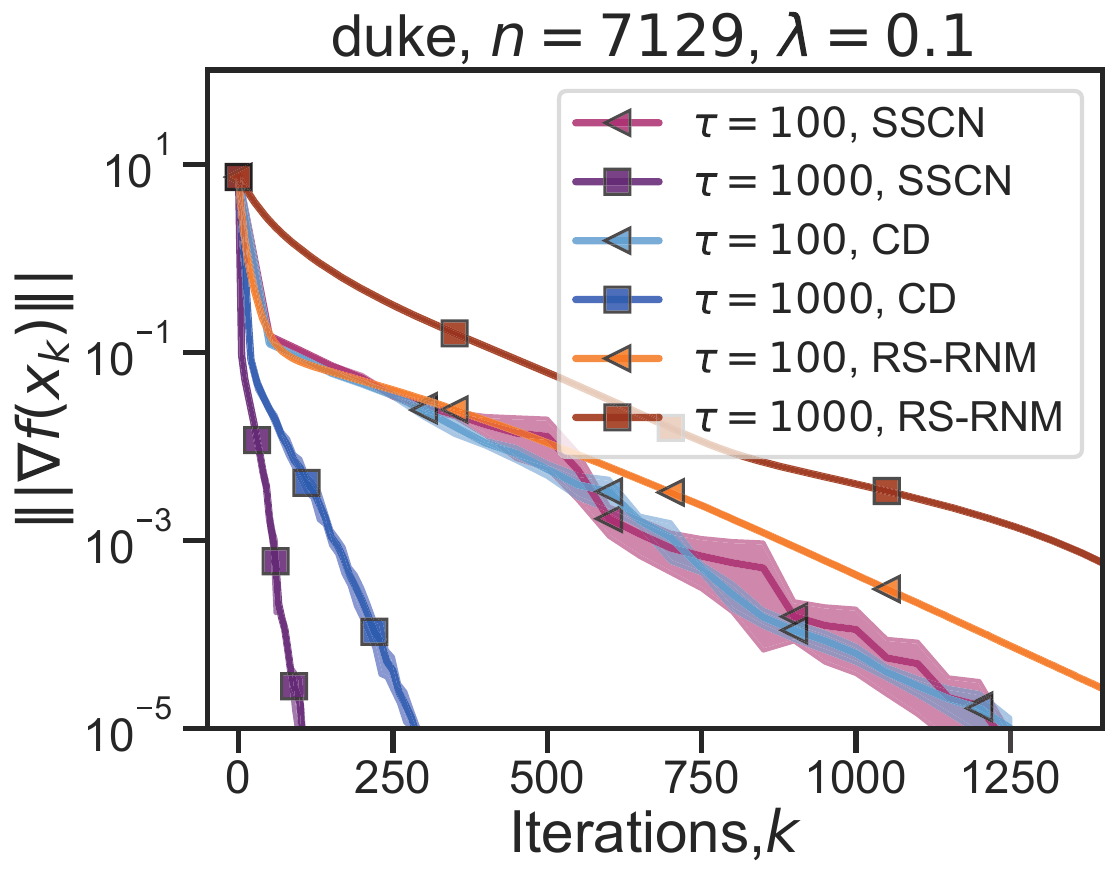}
        \includegraphics[width=0.29\linewidth]{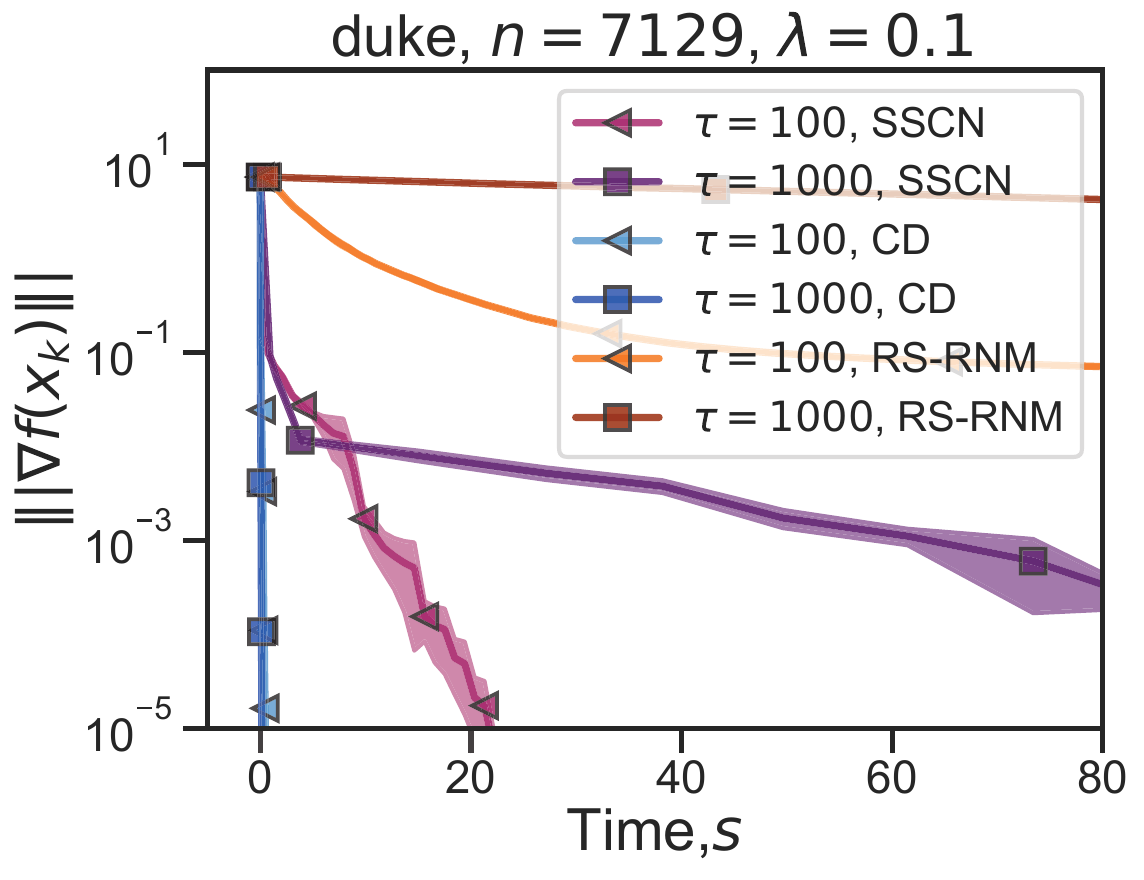}
        \includegraphics[width=0.31\linewidth]{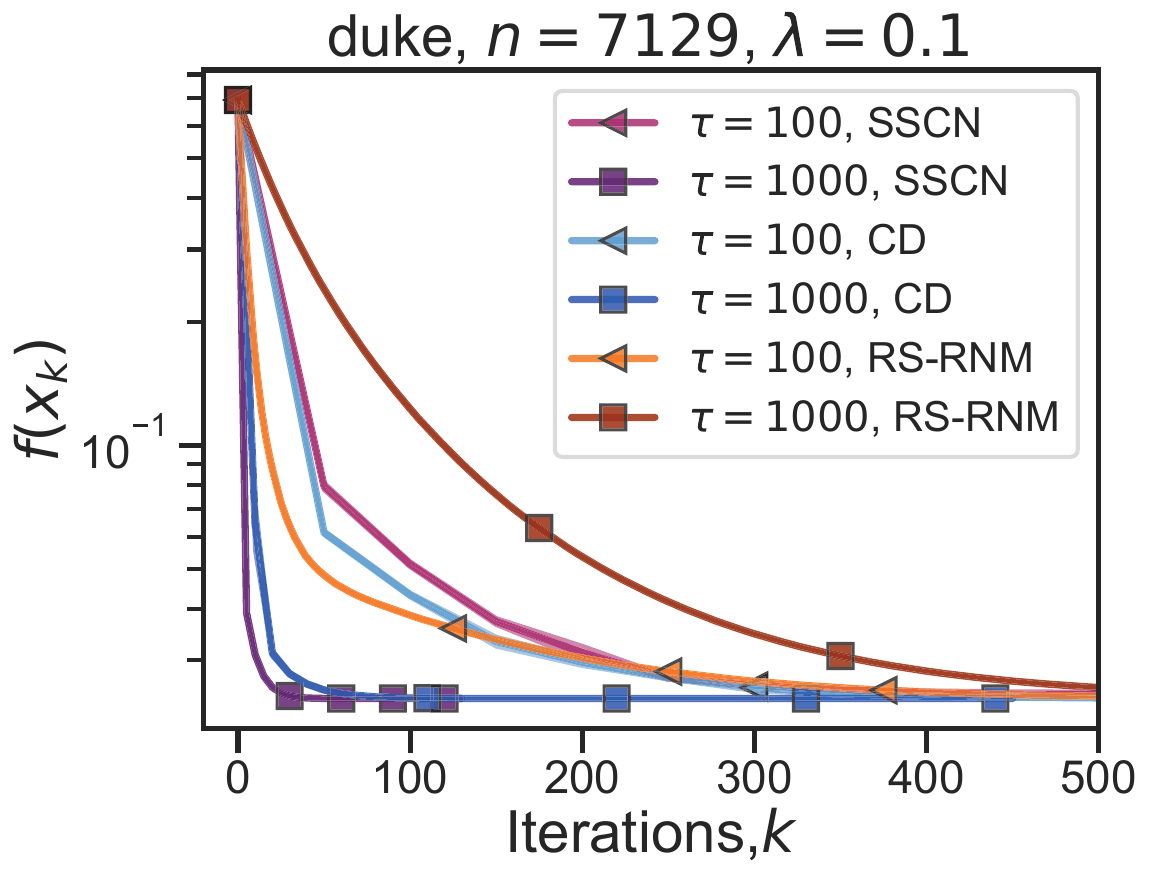}\\
        \includegraphics[width=0.29\linewidth]{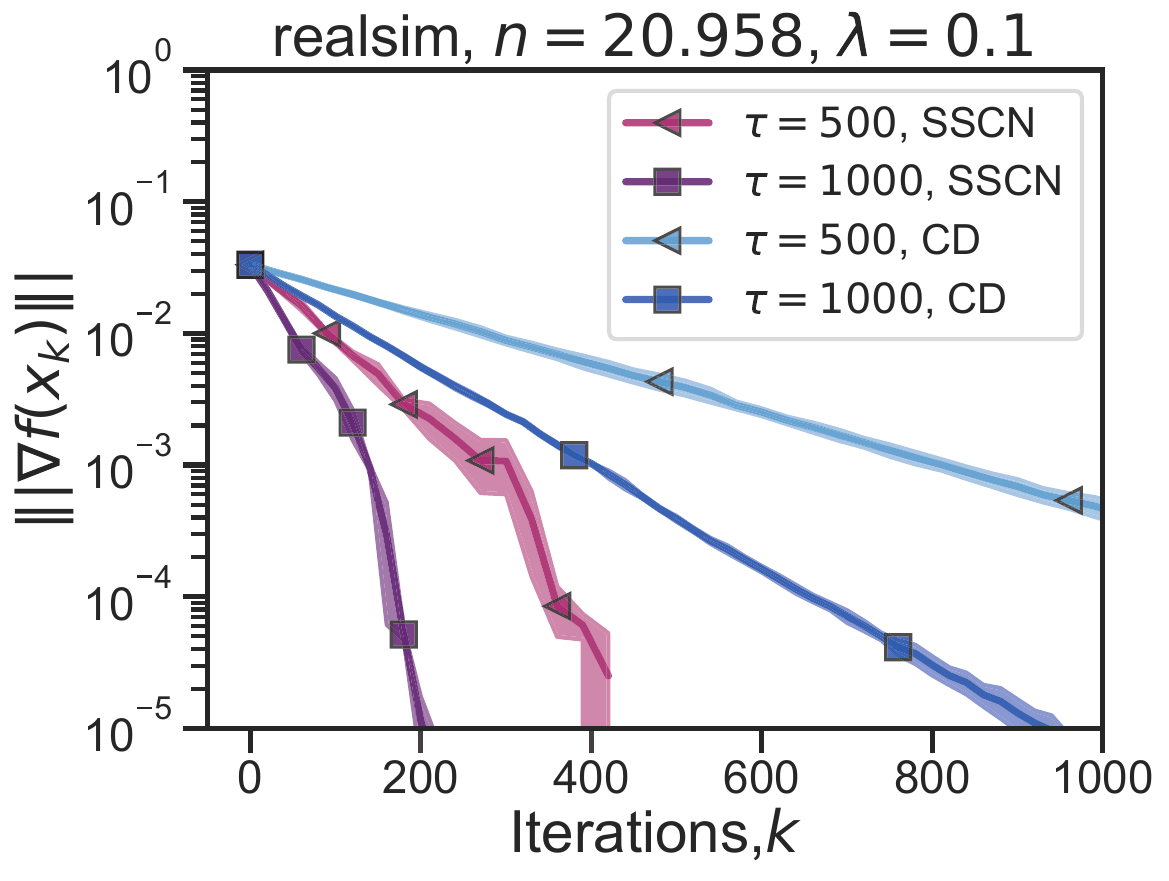}
        \includegraphics[width=0.29\linewidth]{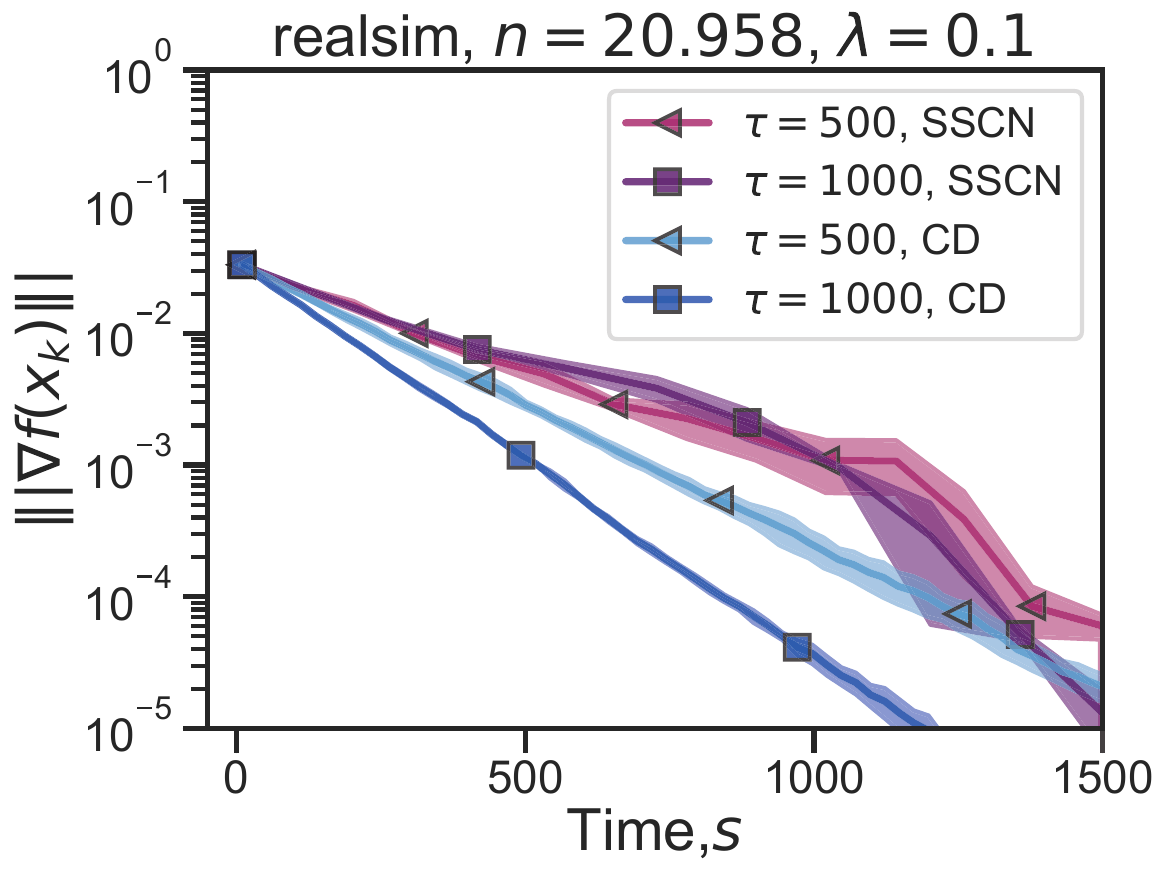}
        \includegraphics[width=0.31\linewidth]{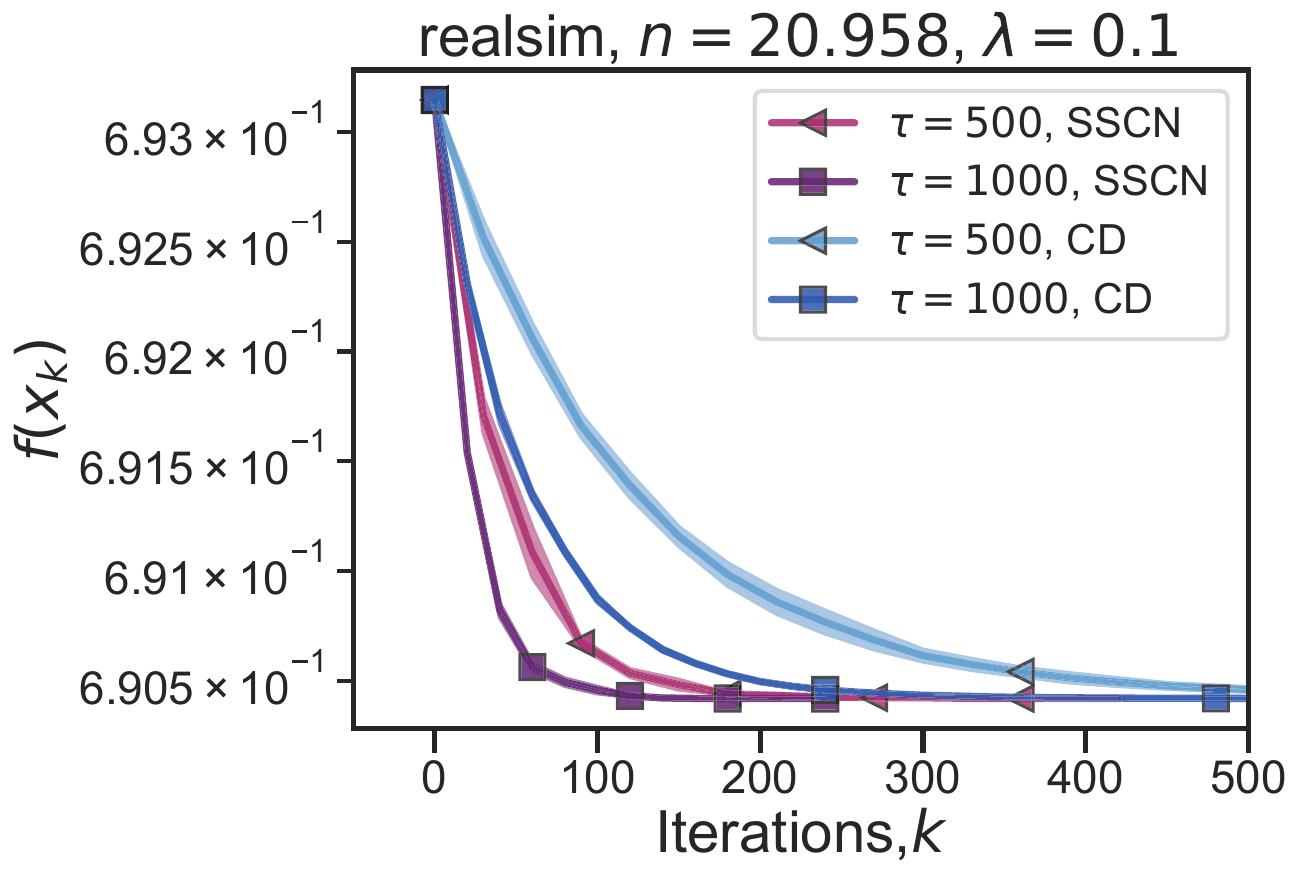}

    \end{tabular}
    \caption{Comparison of CD, SSCN and RS-RNM \citep{fuji2022randomized} for different constant coordinate schedules measured the gradient norm w.r.t. iterations (first column) and time (second column) and the loss w.r.t iterations (third column) averaged over three runs for logistic regression with non-convex regularization with $\lambda = 0.1$ for the \textit{duke} dataset (first row) and \textit{realsim} dataset (second row). Note that RS-RNM exceeded the compute budget of 3000 seconds before converging and is thus not shown.}
    \label{fig:logistic_regression_nonconv_CD_vs_SSCN_duke}
    
\end{figure*}

\begin{figure*}[h!]
    \centering
    \begin{tabular}{cc}

        \includegraphics[width=0.33\linewidth]{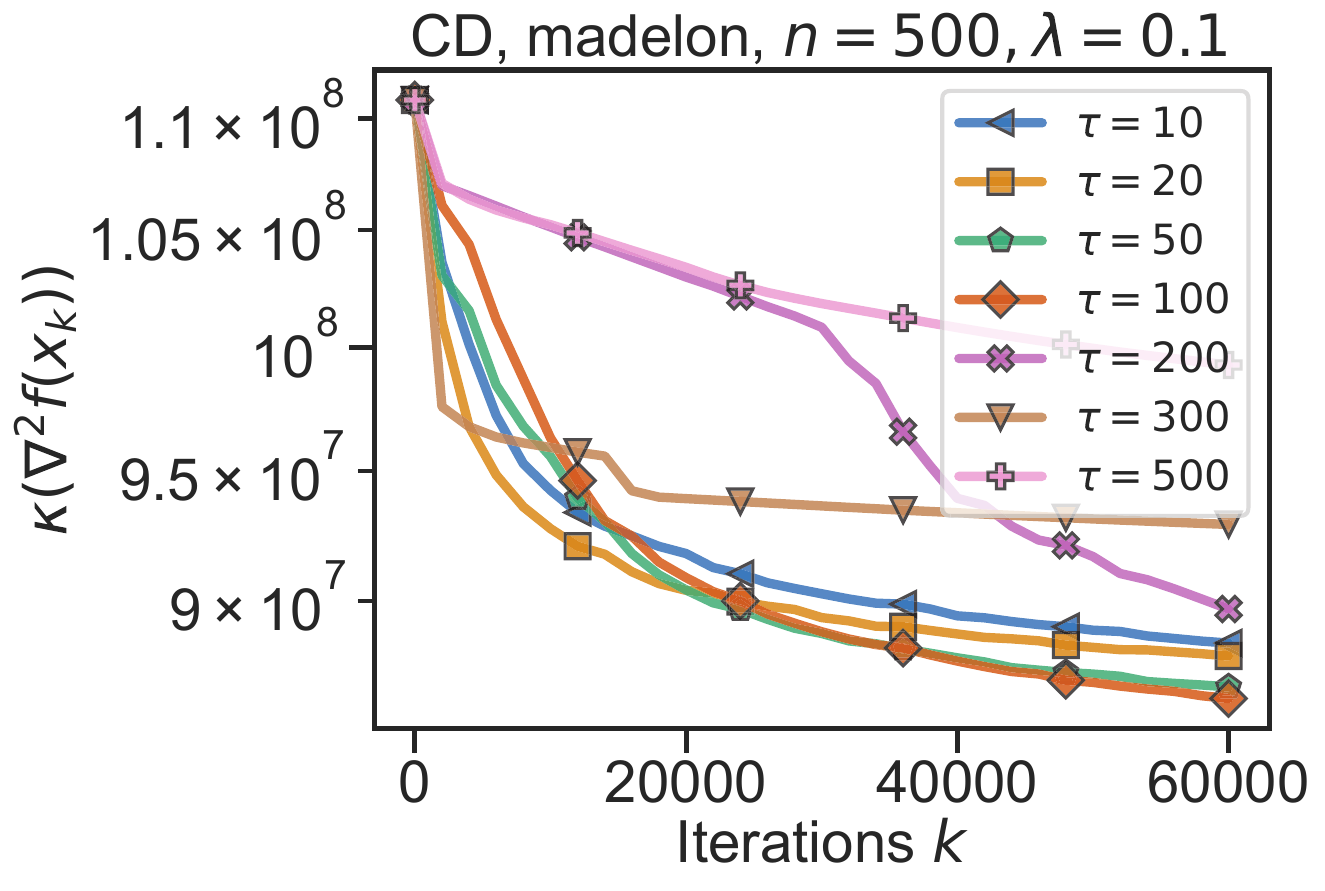}
        \includegraphics[width=0.33\linewidth]{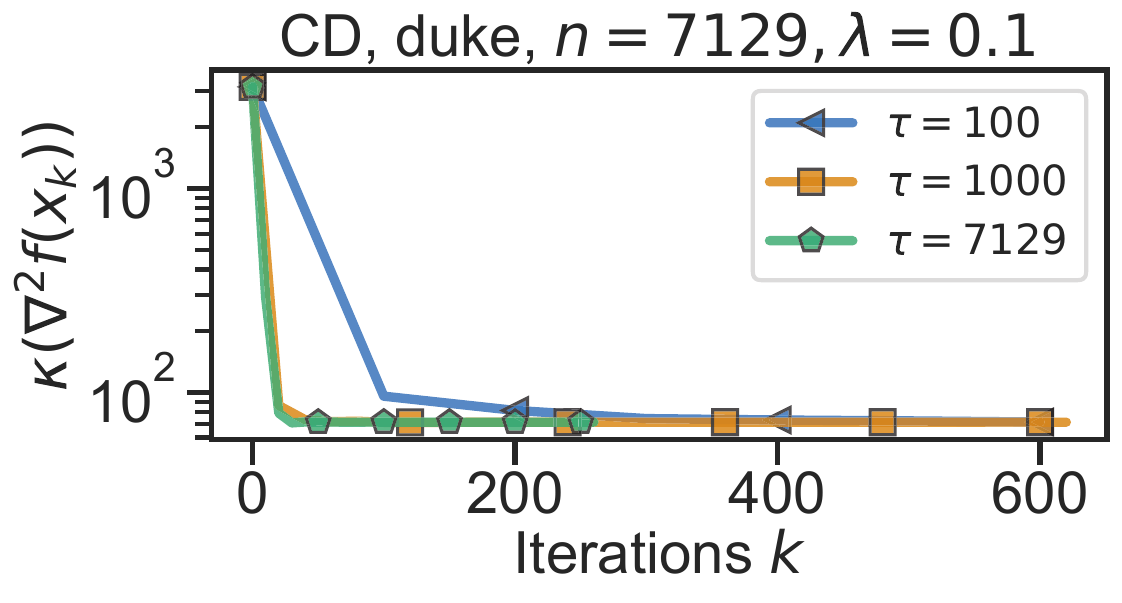}
        \includegraphics[width=0.33\linewidth]{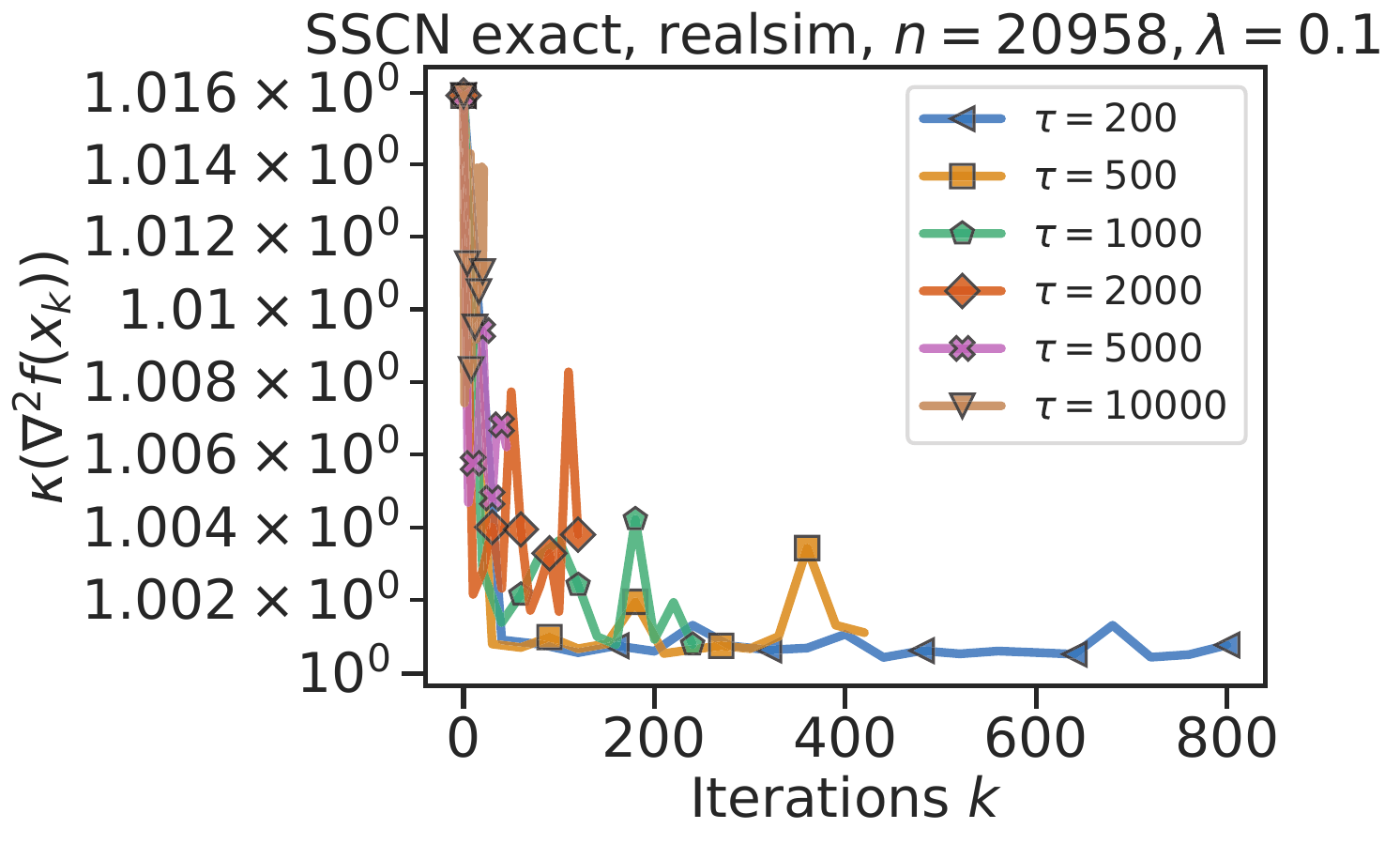}
    \end{tabular}
    \caption{Condition number of full space Hessian of the loss function for the \textit{madelon} (left), \textit{duke} (center) and \textit{realsim} (right) dataset throughout optimization. Note that the condition number on \textit{madelon} is 5 orders of magnitudes larger than the condition number for \textit{duke} and 8 orders larger than the the condition number for the \textit{realsim} dataset.}
    \label{fig:logistic_regression_nonconv_condnum}
    
\end{figure*}

\subsection{Convergence to an $\epsilon$-ball}

We validate our prediction from Theorem 6 which guarantees the convergence to a ball whose radius is determined by $\epsilon_1$ and $\epsilon_2$, which in turn depends on the number of sampled coordinates $\tau(S)$. 
The larger $\tau(S)$, the smaller the radius of the ball, as stated in Lemma \ref{lemma:concentration_bounds_gradient_norm} and Lemma \ref{lemma:concentration_bounds_hess_vec}. In Figure \ref{fig:convergence_to_epsilon_ball} we can see that in the setting of binary logistic regression with non-convex regularizer $\lambda \cdot \sum_{i=1}^n \x_i^2/(1+ \x_i^2)$ indeed the gradient norm to which each constant coordinate schedule converges to decreases with increasing number of sampled coordinates.

\begin{figure*}[ht]
    \centering
    \begin{tabular}{cc}
        \includegraphics[width=0.3\linewidth]{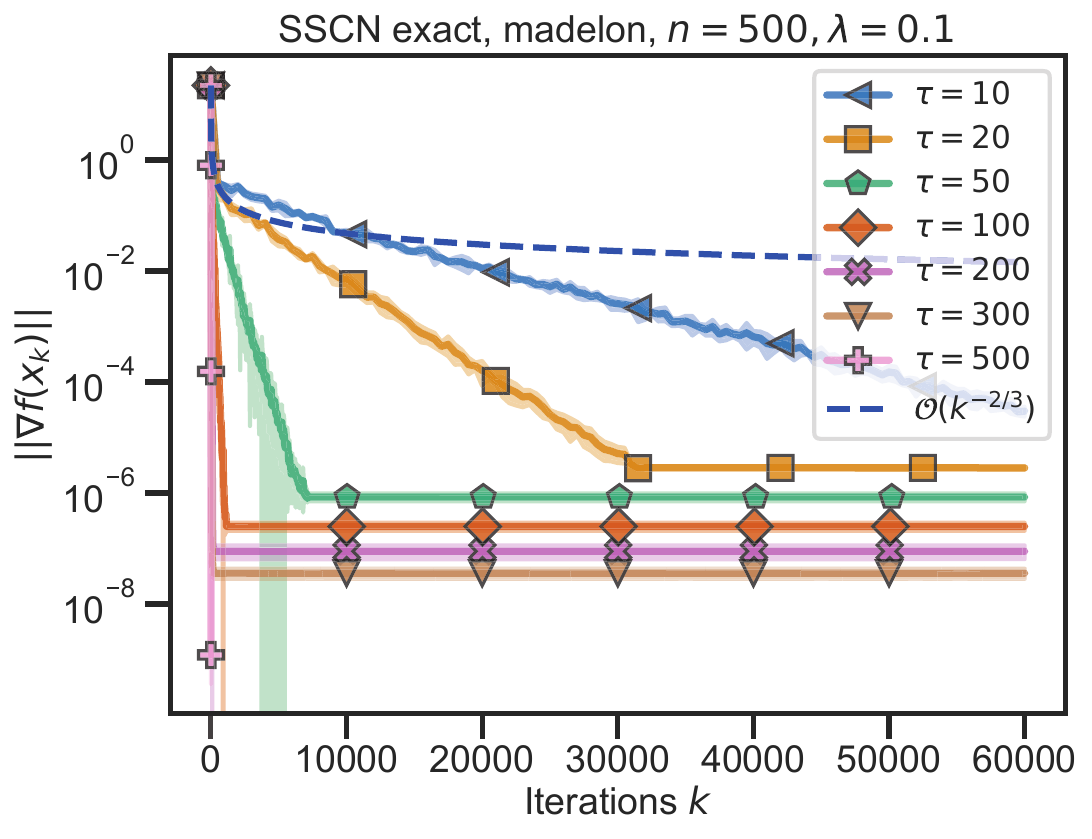}

        \includegraphics[width=0.3\linewidth]{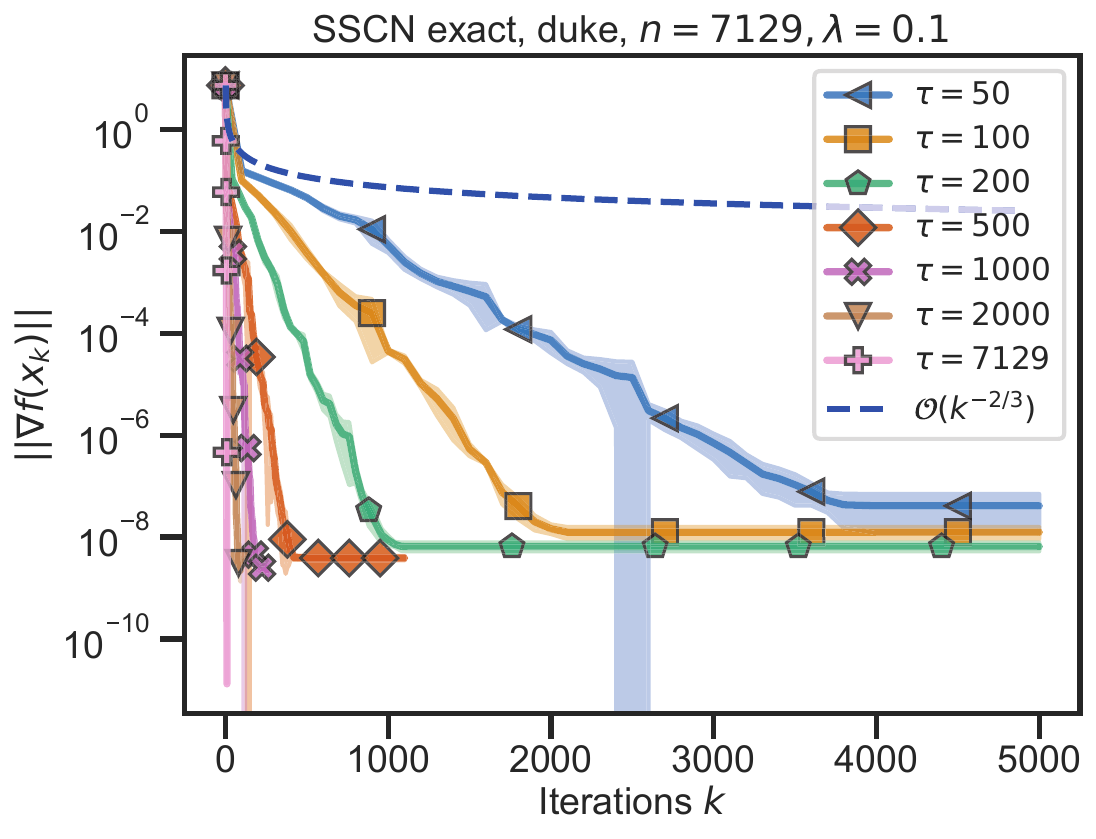}

        \includegraphics[width=0.3\linewidth]{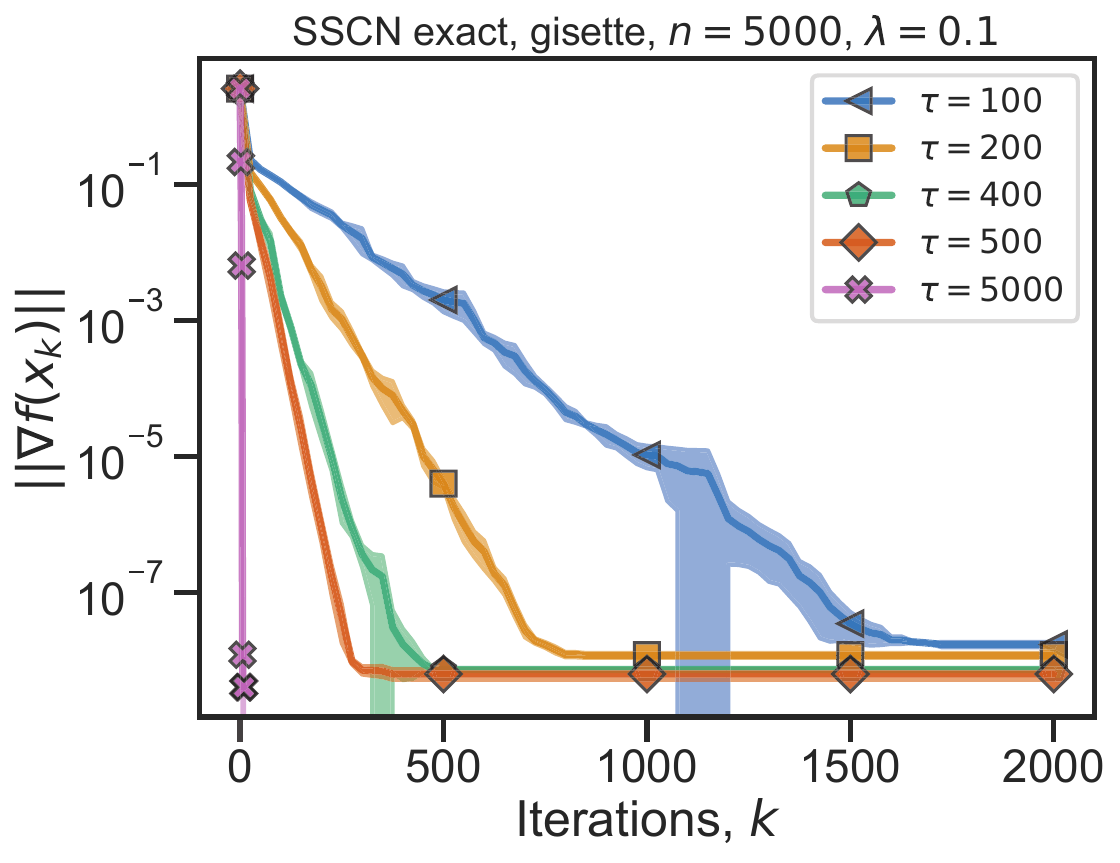}

    \end{tabular}
    \caption{Convergence of different constant coordinate schedules measured w.r.t. iterations averaged over three runs for
logistic regression with non-convex regularization with $\lambda = 0.1$ for three different dataset. Plots correspond to the same setting as~\cref{fig:logistic_regression_nonconv_convergence_gisette} where the limits of the y-axis are chosen to be larger.}
    \label{fig:convergence_to_epsilon_ball}
\end{figure*}

\subsection{Adaptive schedule on $\textit{gisette}$ dataset}

We also verify the proposed adaptive schedule from Eq. \eqref{eq:adaptive_tau} on the $\textit{gisette}$ dataset, where we replaced the full gradient norm and Hessian Frobenius norm by estimates $\|\nabla f(\x_k)_{\text{est}} \|$ and $\| \nabla^2 f(\x_k)_{\text{est}} \|_2 $, which are estimated as exponential moving averages:

\begin{align}
    \|\nabla f(\x_{k+1})_{\text{est}} \| &= \alpha \| \nabla f(\x_{k+1})_{[S]} \| + (1-\alpha) \| \nabla f(\x_k)_{\text{est}} \| \\
    \|\nabla^2 f(\x_{k+1})_{\text{est}} \| &= \alpha \| \nabla^2 f(\x_{k+1})_{[S]} \| + (1-\alpha) \| \nabla^2 f(\x_k)_{\text{est}} \|,
\end{align}

where the weighting factor was chosen as $\alpha=0.2$.
The proposed schedule $\tau(S_k)_{\text{prop}}$ is further smoothed through an exponential moving average $\tau(S_{k+1})=\beta \cdot \tau(S_{k+1})_{\text{prop}} + (1-\beta) \tau(S_k)$. As we can see the schedule is indeed close to an exponential schedule.

\begin{figure*}[htb]
    \centering
    \includegraphics[width=0.45\linewidth]{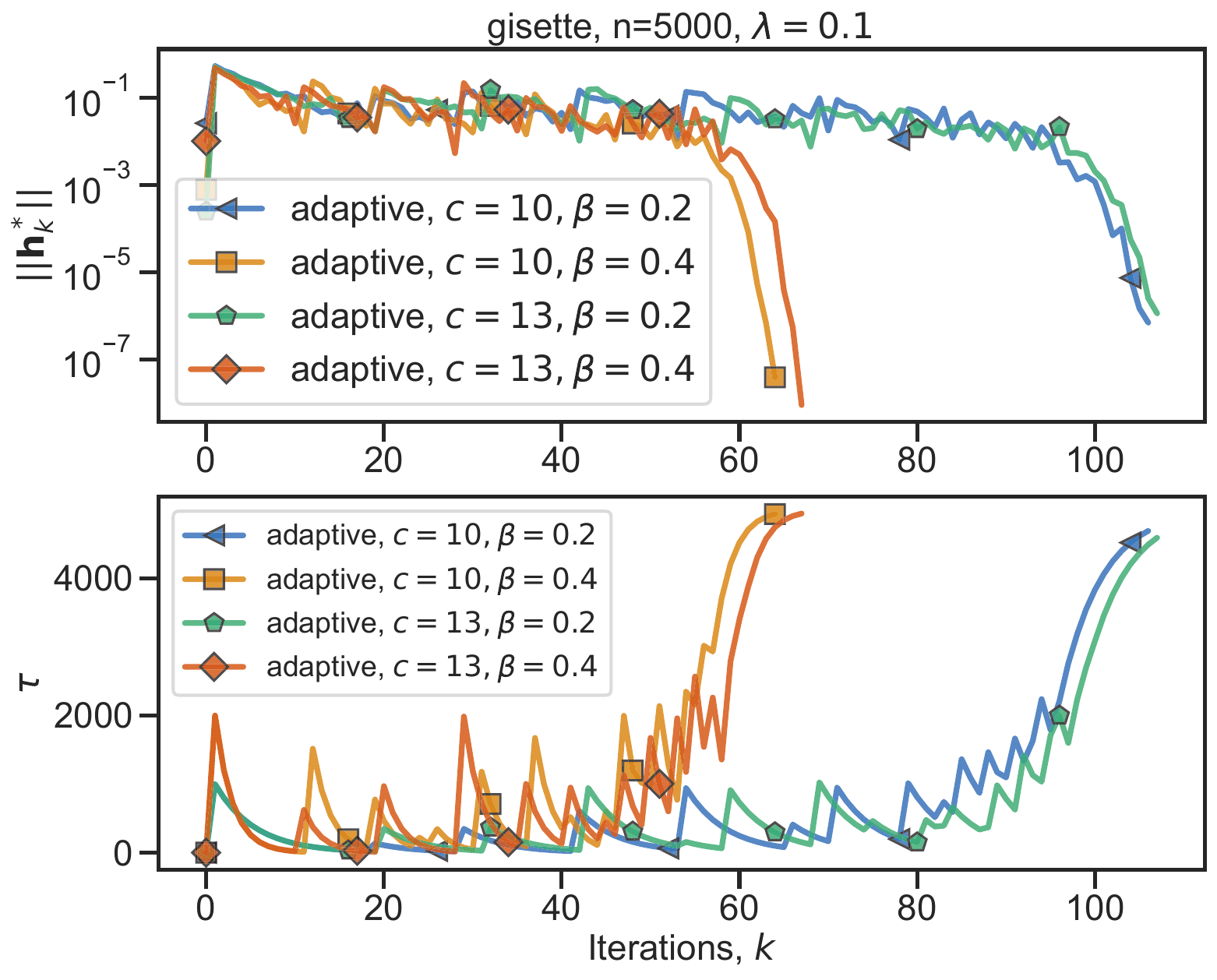}
    \caption{Evolution of $\|\h_k^*\|$ and $\tau(S_k)$ for the \textit{gisette} dataset for the adaptive coordinate schedule $\tau(S_k)$.}
\end{figure*}

\newpage

\subsection{Squared norm of the step  $\h_k$ for \textit{duke} dataset}

We also provide the plot of the squared norm of the step $\h_k$ for the \textit{duke} dataset in \cref{fig:logistic_regression_norm_h_k2_duke} below.
\begin{figure}[h]
    \centering
        \begin{tabular}{cc}
     \includegraphics[width=0.4\textwidth]{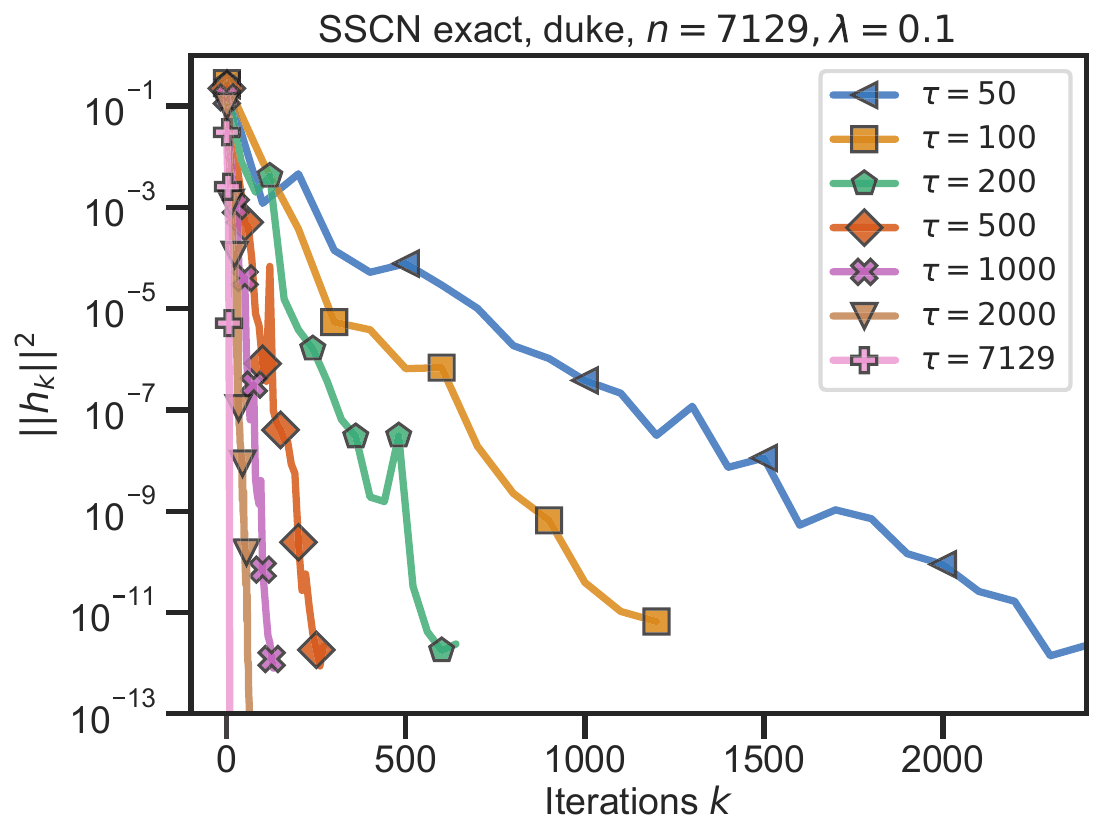}
        \end{tabular}
        
    \caption{Squared norm of the step  $\h_k$ for different constant coordinate schedules for logistic regression with non-convex regularization with $\lambda = 0.1$ for the \textit{duke} dataset.}
\label{fig:logistic_regression_norm_h_k2_duke}
\end{figure}

\newpage

\end{document}